\documentclass{article}

\PassOptionsToPackage{numbers,compress}{natbib}
\usepackage[preprint]{neurips_2026}

\usepackage[utf8]{inputenc} 
\usepackage[T1]{fontenc}    
\usepackage{hyperref}       
\usepackage{url}            
\usepackage{booktabs}       
\usepackage{amsfonts}       
\usepackage{nicefrac}       
\usepackage{microtype}      
\usepackage{xcolor}         
\usepackage{amsmath}
\usepackage{amsthm}
\usepackage{amssymb}
\usepackage{algorithm}
\usepackage{algorithmic}
\usepackage{graphicx}
\usepackage{subcaption}

\theoremstyle{plain}
\newtheorem{theorem}{Theorem}[section]
\newtheorem{lemma}[theorem]{Lemma}

\newtheorem{proposition}[theorem]{Proposition}

\theoremstyle{definition}
\newtheorem{definition}[theorem]{Definition}
\newtheorem{assumption}[theorem]{Assumption}

\theoremstyle{remark}
\newtheorem{remark}[theorem]{Remark}

\title{Active Learning for Conditional Generative Compressed Sensing}

\author{%
  Alexander DeLise \\ 
  Department of Scientific Computing \\
  Department of Mathematics \\
  Florida State University \\
  \texttt{ard22l@fsu.edu} \\
  \And
  Nick Dexter \\
  Department of Scientific Computing \\
  Florida State University \\
  \texttt{nick.dexter@fsu.edu}
}

\newcommand*{\snorm}[2]{\left\vert\!\left\vert\!\left\vert #1 \right\vert\!\right\vert\!\right\vert_{#2}}

\begin{document}

\maketitle

\begin{abstract}
Generative compressed sensing uses the range of a pretrained generator as a nonlinear model for recovering structured signals from limited measurements. We study a conditional version of this problem for image recovery from subsampled Fourier measurements using prompt-conditioned generative models. Our framework separates two roles of conditioning: the prompt used to design the sampling distribution and the prompt used to define the recovery model. For ReLU and Lipschitz conditional generators, we prove stable recovery bounds showing that prompt-matched Christoffel sampling retains the same Christoffel complexity constant as existing near-optimal generative compressed sensing theory, while prompt mismatch incurs an explicit compatibility penalty. Experiments with Stable Diffusion show that prompts meaningfully reshape Christoffel sampling distributions and influence image recovery. Overall, our results suggest that prompts should be treated as design variables with distinct effects on sensing, approximation, and recovery.
\end{abstract}

\section{Introduction}
Compressed sensing~\cite{donoho2006compressed} is a popular signal processing technique to recover a signal $\mathbf{f}^* \in \mathbb{C}^n$ from an undersampled observation $\mathbf{y} \in \mathbb{C}^m$, with $m \ll n$, obtained via some forward process
\begin{equation}
    \mathbf{y} = \mathbf{A} \mathbf{f}^* + \mathbf{e}, \label{eq:CS}
\end{equation}
with numerous applications throughout science and engineering~\cite{chen2008prior,lustig2007sparse,duarte2008single,yuan2021snapshot,herrmann2012fighting}.
Here $\mathbf{A} \in \mathbb{C}^{m \times n}$ is the sampling matrix (with possibly random entries) and $\mathbf{e} \in \mathbb{C}^m$ is a vector of additive noise. 

The inverse problem in Equation~\ref{eq:CS} is underdetermined. Thus one typically assumes some application-driven structure, usually sparsity of $\mathbf{f}^*$ in some domain, to usher a unique solution~\cite{candes2006robust, candes2006stable, tibshirani1996regression, hastie2015statistical, pati1993orthogonal, tropp2007signal, chen2001atomic, adcock2021compressive}, provided the sampling matrix $\mathbf{A}$ satisfies some condition like the Restricted Isometry Property~\cite{donoho2006compressed, candes2006stable}. Generative compressed sensing, on the other hand, replaces the sparsity assumption by instead requiring that $\mathbf{f}^*$ lies close to the range of a generative model $G : \mathbb{R}^k \to \mathbb{R}^n$, as first introduced in~\cite{bora2017compressed}. In this regime, $\mathbf{f}^*$ is recoverable with high probability when the sampling matrix $\mathbf{A}$ with Gaussian entries satisfies the so-called Set-Restricted Eigenvalue Condition. Several subsequent studies investigate upper and lower error bounds for signal recovery in generative compressed sensing~\cite{kamath2019lower, kamath2020power, liu2020information} as well as recovery guarantees under the presence of outlier data~\cite{jalal2020robust} or with nonlinear sampling operators~\cite{chen2023unified}.

While these results establish a powerful alternative to sparsity-based recovery, they rely on assumptions about the measurement process that are often not met in practice. First, Gaussian measurements can be unrealistic: in many practical applications, sensing is dictated by hardware constraints and corresponds to structured, often subsampled unitary transforms, e.g., the discrete Fourier transform in MRI~\cite{lustig2007sparse, lustig2008compressed}. Second, even after restricting to such physically realizable operators, the choice of sampling pattern remains critical. Common strategies such as Gaussian or uniform subsampling are uninformed, as they fail to exploit the structure of the signal class being measured and may lead to suboptimal performance~\cite{adcock2017breaking, krahmer2015compressive, poon2017structure}.

Several studies propose sampling strategies for generative compressed sensing that exploit the geometry of the underlying generative model, a process called \emph{active learning}. For example,~\cite{adcock2023cs4ml} develops the Christoffel Sampling for Machine Learning (CS4ML) framework based on the \emph{generalized Christoffel function}, while~\cite{berk2023modeladapted, berk2022fourier, plan2025denoising} use \emph{local coherences} to design sampling distributions that place more mass on measurement coordinates that are most informative for distinguishing signals in the model class. Such active learning approaches relate closely to the well-studied \emph{leverage score sampling}, with various applications in active learning, standard regression, and imaging~\cite{chen2016statistical,derezinski2018leveraged,ordozgoiti2022generalized,erdelyi2020fourier,avron2019universal}, as well optimal sampling strategies for compressed sensing in general~\cite{adcock2021compressive, krahmer2015compressive, poon2017structure, candes2007compressive, krahmer2013stable, puy2011variable}. 

Despite these advances, existing work in generative compressed sensing has largely relied on unconditional generative priors, thereby omitting auxiliary information that may be available at acquisition or inference time. In many realistic settings, such information can sharply restrict the set of plausible signals. For example, in medical imaging, modality, anatomy, or patient metadata may improve reconstruction~\cite{chung2025contextmri}; similarly, in natural image reconstruction, subject, scene category, acquisition protocol, or other metadata may narrow the relevant signal class and improve reconstruction quality~\cite{chung2024prompttune, kim2025regularization}. Prompted or otherwise conditional generators provide a natural mechanism for incorporating this information by steering the recovery class toward signals compatible with the available context, rather than requiring a single unconditional model to represent all possible signal types.

Conditioning can also improve sensing itself. If the relevant conditional signal class is narrower, then one may design sampling distributions supported on more informative measurement coordinates, leading to lower sample complexity for stable recovery. However, conditioning introduces interactions absent from the unconditional setting: the prompt used for sampling need not match the prompt used for recovery, and neither need match the class containing the true signal, when $\mathbf f^*$ lies in the range of the generator. Thus, overly strong or misaligned conditioning may bias the reconstruction toward the prior and away from the observed measurements. This motivates \emph{conditional generative compressed sensing} as an extension of the standard framework in which both sampling design and recovery guarantees depend explicitly on the interaction between conditional signal classes. Understanding how these interactions affect stable recovery and reconstruction is the central question of this paper.

\subsection{Contributions}
We adapt an active learning perspective via Christoffel sampling from~\cite{adcock2023cs4ml} together with generative compressed sensing arguments from~\cite{bora2017compressed, berk2023modeladapted, berk2022fourier} to study prompt-conditioned signal recovery under subsampled Fourier measurements. The resulting theory characterizes how interactions between the sampling, recovery, and true-signal prompts determine the sample complexity required for stable recovery and the resulting reconstruction error. Our contributions may be summarized as follows:
\begin{itemize}
    \item We formulate a prompt-conditioned Christoffel sampling framework for generative compressed sensing under subsampled Fourier measurements with three conditioning prompts: the true-signal prompt $c_*$, the recovery prompt $c_r$, and the sampling prompt $c_s$.

    \item We identify a single prompt compatibility factor $\Lambda(c_1,c_2,c_3)$ and show that evaluating it at $(c_r,c_r,c_s)$ controls sample complexity for stable reconstruction, while evaluating it at $(c_*,c_r,c_s)$ controls the residual error in recovery.

    \item We derive explicit sample complexity results under ReLU and Lipschitz assumptions on the conditional generator and combine these with agnostic signal recovery error bounds.

    \item We empirically test the theoretical framework through a suite of experiments demonstrating that prompt conditioning systematically influences both the induced sampling distributions and reconstruction performance.
\end{itemize}

\section{Active Learning Problem Setup}\label{sec: problem setup}

We follow the general Christoffel sampling for machine learning (CS4ML) framework established in~\cite{adcock2023cs4ml} for our active learning regime, however we specialize our results to subsampled Fourier measurements. For a full description of the general CS4ML framework, we refer the reader to Appendix~\ref{app:cs4ml}.

\subsection{Setup and Prompt-Conditioned Model Classes}
Let $\mathbb{X} = \mathbb{R}^n$, deemed the \emph{object space}, be equipped with Euclidean norm $\left\| \cdot \right\|_2$. Let $\mathbf{F} \in \mathbb{C}^{n \times n}$ denote the unitary discrete Fourier transform, let $D = \{1, \dots, n\}$, and let $\mathbf{P}_i$ be the row selector matrix for the $i$th Fourier coefficient. For a signal $\mathbf{f} \in \mathbb{X}$, define
\[
  L_i(\mathbf{f}) = \mathbf{P}_i \mathbf{F}\mathbf{f} \in \mathbb{C}, \qquad i \in D
\]
to be the \emph{sampling operator}, which extracts the $i$th Fourier coefficient of $\mathbf{f}$. We assume that the full family of sampling operators $\{L_i\}_{i \in D}$ is nondegenerate on $\mathbb{X}$, i.e.\ that there exist constants $0 < \alpha \le \beta < \infty$ such that
\[
  \alpha \left\|\mathbf{f}\right\|_2^2
  \le
  \sum_{i \in D} |L_i(\mathbf{f})|^2
  \le
  \beta \left\|\mathbf{f}\right\|_2^2,
  \qquad
  \mathbf{f} \in \mathbb{X}.
\]
In our setting, this holds with $\alpha = \beta = 1$ by unitarity of the discrete Fourier transform $\mathbf{F}$. Later, after introducing subsampling, we will require a sampling analogue of this property, deemed \emph{empirical nondegeneracy}, however we defer this notion to Section~\ref{sec: theory}. These properties ensure that the measurement map is injective and norm-preserving on $\mathbb{X}$, so that differences between signals are reflected in the measurements, thereby enabling stable and distinguishable recovery.

Next, let $G:\mathbb{R}^k\times\mathcal C\to\mathbb{X}$ be a conditional generative model which takes a latent vector $\mathbf{z} \in \mathbb{R}^k$ and conditioning $c \in \mathcal{C}$ as input, and outputs a signal $\mathbf{f} = G(\mathbf{z},c) \in \mathbb{X}$. Here, $\mathcal C$ denotes an abstract conditioning space, which may represent labels, metadata, measurements, or text embeddings, though in this work, we specialize to the case where conditioning is given by text prompts. In our subsequent analyses, we further assume the latent vector input $\mathbf{z}$ has bounded norm and let $B_2^k(R) = \left\{ \mathbf{z} \in \mathbb{R}^k \, : \, \left \| \mathbf{z} \right \|_2 \leq R \right\}$. Many generative models (e.g.\ diffusion-based~\cite{song2021scorebased,ho2020denoising,sohl2015deep}, GANs~\cite{goodfellow2014generative,radford2015unsupervised}, and VAEs~\cite{kingma2014auto}) assume the entries of $\mathbf{z}$ are Gaussian, therefore with these models, this assumption omits only an exponentially unlikely number of inputs in practical applications~\cite{bora2017compressed}. Lastly, define
\[
  \mathbb{F}_c = \left\{ G(\mathbf{z}, c) : \mathbf{z} \in B_2^k(R)\right\} \subseteq \mathbb{X},
\]
termed the \emph{approximation space}, to be the range of the generative model conditioned on a prompt $c$. 

In practice, we distinguish three different prompts throughout our analyses. First, we consider the signal prompt $c_*$, which manifests when our quantity of interest $\mathbf{f}^*$ is explicitly in the range of the generative model, i.e. $\mathbf{f}^* \in \mathbb{F}_{c_*}$. In general, $c_*$ may be unknown, or $\mathbf{f}^*$ might be out of distribution, in which case we also consider the recovery prompt $c_r$ which defines the set of candidate signals used to reconstruct $\mathbf{f}^*$ using $G$, that is, using signals in $\mathbb{F}_{c_r}$. Finally, we consider the sampling prompt $c_s$ used to construct our sampling distribution $\mu_{c_s}$, which we introduce in the following subsection. 

As we will see, a mismatch $c_* \neq c_r$ is a \emph{signal-recovery mismatch} and enters as an approximation error, whereas a mismatch $c_s \neq c_r$ is a \emph{sampling-recovery mismatch} and enters through the sample complexity required for stable signal recovery. Likewise, a mismatch $c_* \neq c_s$ is a \emph{signal-sampling mismatch}, meaning that the measurements are designed using prompt information that may not reflect the true type of signal being acquired, and in our analyses this manifests through how well the resulting sampling law captures the features of the target signal relevant for recovery.

\subsection{Christoffel Sampling and Prompt Compatibility}
A central component of active learning is the principled selection of measurements according to an optimized sampling strategy, which may be determined prior to data collection or updated adaptively as observations are acquired, in contrast to fixed or uniform sampling. In our setting, this corresponds to selecting indices $i \in D$ at which to query the sampling operators $L_i$. The goal is to allocate measurements so that they are maximally informative for distinguishing elements of the model class under consideration.

One such active sampling scheme is \emph{Christoffel sampling}, which harnesses the \emph{generalized Christoffel function}~\cite{adcock2023cs4ml, adcock2025optimal}. For any nonempty set $\mathbb S\subseteq \mathbb X$, define the Christoffel function of $\mathbb S$ in our subsampled Fourier measurement setting by
\[
  K(\mathbb S)(i)
  =
  \sup_{\mathbf{h} \in \mathbb S\setminus\left\{ \mathbf{0} \right\}}
  \frac{\left| L_i(\mathbf{h}) \right|^2}{\left\| \mathbf{h} \right\|_2^2}
  =
  \sup_{\mathbf{h} \in \mathbb S\setminus\left\{ \mathbf{0} \right\}}
  \frac{\left| \mathbf{P}_i \mathbf{F}\mathbf{h} \right|^2}{\left\| \mathbf{h} \right\|_2^2},
  \qquad i \in D,
\]
as well as the related quantity $\kappa(\mathbb S) = \sum_{i=1}^n K(\mathbb S)(i)$. In particular, for a prompt $c \in \mathcal{C}$ we apply this construction to the self-difference class $\mathbb{F}_c - \mathbb{F}_c = \left\{ \mathbf{f}_1 - \mathbf{f}_2 : \mathbf{f}_1, \mathbf{f}_2 \in \mathbb{F}_c \right\}$. We refer to elements in such sets as \emph{secants}. Since recovery is performed over a prompt-conditioned class $\mathbb{F}_{c_r}$, the notion of informativeness is inherently model-dependent and governed by how signals in $\mathbb{F}_{c_r}$ differ from one another. In particular, stable recovery requires that measurements capture directions along which elements of $\mathbb{F}_{c_r}$ can vary, i.e.\ the geometry of the self-difference class $\mathbb{F}_{c_r} - \mathbb{F}_{c_r}$. This is quantified by the Christoffel function $K(\mathbb{F}_c - \mathbb{F}_c)(i)$, which measures how much secants of $\mathbb{F}_c$ can concentrate in the $i$th Fourier coordinate. Larger values therefore correspond to coordinates that are more informative for distinguishing elements of the prompt-conditioned model class.

Under the standard active learning setup, we assume we have freedom to query $L_i$ for any $i \in D$, which allows us to design model-adapted sampling distributions using the generalized Christoffel function. The corresponding Christoffel sampling law is the probability distribution
\begin{equation}\label{eq: christoffel sampling law}
  \mu_c(i)
  =
  \frac{K(\mathbb{F}_c - \mathbb{F}_c)(i)}{\kappa(\mathbb{F}_c - \mathbb{F}_c)},
  \qquad i \in D.
\end{equation}
Here, $\mu_c$ allocates more probability mass to Fourier coordinates that can carry more normalized secant energy. To make our subsequent analyses more straightforward, we impose the following assumption, which follows from~\cite[Assumption 2.2]{adcock2023cs4ml}:

\begin{assumption}\label{assump: full support}
    This Christoffel sampling law $\mu_c$ has full-support, i.e. that $\mu_c(i) > 0$ for every $c \in \mathcal C$ and for every $i \in D$.
\end{assumption}

In the prompt-matched case, where the same prompt is used for both recovery and measurement, i.e. $c_r = c_s$, one would sample according to $\mu_{c_r}$. However, our framework allows the recovery prompt $c_r$ and the sampling prompt $c_s$ used to construct $\mu_{c_s}$ to differ. To quantify these prompt interactions with a single statistic, we define the following quantity:

\begin{definition}[Prompt Compatibility Factor]
  Let $c_1,c_2,c_3 \in \mathcal{C}$. Define the \emph{prompt compatibility factor} by
  \[
    \Lambda(c_1,c_2,c_3)
    =
    \max_{i \in D}
    \frac{K(\mathbb{F}_{c_1}-\mathbb{F}_{c_2})(i)}{\mu_{c_3}(i)}.
  \]
  \label{def: prompt compatibility factor}
\end{definition}

\begin{remark}[Reduction to Known Sampling Complexity Measures]
In the prompt-matched case where we recover over the same sampling prompt ($c_r = c_s$), we have $\Lambda(c_r,c_r,c_r)=\kappa(\mathbb F_{c_r}-\mathbb F_{c_r})$, reducing to the Christoffel sample-complexity constant from~\cite{adcock2023cs4ml}. More generally, recovering under a prompt different than that used to create the sampling distribution requires evaluating $\Lambda(c_r,c_r,c_s)$, which under our Fourier sampling model recovers the model-adapted coherence sample-complexity constant in~\cite{berk2023modeladapted, plan2025denoising}, which indeed is a special case of $\kappa$. What makes $\Lambda$ unique in the agnostic signal recovery setting is that it extends the usual self-difference analysis to cross-difference classes. When the target signal $\mathbf f^*$ is generated with a prompt $c_*$, the prompt compatibility factor $\Lambda$ not only controls sample complexity through $\mathbb F_{c_r}-\mathbb F_{c_r}$, but also quantifies the complexity of $\mathbb F_{c_*}-\mathbb F_{c_r}$. As we later show, this yields direct control of the approximation error in recovery.
\end{remark}

In the sampling-recovery mismatched case, $\Lambda(c_r,c_r,c_s)$ might blow up if Fourier frequencies needed for the recovery geometry under $c_r$ receive little positive mass under the sampling law from $c_s$. Of course, this immediately suggests that one should always set $c_r = c_s$ to circumvent this issue, though this is not always feasible in practice. Acquisition and reconstruction may occur at different times and with different levels of information. For example, at sensing time, one may only know coarse metadata or an initial prompt, and must therefore design measurements using $c_s$, whereas at reconstruction time one may have access to a more refined prompt and recover over $\mathbb{F}_{c_r}$. Moreover, one may wish to design a single sampling distribution that is effective across a family of possible recovery prompts, so that $c_s$ is chosen not to match one particular $c_r$ (or $c_*$), but rather as a compromise or universal design for several downstream recovery classes. Our theory quantifies how these mismatches change both the sample complexity needed to stabilize recovery and the resulting reconstruction error.

\section{Main Theoretical Results}\label{sec: theory}
Throughout this section, let $I_1,\dots,I_m$ be drawn independently from a sampling distribution $\mu_{c_s}$, write $\Omega= \{ I_1,\dots,I_m \}$, and let $\mathbf P_\Omega \in \mathbb R^{m\times n}$ denote the sampling matrix whose $j$th row is $\mathbf P_{I_j}$. Define the weighted subsampled Fourier operator
\[
  \mathbf A_\Omega
  =
  \frac{1}{\sqrt{m}} \mathbf W_\Omega^{1/2}\mathbf P_\Omega \mathbf F,
  \qquad
  \mathbf W_\Omega
  =
  \operatorname{diag}\left(
    \frac{1}{\mu_{c_s}(I_1)},
    \dots,
    \frac{1}{\mu_{c_s}(I_m)}
  \right),
\]
where $\mathbf F$ denotes the unitary discrete Fourier transform. We observe noisy weighted measurements
$
  \mathbf y = \mathbf A_\Omega \mathbf f_* + \mathbf e,
$
where $\mathbf e = \frac{1}{\sqrt{m}}\mathbf W_\Omega^{1/2}\mathbf u$ for some
$\mathbf{u}\in\mathbb C^m$ denoting unweighted noise.

Next, we introduce the two near-isometry principles that allow for stable signal recovery. 

\begin{definition}[S-REC~\cite{bora2017compressed}]
  For $\mathbb{S} \subseteq \mathbb X$, we say that $\mathbf{A}_\Omega$ satisfies the Set-Restricted Eigenvalue Condition (S-REC) with parameters $(\gamma, q)$, denoted $\mathrm{S\text{-}REC}\left(\mathbb S,\gamma,q\right)$, if
  \[
    \left\|\mathbf A_\Omega\left(\mathbf f_1-\mathbf f_2\right)\right\|_2
    \geq
    \gamma \left\|\mathbf f_1-\mathbf f_2\right\|_2
    -
    q,
    \qquad
    \mathbf f_1,\mathbf f_2\in\mathbb{S}.
  \]
\end{definition}

\begin{definition}[Empirical Nondegeneracy]\label{def:empirical-nondegeneracy}
  Let $0<\tau<1$. We say that $\mathbf A_\Omega$ is empirically nondegenerate on $\mathbb{F}_{c_r}-\mathbb{F}_{c_r}$ with distortion $\tau$ if
  \[
    \left(1-\tau\right)\left\|\mathbf h\right\|_2^2
    \leq
    \left\|\mathbf A_\Omega \mathbf h\right\|_2^2
    \leq
    \left(1+\tau\right)\left\|\mathbf h\right\|_2^2,
    \qquad
    \mathbf h\in \mathbb{F}_{c_r}-\mathbb{F}_{c_r}.
  \]
\end{definition}

These conditions formalize the requirement that the measurement operator preserve the geometry of the relevant signal class. Empirical nondegeneracy ushers in the sample complexity guarantees via Christoffel sampling, which we then leverage for our uniform signal recovery guarantees. As empirical nondegeneracy implies the S-REC, we lay out the proposition below: 

\begin{proposition}[Empirical Nondegeneracy Implies S-REC]\label{prop: end implies srec}
  If $\mathbf A_\Omega$ is empirically nondegenerate on $\mathbb{F}_{c_r}-\mathbb{F}_{c_r}$ with distortion $\tau\in(0,1)$, then $
    \mathbf A_\Omega$ satisfies $ \mathrm{S\text{-}REC}\left(\mathbb{F}_{c_r},\sqrt{1-\tau},0\right)$.
\end{proposition}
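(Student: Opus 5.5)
The argument is a direct unpacking of the two definitions; no probabilistic or covering argument is needed, since the statement is deterministic given the empirical nondegeneracy hypothesis. Fix arbitrary $\mathbf f_1,\mathbf f_2\in\mathbb F_{c_r}$ and set $\mathbf h=\mathbf f_1-\mathbf f_2$. By construction of the self-difference class, $\mathbf h\in\mathbb F_{c_r}-\mathbb F_{c_r}$, so the lower inequality in Definition~\ref{def:empirical-nondegeneracy} applies to $\mathbf h$ and gives
\[
  \left\|\mathbf A_\Omega(\mathbf f_1-\mathbf f_2)\right\|_2^2 \geq (1-\tau)\left\|\mathbf f_1-\mathbf f_2\right\|_2^2 .
\]

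Next I would take square roots. Both sides are nonnegative, and $1-\tau>0$ because $\tau\in(0,1)$. Since $t\mapsto\sqrt t$ is monotone on $[0,\infty)$, this yields
\[
  \left\|\mathbf A_\Omega(\mathbf f_1-\mathbf f_2)\right\|_2\geq\sqrt{1-\tau}\,\left\|\mathbf f_1-\mathbf f_2\right\|_2 - 0 .
\]
This is exactly the S-REC inequality with $\gamma=\sqrt{1-\tau}$ and $q=0$. Because $\mathbf f_1,\mathbf f_2$ were arbitrary, $\mathbf A_\Omega$ satisfies $\mathrm{S\text{-}REC}(\mathbb F_{c_r},\sqrt{1-\tau},0)$.

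There is no real obstacle here. The only points worth checking are that the difference of two elements of $\mathbb F_{c_r}$ lies in the set where nondegeneracy is assumed, which holds by definition, and that $\gamma$ is a well-defined positive real, which follows from $\tau<1$. The case $\mathbf f_1=\mathbf f_2$ holds trivially. The upper bound in empirical nondegeneracy is not used.
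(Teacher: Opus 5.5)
Your proposal is correct and matches the paper's argument. The paper omits the proof as following directly from Definition~\ref{def:empirical-nondegeneracy}, and in the proof of Theorem~\ref{thm:christoffel-srec} it uses the same step: apply the lower half of empirical nondegeneracy to $\mathbf h=\mathbf f_1-\mathbf f_2$ and take square roots to get $\mathrm{S\text{-}REC}(\mathbb F_{c_r},\sqrt{1-\tau},0)$.
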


The proof follows directly from Definition~\ref{def:empirical-nondegeneracy}, so we omit it. Unless explicitly stated otherwise, every high-probability statement below is with respect to the i.i.d.\ draw of $\Omega$ from $\mu_{c_s}$.

\subsection{Sampling-Recovery Mismatch Sample Complexity}
We now give explicit sample complexity conditions guaranteeing that the sampling matrix $\mathbf A_\Omega$ is stable on the recovery class under the assumption that the generator $G(\cdot, c_r)$ is a ReLU network, or more generally $L$-Lipschitz. For notational convenience, we define
$
  \underline{\mu}
  =
  \min_{i \in D}\mu_{c_s}\left(i\right)
$
to be the minimum probability assigned to a Fourier frequency under the sampling law $\mu_{c_s}$. We first state the sample complexity for ReLU networks, which we define below, based on~\cite[Definition 2]{berk2022fourier}.

\begin{definition}[Conditional Bias-Free Depth-$d$ ReLU Generator]
  \label{def:bias-free-relu}
  Let $\mathcal C$ be a conditioning space and let $d\geq 2$. We say that
  $G:\mathbb R^k\times\mathcal C\to\mathbb R^n$ is a conditional
  bias-free depth-$d$ feedforward ReLU generator with layer widths
   $k=k_0\leq k_1,\dots,k_{d-1}$ and $k_d=n$ if, for each fixed condition $c$, the induced map $G(\cdot, c):  \mathbb{R}^k \to \mathbb{R}^n$ has the form
  \[
    G(\mathbf z,c)
    =
    W_d(c)\sigma\!\left(
      W_{d-1}(c)\sigma\!\left(
        \cdots \sigma\!\left(W_1(c)\mathbf z\right)
      \right)
    \right),
  \]
  where $W_\ell(c)\in\mathbb R^{k_\ell\times k_{\ell-1}}$ for
  $\ell=1,\dots,d$, $\sigma(t)=\max\{t,0\}$ is applied coordinate-wise, and
  there are no additive bias vectors. We model conditioning abstractly by allowing the network weights to depend on $c$, so that each $c\in\mathcal C$ indexes a generator $G(\cdot,c)$.
\end{definition}

Appendix~\ref{app:relu-specialization} records the more general piecewise-linear theorem from which the following ReLU sample complexity bounds follows. 

\begin{theorem}[Sample Complexity for ReLU Networks]
  \label{thm:relu-srec}
  Let $c_r\in\mathcal C$ be the recovery prompt used to condition the generator $G(\cdot, c_r)$, and let $c_s\in\mathcal C$ be the sampling prompt used to draw $\Omega$ from $\mu_{c_s}$ and build $\mathbf A_\Omega$.
  Assume that $G\left(\cdot,c_r\right)$ is a bias-free depth-$d$ feedforward ReLU network in the sense of Definition~\ref{def:bias-free-relu} with layer widths
  $
    k = k_0 \leq k_1,\dots,k_{d-1},
    k_d = n
  $, with $d \geq 2$,
  and define
  \[
    \overline{k}_{c_r}
    =
    \left(
      \prod_{\ell=1}^{d-1} k_\ell
    \right)^{1/(d-1)}.
  \]
  Fix $0<\tau,\delta<1$. If
  \[
m \gtrsim \tau^{-2}\Lambda(c_r,c_r,c_s)\left[
  k(d-1)\left(1+\log\frac{\overline{k}_{c_r}}{k}\right)
  +
  k\left(1+\log\frac{1}{\tau\sqrt{\underline{\mu}}}\right)
  +
  \log\frac{1}{\delta}
\right],
\]
  then with probability at least $1-\delta$ over the draw of $\Omega$, $\mathbf A_\Omega$ satisfies
  $
  \mathrm{S\text{-}REC}
  \left(
    \mathbb{F}_{c_r},
    \sqrt{1-\tau},
    0
  \right)
  $.
\end{theorem}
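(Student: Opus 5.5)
The plan is to establish empirical nondegeneracy of $\mathbf A_\Omega$ on $\mathbb{F}_{c_r}-\mathbb{F}_{c_r}$ with distortion $\tau$, and then invoke Proposition~\ref{prop: end implies srec}. The argument has three parts. First, exploit the piecewise-linear structure of a bias-free ReLU network to cover the secant set by a union of low-dimensional subspaces (cones). Second, prove a uniform near-isometry on each subspace with a matrix Chernoff / Bernstein bound whose variance parameter is $\Lambda(c_r,c_r,c_s)$. Third, take a union bound over the pieces and pass from a net to the whole set.

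\textbf{Step 1 (geometry).} Following \cite{berk2022fourier} and the standard counting argument of \cite{bora2017compressed}, $G(\cdot,c_r)$ is linear on each of at most $N$ polyhedral cones partitioning $\mathbb{R}^k$. The count satisfies
\[
\log N \lesssim k\sum_{\ell=1}^{d-1}\log(e k_\ell/k) = k(d-1)\left(1+\log(\overline{k}_{c_r}/k)\right)
\]
(hyperplane arrangement counting, layer by layer). Hence $\mathbb{F}_{c_r}-\mathbb{F}_{c_r}$ is contained in a union of at most $N^2$ subspaces $V_j$, each of dimension at most $2k$, namely the sums of pairs of linear images of cones.

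\textbf{Step 2 (per-subspace concentration).} Fix one such subspace $V$ with orthonormal basis $\mathbf U$. Write
\[
\mathbf U^*\mathbf A_\Omega^*\mathbf A_\Omega\mathbf U=\frac1m\sum_j X_j, \qquad X_j=\mu_{c_s}(I_j)^{-1}\mathbf U^*\mathbf F^*\mathbf P_{I_j}^*\mathbf P_{I_j}\mathbf F\mathbf U .
\]
Each $X_j$ has mean $\mathbf I$ by unitarity of $\mathbf F$. Bounding $\|X_j\|$ is the delicate point. The supremum of $|L_i(\mathbf h)|^2/\|\mathbf h\|^2$ over $V$ is not controlled by $K(\mathbb{F}_{c_r}-\mathbb{F}_{c_r})(i)$ unless $V\subseteq\mathbb{F}_{c_r}-\mathbb{F}_{c_r}$. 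In the ReLU case, however, the piecewise images are cones inside the secant set, and I would handle this in one of two ways. The first option is to work directly with the cone (a subset of the secant set) instead of its span, and use a covering/chaining argument on the cone intersected with the sphere. The second option, following the piecewise-linear theorem in Appendix~\ref{app:relu-specialization}, is to reduce to pieces that lie inside the secant set. Either way, for a fixed unit $\mathbf h$ in the secant set we get $|L_i\mathbf h|^2/\mu_{c_s}(i)\le \Lambda(c_r,c_r,c_s)$, and a scalar Bernstein bound gives
\[
\Pr\left(\bigl|\|\mathbf A_\Omega\mathbf h\|^2-1\bigr|>\tau/2\right)\le 2\exp\left(-c\,m\tau^2/\Lambda\right).
\]

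\textbf{Step 3 (net and union bound).} On each piece (of dimension at most $2k$), take an $\epsilon$-net of the unit sphere with cardinality $(3/\epsilon)^{2k}$. To extend from the net to the whole piece I need a crude operator bound $\|\mathbf A_\Omega\|\le \max_i\mu_{c_s}(i)^{-1/2}=\underline{\mu}^{-1/2}$ that holds deterministically. This suggests choosing $\epsilon\asymp\tau\sqrt{\underline{\mu}}$, which produces the term $k(1+\log\frac{1}{\tau\sqrt{\underline{\mu}}})$. The union bound over $N^2(3/\epsilon)^{2k}$ points, with failure probability $\delta$, then gives exactly the stated lower bound on $m$. Empirical nondegeneracy follows, and Proposition~\ref{prop: end implies srec} delivers $\mathrm{S\text{-}REC}(\mathbb{F}_{c_r},\sqrt{1-\tau},0)$.

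The main obstacle is Step 2/3's interface: making sure that the uniform bound $\Lambda(c_r,c_r,c_s)$ applies to every net point, which requires the net to lie inside the secant set rather than in its linear span. It also requires that the net-to-set extension use only the deterministic $\underline{\mu}^{-1/2}$ bound, since that is the source of the $\log(1/(\tau\sqrt{\underline{\mu}}))$ term rather than a self-bounding argument. I would first try nets on the normalized cones $\{\mathbf h/\|\mathbf h\|\}$, with the extension done by this Lipschitz-type deterministic bound.
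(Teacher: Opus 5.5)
Your proposal is correct and follows essentially the same route as the paper. The paper also uses the ReLU cone decomposition (at most $N$ cones with $\log N\lesssim k(d-1)(1+\log(\overline{k}_{c_r}/k))$, hence at most $N^2$ difference cones of dimension at most $2k$) and takes Euclidean nets of radius $\asymp\tau\sqrt{\underline{\mu}}$ chosen \emph{inside} $\mathbb{B}_{c_r}$, as maximal separated subsets of $\mathbb{B}_{c_r}\cap(C_a-C_b)$. It then applies scalar Bernstein with bound $\Lambda(c_r,c_r,c_s)$ and a union bound, and extends from the net to the whole set with a deterministic bound rather than self-bounding; this resolves exactly the interface issue you flagged. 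The only cosmetic difference is that the paper phrases that extension through the sampling seminorm, $\|\mathbf A_\Omega\mathbf g\|_2\le\snorm{\mathbf g}{\mu_{c_s}}\le\underline{\mu}^{-1/2}\|\mathbf g\|_2$, which is the same as your operator-norm bound $\|\mathbf A_\Omega\|\le\underline{\mu}^{-1/2}$.
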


We next record the sample complexity for stable recovery under the assumption that $G(\cdot, c_r)$ is $L$-Lipschitz, where proof is given in Appendix~\ref{app:lipschitz-specialization}.

\begin{theorem}[Sample Complexity for $L$-Lipschitz Networks]
  \label{thm:lipschitz-srec}
  Let $c_r\in\mathcal C$ be the recovery prompt used to condition the generator $G(\cdot, c_r)$, and let $c_s\in\mathcal C$ be the sampling prompt used to draw $\Omega$ from $\mu_{c_s}$ and build $\mathbf A_\Omega$.
  Assume that $G\left(\cdot,c_r\right)$ is $L$-Lipschitz on $B_2^k\left(R\right)$. Fix $0<\tau,\delta<1$ and $\xi>0$. If
  \[
    m
    \gtrsim
    \tau^{-2}
    \Lambda\left(c_r,c_r,c_s\right)
    \left[
      k\log\left(
        1+
        \frac{LR}
        {\xi\tau\sqrt{\underline{\mu}}}
      \right)
      +
      \log\left(\frac{1}{\delta}\right)
    \right],
  \]
  then with probability at least $1-\delta$ over the draw of $\Omega$, $\mathbf A_\Omega$ satisfies
  $
  \mathrm{S\text{-}REC}
  \left(
    \mathbb{F}_{c_r},
    \sqrt{1-\tau},
    \sqrt{1-\tau}\,\xi
  \right)
  $.
\end{theorem}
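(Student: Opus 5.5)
The plan is a net argument over the latent ball combined with a matrix Chernoff (or Bernstein) bound at each net point. The Lipschitz property carries the resulting near-isometry from the net to all of $\mathbb F_{c_r}$, at the price of the additive slack $\xi$.

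\textbf{Step 1: fixed secant.} Fix a secant $\mathbf h\in\mathbb F_{c_r}-\mathbb F_{c_r}$. Write
\[
\|\mathbf A_\Omega\mathbf h\|_2^2=\frac1m\sum_{j=1}^m X_j,\qquad X_j=\frac{|L_{I_j}(\mathbf h)|^2}{\mu_{c_s}(I_j)}.
\]
Unitarity of $\mathbf F$ gives $\mathbb E X_j=\|\mathbf h\|_2^2$. By the definition of $K$ and Definition~\ref{def: prompt compatibility factor},
\[
0\le X_j\le \Lambda(c_r,c_r,c_s)\,\|\mathbf h\|_2^2 .
\]
A Bernstein (or scalar Chernoff) inequality then gives
\[
\mathbb P\Bigl(\bigl|\|\mathbf A_\Omega\mathbf h\|_2^2-\|\mathbf h\|_2^2\bigr|>\tfrac\tau2\|\mathbf h\|_2^2\Bigr)\le 2\exp\bigl(-c\,m\tau^2/\Lambda(c_r,c_r,c_s)\bigr).
\]

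\textbf{Step 2: net and union bound.} Let $\mathcal N$ be an $\epsilon$-net of $B_2^k(R)$ with $|\mathcal N|\le(1+2R/\epsilon)^k$. Apply Step 1 to every secant $G(\mathbf z_1,c_r)-G(\mathbf z_2,c_r)$ with $\mathbf z_1,\mathbf z_2\in\mathcal N$ and take a union bound over the at most $(1+2R/\epsilon)^{2k}$ pairs. The failure probability is at most $\delta$ provided
\[
m\gtrsim\tau^{-2}\Lambda(c_r,c_r,c_s)\bigl[k\log(1+R/\epsilon)+\log(1/\delta)\bigr].
\]
On this event, Proposition~\ref{prop: end implies srec}'s reasoning (taking square roots, and using $\sqrt{1-\tau/2}\ge\sqrt{1-\tau}$ for the margin) gives
\[
\|\mathbf A_\Omega(\mathbf f_1'-\mathbf f_2')\|_2\ge\sqrt{1-\tau/2}\,\|\mathbf f_1'-\mathbf f_2'\|_2\qquad\text{for all net images }\mathbf f_1',\mathbf f_2'.
\]

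\textbf{Step 3: extension off the net.} This is the main obstacle, because it requires a deterministic bound on $\|\mathbf A_\Omega\|$ on all of $\mathbb X$, not just on secants. Each row of $\mathbf A_\Omega$ has norm at most $(m\,\underline\mu)^{-1/2}$, because a row of the unitary $\mathbf F$ has unit norm. Hence
\[
\|\mathbf A_\Omega\|_{2\to2}\le\|\mathbf A_\Omega\|_F\le \underline\mu^{-1/2}.
\]
Now take $\mathbf f_i=G(\mathbf z_i,c_r)$, with nearest net points $\mathbf z_i'$ and images $\mathbf f_i'=G(\mathbf z_i',c_r)$. The $L$-Lipschitz assumption gives $\|\mathbf f_i-\mathbf f_i'\|_2\le L\epsilon$. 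Using the triangle inequality twice,
\[
\|\mathbf A_\Omega(\mathbf f_1-\mathbf f_2)\|_2\ge\sqrt{1-\tau/2}\,\|\mathbf f_1-\mathbf f_2\|_2-2L\epsilon\bigl(\sqrt{1-\tau/2}+\underline\mu^{-1/2}\bigr).
\]

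\textbf{Step 4: choice of $\epsilon$.} Set
\[
\epsilon\asymp\frac{\xi\tau\sqrt{\underline\mu}}{L}.
\]
The choice is made small enough that the accumulated error, including the margin between $\sqrt{1-\tau/2}$ and $\sqrt{1-\tau}$, is absorbed into an additive term at most $\sqrt{1-\tau}\,\xi$. Substituting $\log(1+R/\epsilon)\asymp\log\bigl(1+LR/(\xi\tau\sqrt{\underline\mu})\bigr)$ into the bound of Step 2 gives exactly the stated sample complexity. The result is the conclusion $\mathrm{S\text{-}REC}(\mathbb F_{c_r},\sqrt{1-\tau},\sqrt{1-\tau}\,\xi)$, with probability at least $1-\delta$.
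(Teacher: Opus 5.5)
Your argument is essentially correct (Step 4 needs one added observation, explained below). It has the same skeleton as the paper's proof:
\begin{itemize}
\item a latent net $\mathcal Z\subseteq B_2^k(R)$ at scale $\asymp\xi\tau\sqrt{\underline{\mu}}/L$;
\item the fixed-secant Bernstein bound with $\varepsilon=\tau/2$, applied to the images of all $|\mathcal Z|^2$ pairs;
\item the deterministic bound $\|\mathbf A_\Omega\mathbf g\|_2\le\underline{\mu}^{-1/2}\|\mathbf g\|_2$ to leave the net. Your Frobenius estimate gives the same constant as composing the paper's two seminorm lemmas.
\end{itemize}
The difference is in the extension step. The paper normalizes. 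It turns the image net into an $\eta$-net, with $\eta=\tau/8$, of the truncated normalized secant set $\mathbb B_{c_r,\xi}$ in the sampling seminorm. The truncation $\|\mathbf h\|_2\ge\xi$ is what controls the normalized perturbation $2\|\mathbf h-\mathbf g\|_2/\|\mathbf h\|_2$. From this the paper derives the multiplicative bound $\|\mathbf A_\Omega\mathbf h\|_2\ge\sqrt{1-\tau}\,\|\mathbf h\|_2$ for $\|\mathbf h\|_2\ge\xi$, and treats $\|\mathbf h\|_2<\xi$ trivially. This lets the Lipschitz case reuse the same seminorm-net and extension machinery as the piecewise-linear case. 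You instead perturb unnormalized secants, so the net error appears directly as an additive term $E=2L\epsilon(\sqrt{1-\tau/2}+\underline{\mu}^{-1/2})$. This is more elementary and matches the S-REC slack naturally.

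Your Step 4 is not yet a complete argument. With $\epsilon\asymp\xi\tau\sqrt{\underline{\mu}}/L$ you have $E\asymp\tau\xi$, which is not bounded by $\sqrt{1-\tau}\,\xi$ uniformly as $\tau\to1$. The margin $(\sqrt{1-\tau/2}-\sqrt{1-\tau})\|\mathbf h\|_2\ge\tfrac{\tau}{4}\|\mathbf h\|_2$ only helps when $\|\mathbf h\|_2$ is bounded below. What is missing is the paper's case split:
\begin{itemize}
\item If $\|\mathbf h\|_2\ge\xi$, the margin is at least $\tau\xi/4$. Taking $\epsilon=c\,\xi\tau\sqrt{\underline{\mu}}/L$ with $c\le 1/16$ gives $E\le\tau\xi/4$, hence $\|\mathbf A_\Omega\mathbf h\|_2\ge\sqrt{1-\tau}\,\|\mathbf h\|_2$.
\item If $\|\mathbf h\|_2<\xi$, the S-REC inequality holds trivially, because its right-hand side $\sqrt{1-\tau}(\|\mathbf h\|_2-\xi)$ is negative.
\end{itemize}
Once you use this split, your route does not even need the factor $\tau$ in $\epsilon$. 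Requiring only $E\le\xi/\sqrt{2}\le\sqrt{1-\tau/2}\,\xi$ already gives $\|\mathbf A_\Omega\mathbf h\|_2\ge\sqrt{1-\tau/2}\,(\|\mathbf h\|_2-\xi)$, which implies the claim. So $\epsilon\asymp\xi\sqrt{\underline{\mu}}/L$ suffices, and the $1/\tau$ inside the logarithm can be dropped. The paper's normalized route does not exploit this, because its extension step must keep $\eta\le\tau/8$ uniformly over $\mathbb B_{c_r,\xi}$.
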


In both cases, conditioning affects sample complexity through alignment and coverage. Under sampling-recovery mismatch, the prompt compatibility factor $\Lambda(c_r,c_r,c_s)$ measures whether the sampling prompt $c_s$ places sufficient mass on the Fourier coordinates that distinguish signals in the recovery class $\mathbb F_{c_r}$, while $\underline{\mu}$ quantifies how close the sampling law is to neglecting coordinates altogether. If instead $c_s=c_r$, this penalty reduces to $\Lambda(c_r,c_r,c_r)=\kappa(\mathbb F_{c_r}-\mathbb F_{c_r})$, we recover the existing Christoffel- and coherence-based sample complexity results for generative compressed sensing~\cite{adcock2023cs4ml, berk2023modeladapted, plan2025denoising}. Under prompt mismatch, however, $\Lambda$ can grow and $\underline{\mu}$ can shrink, reflecting the increased number of samples required to capture poorly represented directions for stable recovery.

\subsection{Recovery Under the S-REC}\label{subsec:recovery}
Under the event that $\mathbf A_\Omega$ satisfies
$\mathrm{S\text{-}REC}\left(\mathbb{F}_{c_r},\gamma,q\right)$
for some $\gamma>0$ and $q\geq 0$, we can bound the recovery error. To do so, let $\widehat{\mathbf z}\in B_2^k(R)$ be an $\omega$-minimizer, meaning that
\begin{equation}\label{eq: omega minimizer}
  \left\|
  \mathbf y
  -
  \mathbf A_\Omega G\left(\widehat{\mathbf z},c_r\right)
  \right\|_2
  \le
  \inf_{\mathbf z\in B_2^k(R)}
  \left\|
  \mathbf y
  -
  \mathbf A_\Omega G\left(\mathbf z,c_r\right)
  \right\|_2
  +
  \omega,
\end{equation}
and likewise define $\widehat{\mathbf f}=G\left(\widehat{\mathbf z},c_r\right)$. Suppose the target signal $\mathbf f^*$ belongs to a different prompt-conditioned class $\mathbb F_{c_*}$. In this setting, $\Lambda(c_*,c_r,c_s)$ quantifies how the signal, recovery, and sampling prompt model classes interact, giving us the bound below:

\begin{theorem}[Prompt-Mismatched Recovery]
  \label{thm:prompt-mismatched-recovery}
 Let $c_r\in\mathcal C$ be the recovery prompt and let $\widehat{\mathbf z}$ denote an $\omega$-minimizer in the sense of Equation~\eqref{eq: omega minimizer} with estimator $\widehat{\mathbf f}=G(\widehat{\mathbf z},c_r)$. Let $c_s\in\mathcal C$ be the sampling prompt used to draw $\Omega$ from $\mu_{c_s}$ and build $\mathbf A_\Omega$, and let $c_* \in\mathcal C$ be the true-signal prompt so that $\mathbf f^* \in\mathbb{F}_{c_*}$.
  Suppose that $\mathbf A_\Omega$ satisfies $\mathrm{S\text{-}REC}\left(\mathbb{F}_{c_r},\gamma,q\right)$
  for some $\gamma>0$ and $q\geq 0$. Then
  \[
    \left\|\widehat{\mathbf f}-\mathbf f^*\right\|_2
    \le
    \left(
      1+
      \frac{2\sqrt{\Lambda\left(c_*,c_r,c_s\right)}}{\gamma}
    \right)
    \inf_{\mathbf f\in\mathbb{F}_{c_r}}
    \left\|\mathbf f^* - \mathbf f\right\|_2
    +
    \frac{2\left\|\mathbf e\right\|_2+\omega+q}{\gamma}.
  \]
\end{theorem}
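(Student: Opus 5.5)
The plan is to adapt the standard Bora et al.\ argument for generative compressed sensing. The one new ingredient is a deterministic upper bound on $\left\|\mathbf A_\Omega \mathbf h\right\|_2$ for cross-difference secants $\mathbf h\in\mathbb F_{c_*}-\mathbb F_{c_r}$. This bound holds for every realisation of $\Omega$ drawn from $\mu_{c_s}$, so no probability is involved beyond the assumed S-REC event.

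\textbf{Step 1 (upper isometry on cross-differences).} I would show that for every $\mathbf h\in\mathbb F_{c_*}-\mathbb F_{c_r}$ and every $\Omega$,
\[
\left\|\mathbf A_\Omega\mathbf h\right\|_2^2=\frac1m\sum_{j=1}^m\frac{\left|L_{I_j}(\mathbf h)\right|^2}{\mu_{c_s}(I_j)}\le\frac1m\sum_{j=1}^m\frac{K(\mathbb F_{c_*}-\mathbb F_{c_r})(I_j)}{\mu_{c_s}(I_j)}\left\|\mathbf h\right\|_2^2\le\Lambda(c_*,c_r,c_s)\left\|\mathbf h\right\|_2^2 .
\]
The first inequality is the definition of the Christoffel function; the case $\mathbf h=\mathbf 0$ is trivial. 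The second is Definition~\ref{def: prompt compatibility factor}. Assumption~\ref{assump: full support} ensures that every weight is finite.

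\textbf{Step 2 (choice of comparator).} Fix $\varepsilon>0$ and pick $\bar{\mathbf f}=G(\bar{\mathbf z},c_r)\in\mathbb F_{c_r}$ with $\bar{\mathbf z}\in B_2^k(R)$ and $\left\|\mathbf f^*-\bar{\mathbf f}\right\|_2\le\inf_{\mathbf f\in\mathbb F_{c_r}}\left\|\mathbf f^*-\mathbf f\right\|_2+\varepsilon$. I use $\varepsilon$-slack rather than an exact minimiser because continuity of $G(\cdot,c_r)$ is not assumed. The triangle inequality gives $\left\|\widehat{\mathbf f}-\mathbf f^*\right\|_2\le\left\|\widehat{\mathbf f}-\bar{\mathbf f}\right\|_2+\left\|\bar{\mathbf f}-\mathbf f^*\right\|_2$.

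\textbf{Step 3 (S-REC and the $\omega$-minimiser property).} Since $\widehat{\mathbf f},\bar{\mathbf f}\in\mathbb F_{c_r}$, the S-REC gives
\[
\gamma\left\|\widehat{\mathbf f}-\bar{\mathbf f}\right\|_2\le\left\|\mathbf A_\Omega(\widehat{\mathbf f}-\bar{\mathbf f})\right\|_2+q\le\left\|\mathbf y-\mathbf A_\Omega\widehat{\mathbf f}\right\|_2+\left\|\mathbf y-\mathbf A_\Omega\bar{\mathbf f}\right\|_2+q .
\]
By the $\omega$-minimiser property \eqref{eq: omega minimizer}, tested at $\bar{\mathbf z}$, this is at most $2\left\|\mathbf y-\mathbf A_\Omega\bar{\mathbf f}\right\|_2+\omega+q$. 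Writing $\mathbf y=\mathbf A_\Omega\mathbf f^*+\mathbf e$ then gives $\left\|\mathbf y-\mathbf A_\Omega\bar{\mathbf f}\right\|_2\le\left\|\mathbf A_\Omega(\mathbf f^*-\bar{\mathbf f})\right\|_2+\left\|\mathbf e\right\|_2$. Because $\mathbf f^*-\bar{\mathbf f}\in\mathbb F_{c_*}-\mathbb F_{c_r}$, Step 1 bounds this by $\sqrt{\Lambda(c_*,c_r,c_s)}\left\|\mathbf f^*-\bar{\mathbf f}\right\|_2+\left\|\mathbf e\right\|_2$.

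\textbf{Step 4 (combine).} Dividing by $\gamma$ and inserting into Step 2 yields
\[
\left\|\widehat{\mathbf f}-\mathbf f^*\right\|_2\le\left(1+\frac{2\sqrt{\Lambda(c_*,c_r,c_s)}}{\gamma}\right)\left\|\mathbf f^*-\bar{\mathbf f}\right\|_2+\frac{2\left\|\mathbf e\right\|_2+\omega+q}{\gamma}.
\]
Letting $\varepsilon\to0$ gives the claim.

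The only genuinely new point, and the step I expect to need care, is Step 1. The S-REC controls only the lower bound on the self-difference class $\mathbb F_{c_r}-\mathbb F_{c_r}$, whereas the approximation residual $\mathbf f^*-\bar{\mathbf f}$ lives in the cross-difference class. This is exactly why the compatibility factor appears evaluated at $(c_*,c_r,c_s)$ rather than at $(c_r,c_r,c_s)$.
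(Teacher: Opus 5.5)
Your proposal is correct and follows essentially the same route as the paper. The paper first proves the general agnostic bound $\|\widehat{\mathbf f}-\mathbf f^*\|_2\le\inf_{\mathbf f\in\mathbb F_{c_r}}\bigl[\|\mathbf f^*-\mathbf f\|_2+\frac{2}{\gamma}\|\mathbf A_\Omega(\mathbf f^*-\mathbf f)\|_2\bigr]+\frac{2\|\mathbf e\|_2+\omega+q}{\gamma}$ for an arbitrary comparator, then inserts the same cross-class bound $\|\mathbf A_\Omega\mathbf h\|_2\le\sqrt{\Lambda(c_*,c_r,c_s)}\,\|\mathbf h\|_2$ as your Step 1, so your $\varepsilon$-near-minimizer is only a cosmetic inlining of taking the infimum at the end.
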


Substituting Theorem~\ref{thm:relu-srec} yields $\gamma=\sqrt{1-\tau}$ and $q=0$, while substituting Theorem~\ref{thm:lipschitz-srec} yields $\gamma=\sqrt{1-\tau}$ and $q=\sqrt{1-\tau}\xi$, producing an additional additive error term of $\xi$. The recovery bound decomposes into an approximation error over the recovery class $\mathbb F_{c_r}$, an amplification factor governed by $\Lambda(c_*,c_r,c_s)$, and an additive term due to measurement noise, optimization error, and S-REC slack. If $\mathbf f^* \in \mathbb F_{c_r}$, this approximation term vanishes. Otherwise, the mismatch is controlled by the cross-class Christoffel function $K(\mathbb F_{c_*}-\mathbb F_{c_r})$ through $\Lambda(c_*,c_r,c_s)$, which measures how well $\mu_{c_s}$ covers Fourier directions separating the true-signal and recovery classes.

\section{Experiments}\label{sec: experiments}
In this section we examine both the prompt compatibility factor $\Lambda$ and its practical effect on conditional generative compressed sensing with Stable Diffusion 1.5 (SD15)~\cite{rombach2022high} through the Diffusers HuggingFace library~\cite{von-platen-etal-2022-diffusers}. Throughout, SD15 is used at resolution $512\times 512$ with $20$ DDIM~\cite{song2021denoising} denoising steps and classifier-free guidance (CFG)~\cite{ho2021classifierfree} of $7.5$ to produce signals $\mathbf{f} \in \mathbb{R}^{3 \times 512 \times 512}$ from a latent variable $\mathbf z \in \mathbb R^{4\times 64\times 64}$. In the reconstruction experiments we use the unweighted, unnormalized,  subsampled  Fourier operator
\[
  \mathbf A_{\Omega}
  =
  \tfrac{1}{\sqrt{m}} \mathbf P_{\Omega} \mathbf F,
  \qquad
  \mathbf y
  =
  \mathbf A_{\Omega} \mathbf f^*,
\]
so the prompt dependence enters through the selected Fourier mask $\Omega$. We do not introduce noise to the measurements. Since the images are multi-channel, we use a block Fourier sampling model: each sampled frequency in $\Omega$ is measured in all three color channels. We write $n=512\times 512$ for the size of the flattened image per channel, so the full ambient signal dimension is $3n$. To facilitate reproducibility, we provide the code for computing the empirical Christoffel sampling measures and running the reconstruction experiments.\footnote{Code available at: \href{https://github.com/alexdelise/ActiveConditionalGCS.git}{\texttt{github.com/alexdelise/ActiveConditionalGCS}}}

\subsection{Empirical Investigation into the Prompt Compatibility Factor  \texorpdfstring{$\Lambda$}{Lambda}}
We begin by studying how the prompt interaction factor changes under sampling-recovery mismatch $c_s \neq c_r$. To isolate this effect, we consider the prompt family
\begin{equation}\label{eq: prompt family}
    c_{\mathrm{uc}} = \texttt{""}, \qquad
    c_{\mathrm{db}} = \texttt{"daytime beach"}, \qquad
    c_{\mathrm{sb}} = \texttt{"sunset beach"}, \qquad
    c_{\mathrm{ca}} = \texttt{"cat"}.
\end{equation}
For each sampling prompt $c_s$, we use Algorithm~\ref{alg: ktilde} with $500$ iterations to estimate the empirical Christoffel function $\widetilde K$, construct the empirical sampling law $\widetilde\mu_{c_s}$, and compute $\widetilde{\Lambda}(c_r,c_r,c_s)$ following Equation~\eqref{eq: empirical lambda}. This lets us compare the prompt-matched regime $c_s=c_r$, an unconditioned baseline $c_s=c_{\mathrm{uc}}$, a mild semantic mismatch such as $c_{\mathrm{db}}$ versus $c_{\mathrm{sb}}$, and a severe mismatch induced by the unrelated class $c_{\mathrm{ca}}$. 

Figure~\ref{fig: distribution and heatmap} displays the resulting prompt-conditioned sampling distributions and the corresponding relative values of $\widetilde{\Lambda}$ to quantify how rapidly sample complexity deteriorates as the sensing prompt moves away from the recovery class. Immediately, one can see in Figure~\ref{fig: sampling_distributions} that the prompt-conditioned sampling distributions concentrate probability mass differently, but share a common bias toward low frequencies. The heatmap in Figure~\ref{fig: lambda_heatmap} makes the interaction between these differences readily apparent as the diagonal entries, corresponding to a matched sampling-recovery regime ($c_s = c_r$) minimize $\widetilde \Lambda$, with larger penalties occurring for semantically distant pairs in the sample-recovery mismatch regime, e.g. $c_{\mathrm{ca}}$ vs. $c_{\mathrm{uc}}$.

\begin{figure}
    \centering

    \begin{subfigure}{0.45\linewidth}
        \centering
        \includegraphics[width=\linewidth]{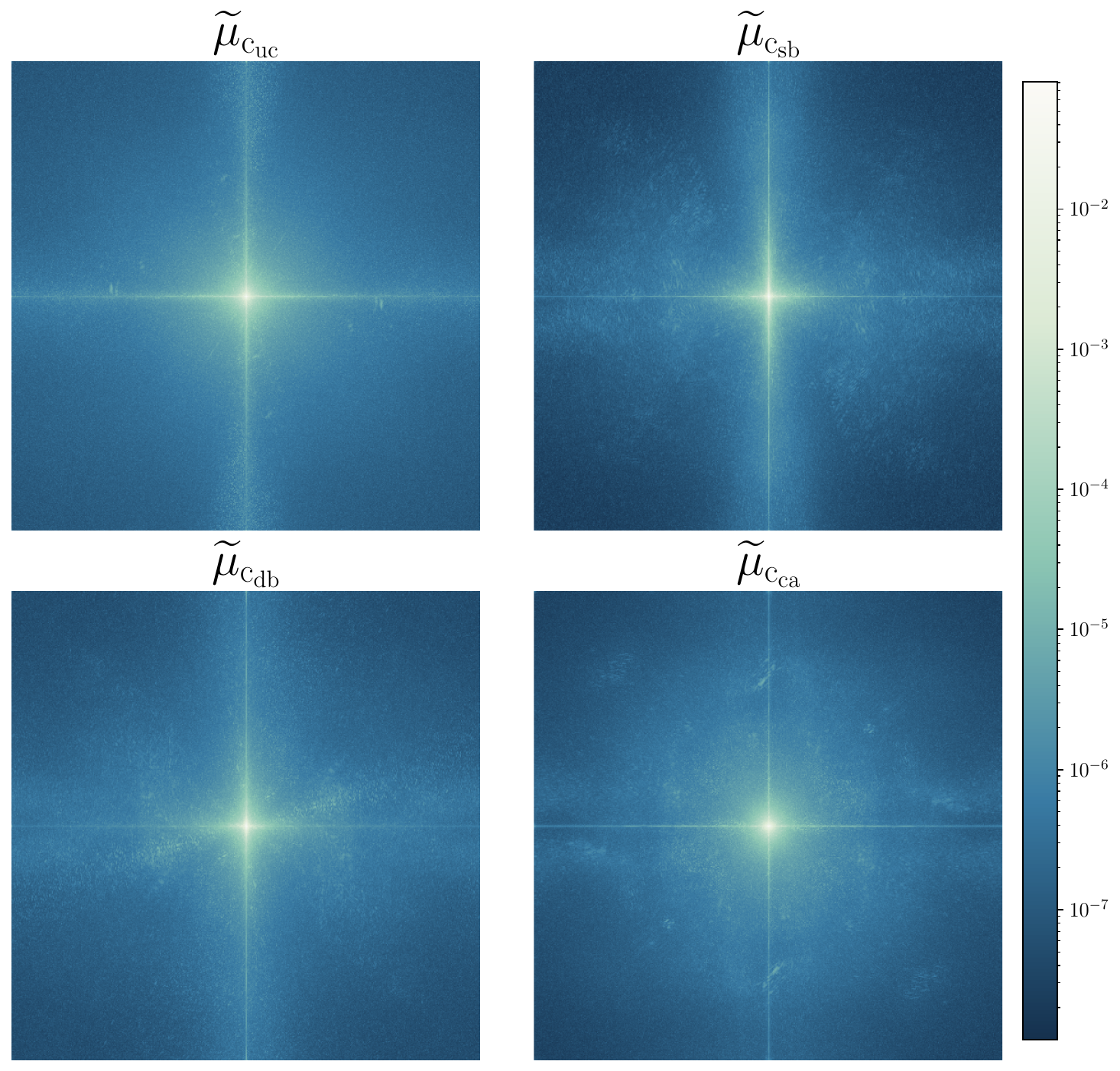}
        \caption{}
        \label{fig: sampling_distributions}
    \end{subfigure}
    \hfill
    \begin{subfigure}{0.54\linewidth}
        \centering
        \includegraphics[width=\linewidth]{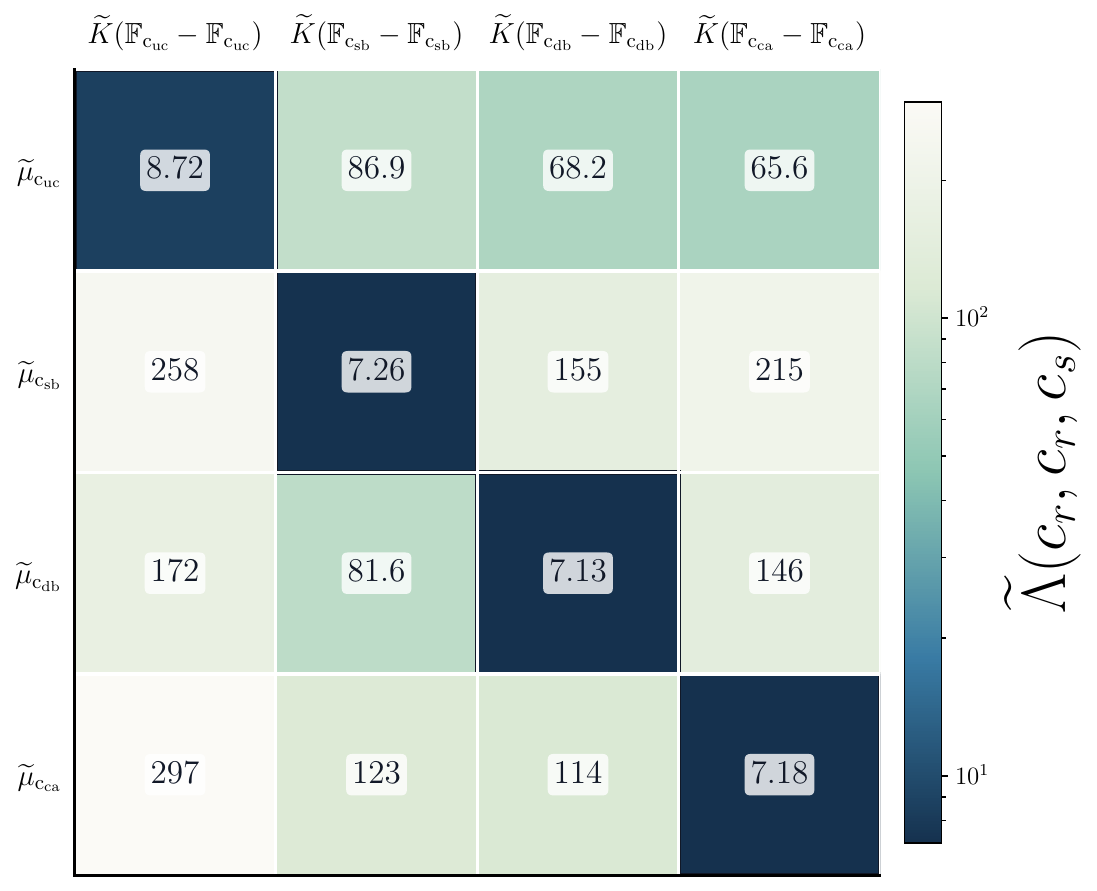}
        \caption{}
        \label{fig: lambda_heatmap}
    \end{subfigure}

    \caption{
    \textbf{(a)} Empirical sampling distributions $\widetilde{\mu}_c$ induced by the Christoffel function for prompts $c \in \{c_{\mathrm{uc}}, c_{\mathrm{sb}}, c_{\mathrm{db}}, c_{\mathrm{ca}}\}$. Color indicates sampling probability over Fourier frequencies.
    \textbf{(b)} Empirical prompt compatibility factor $\widetilde{\Lambda}(c_r,c_r,c_s)$ (rows: sampling, columns: recovery). Diagonal entries that represent $\widetilde \kappa$ are minimal, while off-diagonal entries indicate the sampling-recovery mismatch penalty.
    }
    \label{fig: distribution and heatmap}
\end{figure}

\subsection{Conditional Generative Compressed Sensing}
We next connect these prompt-conditioned sampling laws to actual reconstruction performance. For each pair of sampling prompt $c_s$ and recovery prompt $c_r$ in Equation~\eqref{eq: prompt family}, we reconstruct the same target image across sampling rates $m / n$ using $5$ independently drawn masks per setting. For each mask, we always sample the $0$ frequency, a standard assumption in compressive imaging~\cite{adcock2021compressive}, and then sample the remaining $m-1$ frequencies via $\widetilde \mu_{c_s}$ without replacement.

Given an image $\mathbf{f}^*$ and measurements $\mathbf y = \mathbf A_{\Omega} \mathbf f^*$, we reconstruct under prompt $c_r$ by seeking $\widehat{\mathbf f}=G(\widehat{\mathbf z},c_r)$, where $\widehat{\mathbf z}$ is obtained from the least-squares problem
\begin{equation}\label{eq: least squares}
  \widehat{\mathbf z}
  \in
  \arg\min_{\mathbf z}
  \frac{1}{2 \cdot 3m}
  \left\|
    \mathbf A_{\Omega} G(\mathbf z,c_r) - \mathbf y
  \right\|_2^2.
\end{equation}

We initialize the generator from the zero-filled inverse Fourier transform of our subsampled measurement
$
  \mathbf f_{\mathrm{zf}}
  =
  \tfrac{m}{n}\mathbf A_{\Omega}^H \mathbf y,
$
where $H$ denotes the Hermitian adjoint, by encoding $\mathbf f_{\mathrm{zf}}$ into latent space and then progressively adding noise to obtain $\mathbf z_{\mathrm{init}}$. Intuitively, this initialization anchors the reconstruction to the measured image content, then lifts it to the noise scale used at the first reverse diffusion step. We treat the composition of the $20$-step DDIM denoising process and final VAE decode as the generative model $G(\cdot,c_r)$. After sampling $\Omega \sim \widetilde{\mu}_{c_s}$, we solve Equation~\eqref{eq: least squares} by optimizing using Adam~\cite{kingma2015adam} in PyTorch with base learning rate $10^{-2}$, cosine decay, gradient clipping at $1.0$, and early stopping after $35$ stagnant optimization steps, for at most $500$ steps. All experiments were performed across five NVIDIA RTX A5000 GPUs with 24GB VRAM for a total runtime of approximately ten days. To see the ground truth images as well as example reconstructions under each sampling-recovery prompt pair in the experiments, we refer the reader to Appendix~\ref{app: recovered img comp}. Solid lines in figures denote mean reconstruction metrics across the five independent trials at each sampling percentage, with shading denoting $95\%$ confidence intervals computed from the standard error of the mean.

\subsubsection{In-Range Prompt-Mismatched Image Recovery}\label{subsec: in-range diff prompt}
In this scenario, the target image we seek to reconstruct is generated by SD15, using the prompt $c_* = \texttt{"sunset over a sandy coast"}$, so
$
  \mathbf f^*
  =
  G(\mathbf z^*, c_*)
$
for some latent vector $\mathbf z^*$. Since $c_*$ does not appear in Equation~\eqref{eq: prompt family}, we consider this an \emph{in-range prompt-mismatched image recovery} task. Figure~\ref{fig: in-range prompt-mismatch recon metrics} demonstrates how sampling from $\widetilde \mu_{c_\mathrm{sb}}$ improves reconstruction performance compared to sampling from the other distributions, as is expected due to the semantic similarity of $c_\mathrm{sb}$ to $c_*$. Under unconditioned recovery ($c_r = c_{\mathrm{uc}}$), sampling from $\widetilde \mu_{c_\mathrm{sb}}$ improves PSNR relative to the average of the other sampling laws at every tested sampling percentage, with gains ranging from $0.62$ to $2.96$ dB, and with corresponding SSIM gains ranging from $0.0034$ to $0.0469$. Since we see best performance across all sampling distributions with unconditioned reconstruction, this suggests that an unconditioned generator may be more flexible during reconstruction, especially with CFG $7.5$. All sampling-reconstruction prompt pairs far outperform the zero-filled image baseline in terms of mean SSIM, indicating that the generator provides a useful image prior even under prompt mismatch.

\begin{figure}
    \centering
    \includegraphics[width=1.0\linewidth]{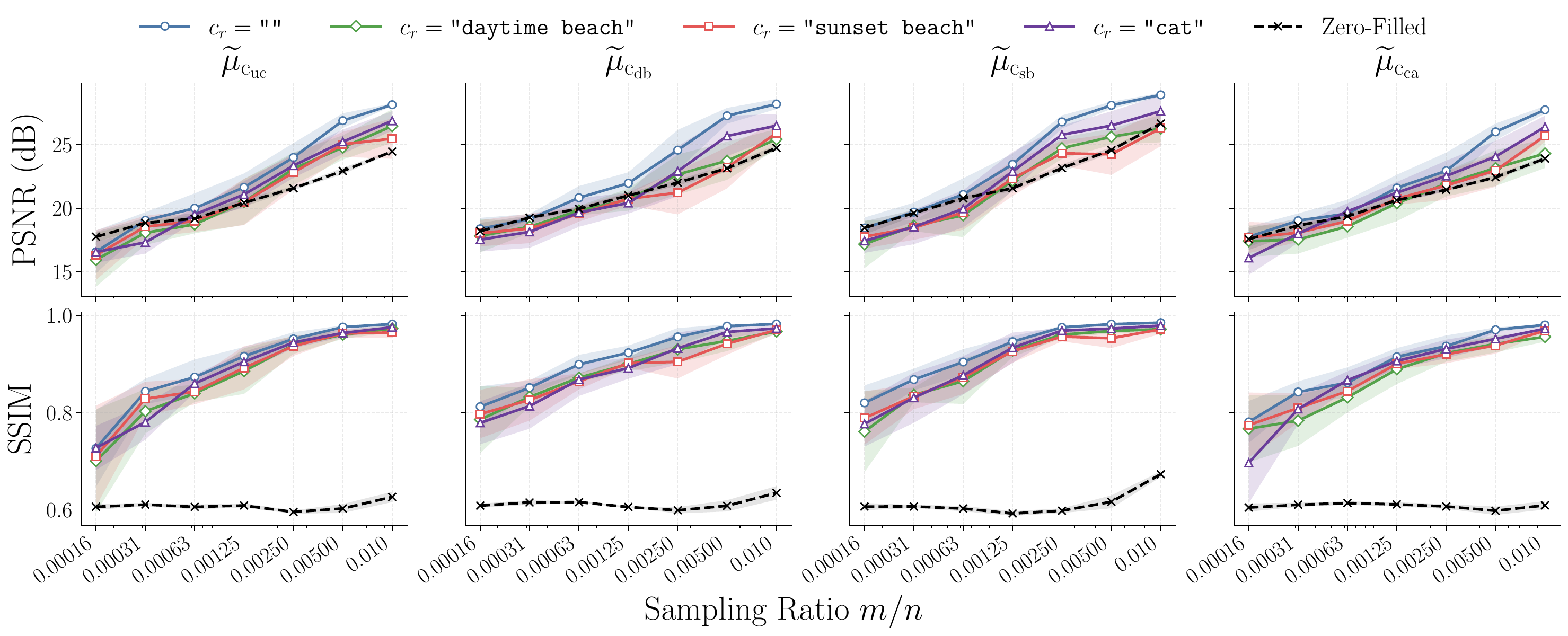}
    \caption{Reconstruction quality across sampling percentages by sampling distribution for the in-range prompt-mismatched experiment.}
    \label{fig: in-range prompt-mismatch recon metrics}
\end{figure}

\subsubsection{Out-of-Range Image Recovery}\label{subsec: out of range}
For the \emph{out-of-range} image recovery task, we follow the same procedure, although this time the sensed image is obtained from~\cite{magnificSunsetBeachPalmTree}. From Figure~\ref{fig: out-of-range recon metrics}, we observe that the out-of-range reconstructions are worse than the in-range prompt-mismatched reconstructions in PSNR, while SSIM is comparable and in some cases slightly higher. Paired against the in-range prompt-mismatched experiment the out-of-range reconstructions have $3.50$ dB lower mean PSNR, while mean SSIM is slightly higher by $0.0094$. As before, we observe that reconstruction over $c_r = c_{\mathrm{uc}}$ provides the best results, however, the gain achieved from sampling from $\widetilde\mu_{c_\mathrm{sb}}$ is diminished here. Under unconditioned recovery, the PSNR gain from $\widetilde\mu_{c_\mathrm{sb}}$ relative to the average of the other sampling laws drops from $1.34$ dB in the in-range prompt-mismatched experiment to $0.13$ dB in the out-of-range experiment. We attribute this to the fact that $\widetilde\mu_{c_\mathrm{sb}}$ is constructed from images within the range of the generator, and thus may not emphasize Fourier frequencies most relevant for reconstructing signals that lie outside this range.

\begin{figure}
    \centering
    \includegraphics[width=1.0\linewidth]{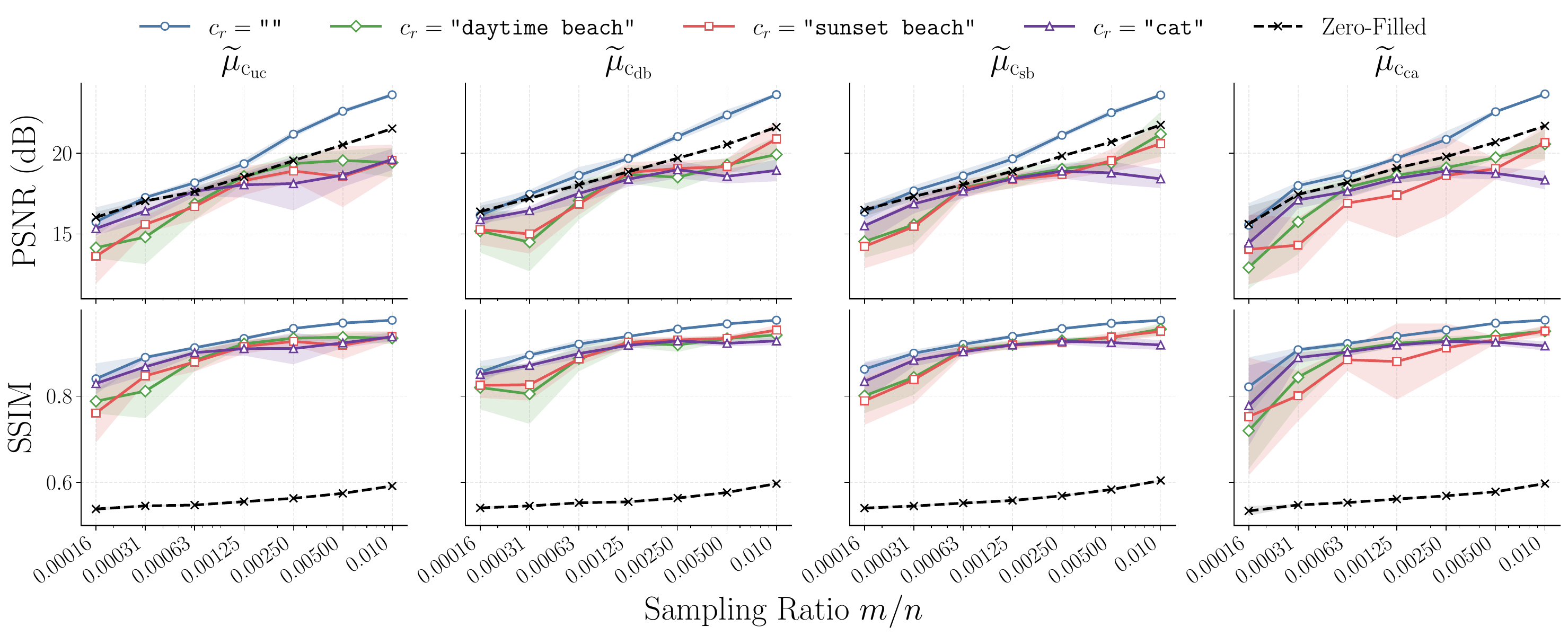}
    \caption{Reconstruction quality across sampling percentages by sampling distribution for the out-of-range experiment.}
    \label{fig: out-of-range recon metrics}
\end{figure}

\subsection{Results Discussion}
Taken together, the experiments support the value of prompt-conditioned sampling while also highlighting limitations of the current experimental pipeline.
The empirical prompt compatibility factor $\widetilde{\Lambda}$ values indicate that sampling-recovery mismatch is indeed reflected in the prompt-conditioned sampling laws, with matched prompt pairs producing smaller compatibility penalties. This aligns with the theory, where $\Lambda(c_r,c_r,c_s)$ controls the sample complexity  for stable recovery. However, the reconstruction results do not perfectly follow the ordering suggested by this factor. In particular, reconstruction with $c_r=c_{\mathrm{uc}}$ consistently performs best when sampling from all distributions, even over reconstructing with $c_r = c_{\mathrm{sb}}$. 
We attribute this to three main effects. First, the experiments use an unweighted least-squares objective rather than the weighted objective analyzed in Section~\ref{sec: theory}, so rare but high-leverage frequencies are not amplified according to $1/\mu_{c_s}(i)$ and may be underfit in practice. Second, the sampling laws are based on finite Monte Carlo estimates of the Christoffel function, which can underestimate worst-case Fourier directions. Since $\Lambda$ is a max-ratio quantity, small errors in the tails of $\widetilde{\mu}_{c_s}$ can produce large practical mismatch penalties. Third, the latent optimization problem is highly nonconvex and optimizer-dependent. The unconditional model may provide a less restrictive and more easily optimized reconstruction class, especially under high CFG, even if it is not theoretically the best-matched prompt class. For further exploration on the role of CFG in sampling and reconstruction, see Appendix~\ref{app: cfg ablation}.

\section{Conclusion}\label{sec: conclusion}

We develop an active learning framework for conditional generative compressed sensing in which prompt-conditioned generative models are used for signal recovery and constructing sampling distributions for subsampled Fourier measurements. We introduce a prompt compatibility factor $\Lambda$ that quantifies alignment between the sampling, recovery, and ground-truth prompt-conditioned signal classes, and demonstrate how it governs sample complexity under ReLU and Lipschitz architecture assumptions and bounds reconstruction error. Experiments with Stable Diffusion show that prompt conditioning can substantially alter the induced Christoffel sampling distributions and can influence reconstruction performance under subsampled Fourier measurements.

Our work suggests several directions for future research. Since the prompt compatibility factor $\Lambda$ is defined through a max-divergence quantity, future work could investigate KL- or R\'{e}nyi divergence-versions of $\Lambda$ and the resulting sample complexity for (nonuniform) recovery. Also, our theory applies to idealized generator classes and is not yet well suited to modern conditional generative models. Although a model such as SD15 can be viewed as inducing a Lipschitz map on compact latent sets after fixing the prompt embedding, sampler, denoising schedule, and CFG scale, this remains an idealized abstraction of practical recovery. The effective Lipschitz constant may be large and sensitive to modeling choices such as CFG scale, suggesting that such parameters, together with prompts, should be studied as design variables for the sampling and recovery classes.

\bibliographystyle{unsrtnat}
\bibliography{references}


\appendix
\newpage\section{Christoffel Sampling for Machine Learning (CS4ML)}\label{app:cs4ml}

In this appendix, we fully outline the abstract CS4ML framework in~\cite{adcock2023cs4ml} we harness for Christoffel sampling and bridge their objects to our weighted subsampled conditional Fourier model.

\subsection{Objects, Definitions, and Assumptions}
We begin with an underlying probability space $(\Omega,\mathcal{F},\mathbb{P})$, a separable Hilbert space $\mathbb{X}$, and a normed vector subspace $\mathbb{X}_0 \subseteq \mathbb{X}$, termed the \emph{object space}. Our goal is to learn an object $\mathbf{f}^* \in \mathbb{X}_0$ from training data. We consider the measurement domain $(D,\mathcal{A},\rho)$, the semi-inner-product measurement space $\mathbb{Y}$, and the sampling operator $L : D \to \mathcal B(\mathbb{X}_0,\mathbb{Y})$, where $\mathcal B(\mathbb{X}_0,\mathbb{Y})$ denotes the space of bounded linear operators from $\mathbb{X}_0$ to $\mathbb{Y}$. Thus, for each $i \in D$, the quantity $L(i)(\mathbf{f}^*) \in \mathbb{Y}$ represents the measurement of $\mathbf{f}^*$ at parameter value $i$ (where we employ the classical active learning assumption that we may query $L$ for any $i \in D$). We also fix an \emph{approximation space} $\mathbb{F} \subseteq \mathbb{X}_0$ within which reconstruction is sought, together with a sampling measure $\mu$ on $(D,\mathcal{A})$ that is absolutely continuous with respect to $\rho$, with Radon--Nikodym derivative $\nu = d\mu/d\rho > 0$.

To ensure that the sampling operators are sufficiently expressive to preserve the $\mathbb{X}$-norm we make the following assumption.

\begin{assumption}[Nondegeneracy of the sampling operators]
    There exist constants $0<\alpha\leq \beta<\infty$ such that
    \[
      \alpha\left\|\mathbf{f}\right\|_{\mathbb{X}}^2
      \leq
      \int_{D}
      \left\|L(i)(f)\right\|_{\mathbb{Y}}^2
      \, d\rho(i)
      \leq
      \beta\left\|\mathbf{f}\right\|_{\mathbb{X}}^2,
      \qquad
      \mathbf{f}\in\mathbb{X}_0,
      \qquad 
      i \in D.
    \]
\end{assumption}
Later, we will define the discrete analogue of nondegeneracy in which we hope to approximately preserve norms.

For a fixed space $\mathbb{F}$, the generalized Christoffel function and its related quantity are defined as
\[
  K(\mathbb{F})(i)
  =
  \sup_{\mathbf{f}\in \mathbb{F}\setminus\{0\}}
  \frac{\left\|L(i)(\mathbf{f})\right\|_{\mathbb{Y}}^2}
       {\left\|\mathbf{f}\right\|_{\mathbb{X}}^2},
  \qquad
  \kappa(\mathbb{F})
  =
  \int_{D} K(\mathbb{F})(i)\, d\rho(i),
  \qquad 
  i \in D.
\]
The corresponding optimal sampling density is
\[
  \nu^*(i)
  =
  \frac{K(\mathbb{F})(i)}{\kappa(\mathbb{F})},
  \qquad 
  i \in D,
\]
 given by~\cite[Lemma~4.6]{adcock2023cs4ml}. Equivalently, the optimal sampling measure $\mu^*$ is the measure with density $\nu^*$ with respect to $\rho$. Finally, the sampling seminorm for a density $\nu$ is given by
\[
  \snorm{\mathbf{f}}{\nu}
  =
  \left(
    \operatorname*{ess\,sup}_{i \sim \rho}
    \frac{\left\|L(i)(\mathbf{f})\right\|_{\mathbb{Y}}^2}{\nu(i)}
  \right)^{1/2}.
\]

\subsection{Specialization to Weighted Subsampled Fourier Measurements}

For our paper, the aforementioned objects above collapse to the following for subsampled Fourier measurements:
\[
  \mathbb{X}=\mathbb{X}_0=\mathbb{R}^n,
  \qquad
  \left\|\cdot\right\|_{\mathbb{X}}=\left\|\cdot\right\|_2,
  \qquad
  D=\{1,\dots,n\},
\]
\[
  \rho=\text{counting measure on }D,
  \qquad
  \mathbb{Y}=\mathbb{C}
  \text{ with }
  \left\|z\right\|_{\mathbb{Y}}=|z|,
  \qquad
  L(i)(\mathbf f)=\mathbf P_i\mathbf F\mathbf f.
\]
Since $\mathbf{F}$ is the unitary discrete Fourier transform, Parseval's theorem gives
\[
  \sum_{i=1}^n |L(i)(\mathbf f)|^2
  =
  \sum_{i=1}^n \left|\mathbf P_i\mathbf F\mathbf f\right|^2
  =
  \left\|\mathbf f\right\|_2^2,
  \qquad
  \mathbf f\in\mathbb{R}^n,
\]
so the nondegeneracy constants are $\alpha=\beta=1$. Also, as $D$ is finite, we make the following assumption:

\begin{assumption}
  \label{assump: full-support-general}
  For every sampling prompt $c\in\mathcal C$, the associated sampling law
  $\mu_c$ is a probability vector on $D=\{1,\dots,n\}$ satisfying
  \[
    \mu_c(i)>0,
    \qquad i\in D.
  \]
  Equivalently, if $\nu_c=d\mu_c/d\rho$ denotes the density of $\mu_c$
  with respect to counting measure $\rho$, then
  \[
    \nu_c(i)>0,
    \qquad i\in D.
  \]
\end{assumption}

\subsection{Prompt-Conditioned Discrete Analogues}

We now return to the prompt-conditioned recovery setting from the main text by fixing a recovery prompt $c_r$ and applying the CS4ML machinery to the secant class $\mathbb{F}_{c_r}-\mathbb{F}_{c_r}$. Under this specialization, the optimal density $\nu^*$ becomes the Christoffel sampling law $\mu_{c_s}$, and thus
\[
  \operatorname*{ess\,sup}_{i\sim\rho}
  \frac{K(\mathbb{F}_{c_r}-\mathbb{F}_{c_r})(i)}{\nu^*(i)}
  =
  \max_{i \in D}
  \frac{K(\mathbb{F}_{c_r}-\mathbb{F}_{c_r})(i)}{\mu_{c_s}(i)}
  =
  \Lambda(c_r,c_r,c_s).
\]
The optimal sampling seminorm reduces to
\[
  \snorm{\mathbf f}{\nu^*}
  =
   \left(
    \operatorname*{ess\,sup}_{i \sim \rho}
    \frac{\left\|L(i)(\mathbf{f})\right\|_{\mathbb{Y}}^2}{\nu(i)}
  \right)^{1/2} 
  = 
  \left(
    \max_{i \in D}
    \frac{|\mathbf P_i\mathbf F\mathbf f|^2}{\mu_{c_s}(i)}
  \right)^{1/2}
  =
  \snorm{\mathbf f}{\mu_{c_s}},
\]
and we have that
\[
  \frac{1}{m}\sum_{j=1}^m
  \frac{|L(I_j)(\mathbf f)|^2}{\mu_{c_s}(I_j)}
  =
  \frac{1}{m}\sum_{j=1}^m
  \frac{|\mathbf P_{I_j}\mathbf F\mathbf f|^2}{\mu_{c_s}(I_j)}
  =
  \left\|\mathbf A_\Omega \mathbf f\right\|_2^2.
\]

We also write
\[
  \mathbb B_{c_r}
  =
  \left\{
    \mathbf h/\left\|\mathbf h\right\|_2
    :
    \mathbf h\in
    \left(\mathbb F_{c_r}-\mathbb F_{c_r}\right)\setminus\{\mathbf 0\}
  \right\}
\]
for the normalized recovery secant set.

 Under this specialization, empirical nondegeneracy is given by the following, which we repeat for completeness:

\begin{definition}[Empirical Nondegeneracy]
  Let $0<\tau<1$. We say that $\mathbf A_\Omega$ is \emph{empirically nondegenerate} on $\mathbb{F}_{c_r}-\mathbb{F}_{c_r}$ with distortion $\tau$ if
  \[
    \left(1-\tau\right)\left\|\mathbf f\right\|_2^2
    \leq
    \left\|\mathbf A_\Omega \mathbf f\right\|_2^2
    \leq
    \left(1+\tau\right)\left\|\mathbf f\right\|_2^2,
    \qquad
    \mathbf f\in \mathbb{F}_{c_r}-\mathbb{F}_{c_r}.
  \]
\end{definition}

Finally, the theorem below is the discrete subsampled Fourier measurement sample-complexity version of \cite[Theorem~4.2]{adcock2023cs4ml}, which will be particularly useful when proving Theorem~\ref{thm:relu-srec}.

\begin{theorem}[Empirical Nondegeneracy and S-REC Under Christoffel Sampling]
  \label{thm:christoffel-srec}
  Fix $0<\tau,\delta<1$, and let $\mathcal N\subseteq \mathbb{B}_{c_r}$ be an $\eta$-net for $\mathbb{B}_{c_r}$ in the sampling seminorm $\snorm{\cdot}{\mu_{c_s}}$ with $0<\eta\leq \tfrac{\tau}{8}$. If
  \[
    m \gtrsim \tau^{-2}\Lambda\left(c_r,c_r,c_s\right)\log\left(\frac{2\left|\mathcal N\right|}{\delta}\right),
  \]
  then with probability at least $1-\delta$ over the draw of $\Omega$, $\mathbf A_\Omega$ is empirically nondegenerate on $\mathbb{F}_{c_r}-\mathbb{F}_{c_r}$ with distortion $\tau$, and hence satisfies
  $
  \mathrm{S\text{-}REC}\left(\mathbb{F}_{c_r},\sqrt{1-\tau},0\right).
  $
\end{theorem}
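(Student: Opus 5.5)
We prove Theorem~\ref{thm:christoffel-srec} by specializing the matrix-free concentration argument behind \cite[Theorem~4.2]{adcock2023cs4ml}. The first step is to turn the finite net $\mathcal N$ into a union bound, and then to extend from the net to all of $\mathbb B_{c_r}$ using the sampling seminorm. Since every $\mathbf h\in\mathbb F_{c_r}-\mathbb F_{c_r}$ is a positive multiple of an element of $\mathbb B_{c_r}$, and both sides of the empirical nondegeneracy inequality are $2$-homogeneous, it suffices to prove
\[
\bigl|\|\mathbf A_\Omega \mathbf g\|_2^2-1\bigr|\le\tau \quad\text{for all } \mathbf g\in\mathbb B_{c_r}.
\]
The S-REC conclusion then follows from Proposition~\ref{prop: end implies srec}.

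\emph{Concentration on a fixed unit secant.} Fix $\mathbf g\in\mathcal N$ and set
\[
X_j=\frac{|\mathbf P_{I_j}\mathbf F\mathbf g|^2}{\mu_{c_s}(I_j)}.
\]
The $X_j$ are i.i.d. By Parseval and full support (Assumption~\ref{assump: full support}), $\mathbb E X_j=\sum_i|\mathbf P_i\mathbf F\mathbf g|^2=\|\mathbf g\|_2^2=1$, and $\frac1m\sum_j X_j=\|\mathbf A_\Omega\mathbf g\|_2^2$. Because $\mathbf g$ is a normalized secant of $\mathbb F_{c_r}$, the definition of $K$ gives $|\mathbf P_i\mathbf F\mathbf g|^2\le K(\mathbb F_{c_r}-\mathbb F_{c_r})(i)$. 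Hence $0\le X_j\le\Lambda(c_r,c_r,c_s)=:\Lambda$ almost surely, and $\operatorname{Var}X_j\le\mathbb E X_j^2\le\Lambda\,\mathbb E X_j=\Lambda$. Bernstein's inequality then yields
\[
\mathbb P\Bigl(\bigl|\|\mathbf A_\Omega\mathbf g\|_2^2-1\bigr|>\tfrac{\tau}{2}\Bigr)\le 2\exp\!\bigl(-c\,m\tau^2/\Lambda\bigr),
\]
using $\tau<1$ to absorb the linear term. A union bound over $\mathcal N$ shows that the condition $m\gtrsim\tau^{-2}\Lambda\log(2|\mathcal N|/\delta)$ makes this hold simultaneously on $\mathcal N$ with probability at least $1-\delta$.

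\emph{Extension from the net.} This is the main obstacle, since $\mathbb B_{c_r}$ is not a linear space and the net is taken in $\snorm{\cdot}{\mu_{c_s}}$ rather than in $\ell_2$. The key observation is that $\snorm{\cdot}{\mu_{c_s}}$ controls both relevant quantities. First, $\|\mathbf A_\Omega\mathbf v\|_2^2$ is an average of $|\mathbf P_{I_j}\mathbf F\mathbf v|^2/\mu_{c_s}(I_j)\le\snorm{\mathbf v}{\mu_{c_s}}^2$, so $\|\mathbf A_\Omega\mathbf v\|_2\le\snorm{\mathbf v}{\mu_{c_s}}$ deterministically for every $\mathbf v$. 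Second, since $\mu_{c_s}$ is a probability vector, $\|\mathbf v\|_2^2=\sum_i|\mathbf P_i\mathbf F\mathbf v|^2\le\snorm{\mathbf v}{\mu_{c_s}}^2$.

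Now, for $\mathbf g\in\mathbb B_{c_r}$, pick $\mathbf g'\in\mathcal N$ with $\snorm{\mathbf g-\mathbf g'}{\mu_{c_s}}\le\eta$. By the triangle inequality,
\[
\bigl|\|\mathbf A_\Omega\mathbf g\|_2-\|\mathbf A_\Omega\mathbf g'\|_2\bigr|\le\eta.
\]
On the good event, $\|\mathbf A_\Omega\mathbf g'\|_2\in[\sqrt{1-\tau/2},\sqrt{1+\tau/2}]$. Squaring, and using $\|\mathbf A_\Omega\mathbf g'\|_2\le 2$ together with $\eta\le\tau/8$, gives
\[
\bigl|\|\mathbf A_\Omega\mathbf g\|_2^2-1\bigr|\le\tfrac{\tau}{2}+2\cdot 2\eta+\eta^2\le\tau.
\]
The point is that the deterministic Lipschitz bound avoids any chaining or self-referential bootstrap argument. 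We only need to check that the constants close, which is exactly why $\eta\le\tau/8$ is imposed. The constant $c$ from Bernstein is absorbed into $\gtrsim$.

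Combining the two steps gives empirical nondegeneracy with distortion $\tau$ on $\mathbb F_{c_r}-\mathbb F_{c_r}$. Proposition~\ref{prop: end implies srec} then yields $\mathrm{S\text{-}REC}(\mathbb F_{c_r},\sqrt{1-\tau},0)$.
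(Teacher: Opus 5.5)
Your proposal is correct and is essentially the paper's proof. It applies Bernstein on a fixed normalized secant with $0\le X_j\le\Lambda(c_r,c_r,c_s)$ at level $\tau/2$ (Lemma~\ref{lem:fixed-direction}), takes a union bound over $\mathcal N$, extends from the net via the deterministic domination $\|\mathbf A_\Omega\mathbf v\|_2\le\snorm{\mathbf v}{\mu_{c_s}}$ (Lemma~\ref{lem:seminorm-domination}), and finishes with a squaring step that uses $\eta\le\tau/8$.

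There is one arithmetic slip in that last step, affecting only the upper inequality. With the crude bound $\|\mathbf A_\Omega\mathbf g'\|_2\le 2$, your estimate $\tau/2+4\eta+\eta^2$ equals $\tau+\tau^2/64>\tau$ at $\eta=\tau/8$. Use $\|\mathbf A_\Omega\mathbf g'\|_2\le\sqrt{3/2}<5/4$ instead, as the paper does; this gives $\tau/2+\tfrac52\eta+\eta^2\le\tau/2+\tfrac{5\tau}{16}+\tfrac{\tau^2}{64}<\tau$.
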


We defer the proof to Appendix~\ref{app:theory-proofs}.

\newpage \section{Numerical Approximation to the Generalized Christoffel Function}\label{app: further experiment details}
In practice, the Christoffel function $K(\mathbb F_{c_s}-\mathbb F_{c_s})$ appearing in the sampling law in Equation~\eqref{eq: christoffel sampling law} is not computable exactly when the prompt-conditioned class $\mathbb F_{c_s}$ is specified as the range of a deep generator $G(\cdot,c_s)$. Indeed, evaluating $K(\mathbb F_{c_s}-\mathbb F_{c_s})(i)$ requires taking a supremum over all nonzero secants of $\mathbb F_{c_s}$, equivalently over pairs of latent vectors passed through $G(\cdot,c_s)$, which amounts to a global optimization problem that is intractable at scale. We therefore use Algorithm~\ref{alg: ktilde}, adapted from~\cite[Algorithm~1]{adcock2023cs4ml}, to construct an empirical estimator $\widetilde K$ of $K(\mathbb F_{c_s}-\mathbb F_{c_s})$. This estimator is straightforward to compute in practice and is intended to approximate, and in the large-sample limit recover, the true Christoffel function.

With $\widetilde K$ in hand, we define the empirical analogue of the Christoffel sampling law in Equation~\eqref{eq: christoffel sampling law} by
\begin{equation}\label{eq: empirical sampling law}
  \widetilde \mu_{c_s}(i)
  =
  \frac{\widetilde K(i)}{\widetilde \kappa_{c_s}},
  \qquad
  \widetilde \kappa_{c_s}
  =
  \sum_{j=1}^{n} \widetilde K(j),
  \qquad
  i \in D.
\end{equation}
This yields a computable approximation to the ideal sampling law $\mu_{c_s}$, and we then draw the measurement indices in $\Omega$ independently from $\widetilde \mu_{c_s}$. Finally, we can compute an empirical approximation to the prompt compatibility factor using the formula 
\begin{equation}\label{eq: empirical lambda}
 \widetilde\Lambda(c_1,c_2,c_3)
    =
    \max_{i \in D}
    \frac{\widetilde K(\mathbb{F}_{c_1}-\mathbb{F}_{c_2})(i)}{\widetilde \mu_{c_3}(i)}.
\end{equation}

\begin{algorithm}
\caption{Compute $\widetilde{K}$}
\label{alg: ktilde}
\begin{algorithmic}[1]
\REQUIRE Generator $G$, sampling prompt $c_s$, latent distribution $\pi_z$, number of trials $t$
\STATE Initialize $\widetilde K(i)\gets 0$ for $i=1,\dots,n$
\FOR{$\ell=1$ to $t$}
    \STATE Sample latent vectors $\mathbf z_1,\mathbf z_2 \sim \pi_z$
    \STATE Generate $\mathbf f_1 \gets G(\mathbf z_1,c_s)$ and $\mathbf f_2 \gets G(\mathbf z_2,c_s)$
    \STATE Set $\mathbf h \gets \mathbf f_1-\mathbf f_2$ and $\epsilon \gets \left\|\mathbf h\right\|_2$
    \IF{$\epsilon = 0$}
        \STATE \textbf{continue}
    \ENDIF
    \STATE Compute the discrete Fourier transform $\mathbf y \gets \mathbf F \mathbf h$
    \FOR{$i=1$ to $n$}
        \STATE $a_i \gets |y_i|^2 / \epsilon^2$
        \STATE $\widetilde K(i) \gets \max\{\widetilde K(i), a_i\}$
    \ENDFOR
\ENDFOR
\STATE Set $\widetilde \mu_{c_s}(i) \gets \widetilde K(i) / \sum_{j=1}^n \widetilde K(j)$ for $i=1,\dots,n$
\end{algorithmic}
\end{algorithm}

\newpage \section{In-Range Prompt-Matched Image Recovery Experiment}\label{app: in range prompt match exp}
We perform another in-range image recovery experiment where the target image is itself generated by SD15, however this time we generate with the prompt $c_* = c_r =  \texttt{"sunset beach"}$, so
$
  \mathbf f^*
  =
  G(\mathbf z^*, c_*)
$
for some latent vector $\mathbf z^*$. Since $c_*$ appears in the prompt family in Equation~\eqref{eq: prompt family}, we consider this an \emph{in-range prompt-matched} image recovery task.  After sampling $\Omega \sim \widetilde{\mu}_{c_s}$, we solve Equation~\eqref{eq: least squares} by optimizing using Adam with a learning rate of $10^{-2}$, gradient clipping at $1.0$,  and early stopping after 25 stagnant optimization steps, for at most $500$ steps. As before, we recover over all combinations of $c_s$ and $c_r$, where each prompt is given from the prompt family in Equation~\eqref{eq: prompt family} for five trials at each sampling ratio. Appendix~\ref{app: recovered img comp} records reconstruction examples, as well as the ground truth image for this experiment.

Figure~\ref{fig: in-range prompt-match recon metrics} depicts image reconstruction metrics under severe undersampling across sampling distributions and image reconstruction prompts. In the scenario where we both sample and recover over $c_s = c_r = c_*$, one would expect reconstruction performance to be optimal, however in practice we observe that this is not the case. For example, at sampling ratio $0.00125$, the matched pair $c_s=c_r=c_\mathrm{sb}$ ranks sixth out of the sixteen sampling-recovery prompt pairs in both PSNR and SSIM, while sampling from $\widetilde\mu_{c_\mathrm{sb}}$ and reconstructing with $c_r=c_\mathrm{uc}$ gives the best mean PSNR and SSIM. As mentioned before, this is likely due to the high CFG value of 7.5 set for the generator, which may induce too much bias during reconstruction. Nevertheless, we see improved reconstruction when sampling from $\widetilde \mu_{c_\mathrm{sb}}$ and reconstructing with prompt $c_{\mathrm{uc}}$, in which we also see a tight confidence interval, even at extreme undersampling. This suggests that $\widetilde \mu_{c_\mathrm{sb}}$ places greater emphasis on Fourier frequencies that are important for accurately reconstructing images generated with prompt $c_r = c_\mathrm{sb}$ as expected by its construction.

\begin{figure}[htp]
    \centering
    \includegraphics[width=1.0\linewidth]{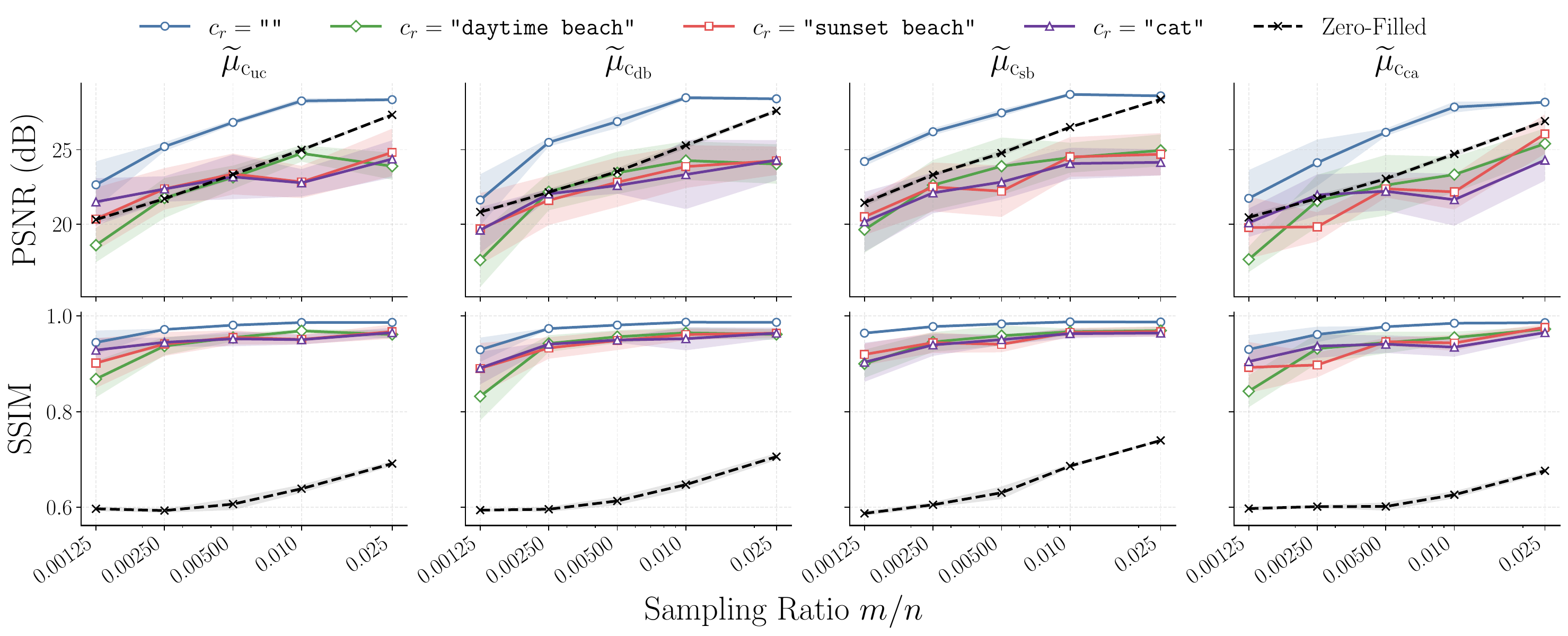}
    \caption{Reconstruction quality across sampling percentages by sampling distribution for the in-range prompt-matched experiment.}
    \label{fig: in-range prompt-match recon metrics}
\end{figure}
\newpage\section{CFG Ablation}\label{app: cfg ablation}
For diffusion generative image models, CFG controls how strongly the generated samples adhere to conditioning, in our case a text prompt. Higher CFG values produce images with better semantic alignment to the text prompt, whereas lower CFG values allow the model more diversity and natural variability~\cite{ho2021classifierfree}. For SD15, a CFG scale of $1$ corresponds to the baseline conditional generation process without additional CFG.

In the context of our framework, changing CFG effectively changes the prompt-conditioned generator itself: a low CFG value allows the reconstruction class to remain relatively broad, while a high CFG value restricts the generator more strongly toward images that match the prompt. Thus, CFG can affect both the sampling side and the recovery side of the problem. On the sampling side, it can change the geometry of the prompt-conditioned image class and therefore alter the estimated Christoffel sampling distribution. On the recovery side, it can change how much flexibility the optimizer has to match the observed Fourier measurements. We therefore include a small CFG ablation to study the tradeoff between stronger prompt adherence and reconstruction fidelity. 

\subsection{Sampling Distributions}
We begin by exploring how changing the CFG scale affects the prompt-conditioned Christoffel sampling distributions. Figure~\ref{fig: ktilde ablation} displays the sampling distributions obtained by estimating the empirical Christoffel function using Algorithm~\ref{alg: ktilde} with $500$ iterations under varying CFG scales, and then applying Equation~\ref{eq: empirical sampling law} to obtain $\widetilde \mu_c$. We observe that as the CFG scale increases, the sampling distributions become less diffuse and place more mass on a smaller set of Fourier frequencies. This is consistent with the role of CFG: stronger guidance restricts the generator more tightly to images aligned with the prompt, reducing variability within the prompt-conditioned class. Consequently, the empirical secants used to estimate the Christoffel function vary along fewer dominant directions, making the most informative Fourier frequencies easier to identify. Conversely, at lower CFG scales, the generator produces a broader and more variable collection of images, so the resulting secant energy is spread across a wider range of frequencies and the induced sampling law is more diffuse.

\begin{figure}[htp]
    \centering
    \includegraphics[width=1.0\linewidth]{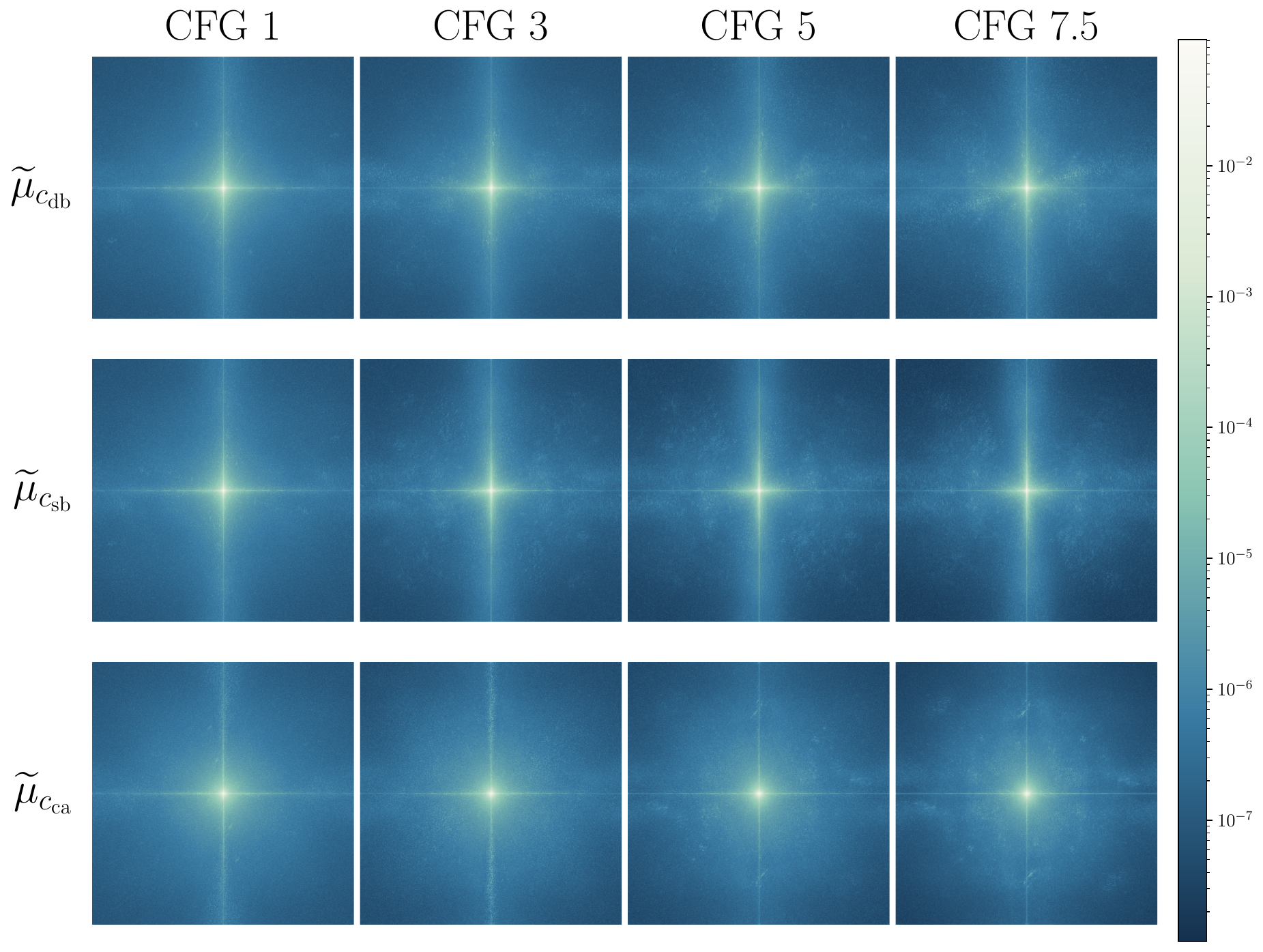}
    \caption{ Empirical sampling distributions $\widetilde{\mu}_c$ induced by the Christoffel function for prompts $c \in \{ c_{\mathrm{sb}}, c_{\mathrm{db}}, c_{\mathrm{ca}}\}$ across varying CFG scales. Color indicates sampling probability over Fourier frequencies.}
    \label{fig: ktilde ablation}
\end{figure}

\subsection{Image Recovery}
We next perform an ablation on CFG scale to study its effect on image recovery under the reconstruction prompt $c_r = c_{\mathrm{sb}} = \texttt{"sunset beach"}$, including unconditioned reconstruction ($c_r = c_\mathrm{uc}$) as a baseline. For each trial, we solve Equation~\eqref{eq: least squares} using Adam with learning rate $10^{-2}$, gradient clipping at $1.0$, and early stopping after $25$ stagnant optimization steps, for at most $500$ steps. For each sampling-recovery prompt pair and sampling percentage, we perform two reconstruction trials, except at CFG $7.5$, where we reuse the five trials from the primary experiments. All sampling was performed with respect to the original prompt-conditioned sampling distributions made with CFG $7.5$ in Figure~\ref{fig: sampling_distributions}.

Figures~\ref{fig: ablation in-range prompt-mismatched}--\ref{fig: ablation in-range prompt-matched} demonstrate how in all recovery scenarios, increasing CFG tends to decrease reconstruction performance, with lower CFG values improving reconstruction relative to the high-CFG setting used in the primary experiments. Averaged over sampling distributions and sampling percentages, recovering over $c_r = c_\mathrm{sb}$ with CFG $1$ improves over CFG $7.5$ by $2.31$ dB in the in-range prompt-mismatched experiment, $3.89$ dB in the in-range prompt-matched experiment, and $2.99$ dB in the out-of-range experiment; the corresponding SSIM gains are $0.0208$, $0.0346$, and $0.0328$. Across all three experiments, CFG $1$ outperforms CFG $7.5$ in all $60$ sampling-distribution/sampling-percentage cells for both PSNR and SSIM. In particular, we observe that recovering over $c_r = c_\mathrm{sb}$ with CFG $1$ matches or slightly outperforms recovering over $c_r = c_\mathrm{uc}$ on average, with mean gains of $0.123$ dB and $0.0012$ SSIM across all three experiments, although this advantage is not uniform across every sampling percentage and sampling law. This suggests that the weaker performance of prompt-conditioned recovery in the primary experiments in Sections~\ref{subsec: in-range diff prompt} and \ref{subsec: out of range} is not due to the prompt itself, but rather to the restrictive effect of large CFG values during latent optimization. As CFG increases, the generator is forced more strongly toward images semantically aligned with the prompt, which can reduce its ability to match the observed Fourier measurements. Lower CFG values therefore appear to provide a more flexible effective recovery class, allowing the optimizer to better balance prompt consistency with measurement fidelity.

Since all sampling masks in this ablation are drawn from distributions constructed at CFG $7.5$, these results primarily isolate the effect of CFG during recovery rather than during sampling. Taken together, this indicates that CFG is an important reconstruction hyperparameter in prompt-conditioned generative compressed sensing with diffusion models, and that future experiments should jointly tune the CFG scale used to construct $\widetilde \mu_c$ and the CFG scale used during recovery.

\begin{figure}
    \centering
    \includegraphics[width=1.0\linewidth]{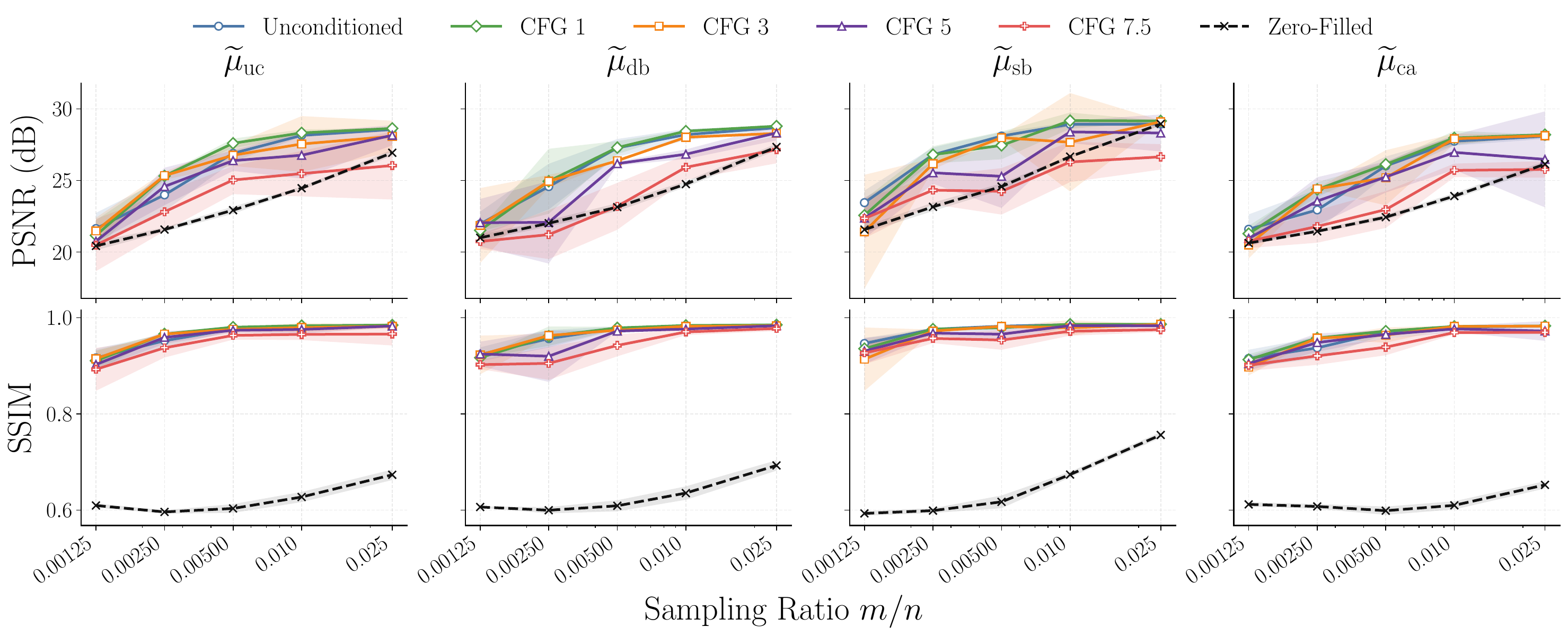}
    \caption{Reconstruction performance versus sampling percentage for in-range prompt-mismatched recovery under varying CFG scales, including unconditioned baseline.}
    \label{fig: ablation in-range prompt-mismatched}
\end{figure}

\begin{figure}
    \centering
    \includegraphics[width=1.0\linewidth]{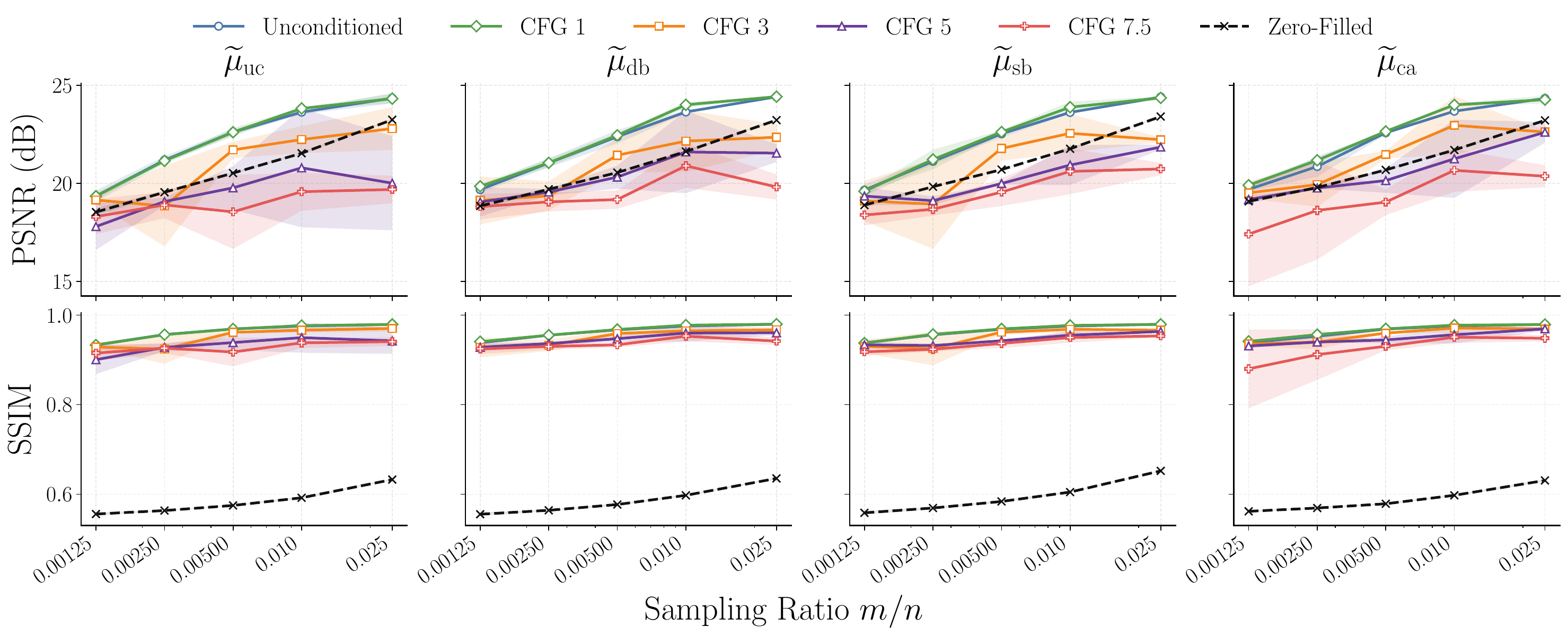}
    \caption{Reconstruction performance versus sampling percentage for out-of-range recovery under varying CFG scales, including unconditioned baseline.}
    \label{fig: ablation out-of-range}
\end{figure}

\begin{figure}
    \centering
    \includegraphics[width=1.0\linewidth]{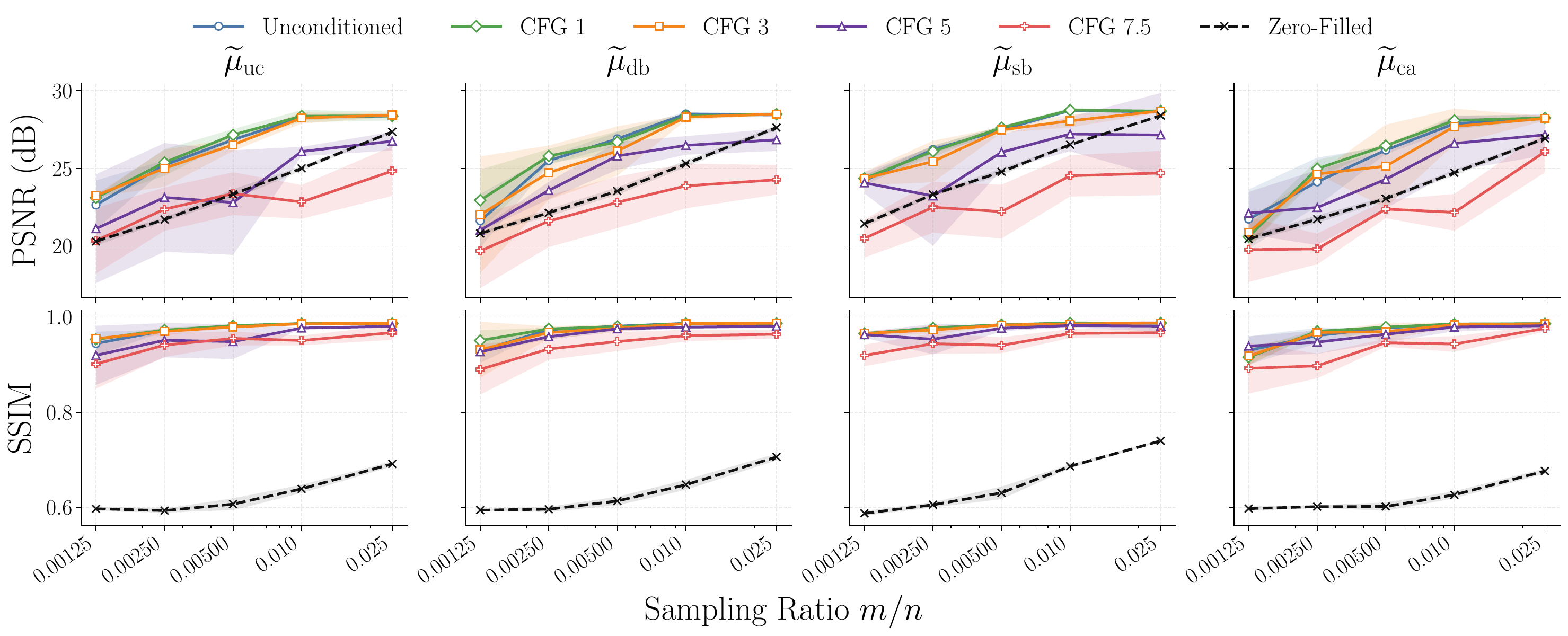}
    \caption{Reconstruction performance versus sampling percentage for in-range prompt-matched recovery under varying CFG scales, including unconditioned baseline.}
    \label{fig: ablation in-range prompt-matched}
\end{figure}

\clearpage \section{Proofs}\label{app:theory-proofs}

In this appendix we collect the proofs for the arguments used throughout the paper. As a reminder, we redefine the following items. Let 
\[
  \mathbb B_{c_r}
  =
  \left\{
    \frac{\mathbf h}{\left\|\mathbf h\right\|_2}
    :
    \mathbf h\in
    \left(\mathbb F_{c_r}-\mathbb F_{c_r}\right)
    \setminus\{\mathbf 0\}
  \right\}
\]
denote the normalized secant set. For $\xi>0$, we also define the truncated normalized secant set by
\[
  \mathbb{B}_{c_r,\xi}
  =
  \left\{
    \frac{\mathbf h}{\left\|\mathbf h\right\|_2}
    :
    \mathbf h\in
    \left(\mathbb{F}_{c_r}-\mathbb{F}_{c_r}\right)\setminus\left\{\mathbf 0\right\},
    \ \left\|\mathbf h\right\|_2\geq \xi
  \right\},
\]
which will come into play when proving Lipschitz guarantees. Finally, we define
\[
\snorm{\mathbf f}{\mu_{c_s}} = 
\left(
    \max_{i \in D}
    \frac{|\mathbf P_i\mathbf F\mathbf f|^2}{\mu_{c_s}(i)}
  \right)^{1/2}
  \quad \text{and} \qquad 
  \underline{\mu}
  =
  \min_{i \in D}\mu_{c_s}\left(i\right),
\]
which are well-defined under Assumption~\ref{assump: full support}.

\subsection{Proofs of the CS4ML Secant-Stability Statements}\label{app:end-to-srec}

We begin with a quantitative secant-stability estimate in the sampling seminorm, from which the empirical-nondegeneracy and zero-slack S-REC statement under Christoffel sampling in Theorem~\ref{thm:christoffel-srec} follows.

\begin{theorem}
  \label{thm:secant-stability}
    Fix $0<\varepsilon,\delta<1$, and let $\mathcal N\subseteq \mathbb{B}_{c_r}$ be an $\eta$-net for $\mathbb{B}_{c_r}$ in the sampling seminorm $\snorm{\cdot}{\mu_{c_s}}$, where $0<\eta<\sqrt{1-\varepsilon}$. If
  \[
    m \gtrsim \varepsilon^{-2}\Lambda\left(c_r,c_r,c_s\right)\log\left(\frac{2\left|\mathcal N\right|}{\delta}\right),
  \]
  then with probability at least $1-\delta$,
  \[
    \left(\sqrt{1-\varepsilon}-\eta\right)\left\|\mathbf{h}\right\|_2
    \leq
    \left\|\mathbf{A}_\Omega \mathbf{h}\right\|_2
    \leq
    \left(\sqrt{1+\varepsilon}+\eta\right)\left\|\mathbf{h}\right\|_2,
    \qquad
    \mathbf{h}\in \mathbb{F}_{c_r}-\mathbb{F}_{c_r}.
  \]
\end{theorem}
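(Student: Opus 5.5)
Our approach is the standard net-plus-concentration argument from CS4ML, written out for the discrete weighted Fourier model. For a fixed unit vector $\mathbf g\in\mathbb B_{c_r}$ we have
\[
  \|\mathbf A_\Omega\mathbf g\|_2^2=\frac1m\sum_{j=1}^m X_j,
  \qquad
  X_j=\frac{|\mathbf P_{I_j}\mathbf F\mathbf g|^2}{\mu_{c_s}(I_j)} .
\]
The $X_j$ are i.i.d. Each has mean $\sum_i|\mathbf P_i\mathbf F\mathbf g|^2=\|\mathbf g\|_2^2=1$, by Parseval.

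\emph{Step 1 (single-vector concentration).} Since $\mathbf g$ is a normalized secant, $|\mathbf P_i\mathbf F\mathbf g|^2\le K(\mathbb F_{c_r}-\mathbb F_{c_r})(i)$. Hence
\[
  0\le X_j\le \max_i \frac{K(\mathbb F_{c_r}-\mathbb F_{c_r})(i)}{\mu_{c_s}(i)}=\Lambda(c_r,c_r,c_s).
\]
The same pointwise bound gives $\mathbb E X_j^2\le\Lambda\,\mathbb E X_j=\Lambda$. Bernstein's inequality then yields
\[
  \mathbb P\bigl(\bigl|\|\mathbf A_\Omega\mathbf g\|_2^2-1\bigr|>\varepsilon\bigr)\le 2\exp\!\left(-\frac{c\,m\varepsilon^2}{\Lambda}\right)
\]
for $\varepsilon<1$.

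\emph{Step 2 (union bound over the net).} We apply Step 1 to every element of $\mathcal N$. If $m\gtrsim\varepsilon^{-2}\Lambda\log(2|\mathcal N|/\delta)$, then with probability at least $1-\delta$ every $\mathbf g\in\mathcal N$ satisfies
\[
  \sqrt{1-\varepsilon}\le\|\mathbf A_\Omega\mathbf g\|_2\le\sqrt{1+\varepsilon}.
\]

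\emph{Step 3 (extension to all secants).} This is the key step: the net is taken in the sampling seminorm, so we must control $\|\mathbf A_\Omega\cdot\|_2$ by $\snorm{\cdot}{\mu_{c_s}}$ deterministically. For any $\mathbf v$,
\[
  \|\mathbf A_\Omega\mathbf v\|_2^2=\frac1m\sum_j\frac{|\mathbf P_{I_j}\mathbf F\mathbf v|^2}{\mu_{c_s}(I_j)}\le\snorm{\mathbf v}{\mu_{c_s}}^2 ,
\]
which holds for every realization of $\Omega$. Now take $\mathbf h\in(\mathbb F_{c_r}-\mathbb F_{c_r})\setminus\{\mathbf 0\}$ and set $\mathbf b=\mathbf h/\|\mathbf h\|_2\in\mathbb B_{c_r}$. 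Pick $\mathbf g\in\mathcal N$ with $\snorm{\mathbf b-\mathbf g}{\mu_{c_s}}\le\eta$. The triangle inequality for the seminorm $\mathbf v\mapsto\|\mathbf A_\Omega\mathbf v\|_2$ gives
\[
  \|\mathbf A_\Omega\mathbf g\|_2-\eta\le\|\mathbf A_\Omega\mathbf b\|_2\le\|\mathbf A_\Omega\mathbf g\|_2+\eta .
\]
Combining with Step 2 and multiplying by $\|\mathbf h\|_2$ gives the claim. The case $\mathbf h=\mathbf 0$ is trivial. The hypothesis $\eta<\sqrt{1-\varepsilon}$ only makes the lower constant positive.

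The main obstacle is Step 3. A Euclidean net would require a Lipschitz bound on $\mathbf A_\Omega$, which costs a factor like $\underline\mu^{-1/2}$. The seminorm net avoids this, because the pointwise domination above holds deterministically, and that is what makes the argument clean. The Bernstein constants and the conversion from squared to unsquared bounds (via $\sqrt{1\pm\varepsilon}$) are routine.
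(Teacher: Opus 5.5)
Your proposal is correct and follows essentially the same route as the paper. The paper's proof has the same three steps: Bernstein concentration for a fixed normalized secant using $0\le X_j\le\Lambda(c_r,c_r,c_s)$ and $\mathbb E X_j^2\le\Lambda\,\mathbb E X_j$; a union bound over $\mathcal N$; and extension to all of $\mathbb B_{c_r}$ via the deterministic domination $\|\mathbf A_\Omega\mathbf v\|_2\le\snorm{\mathbf v}{\mu_{c_s}}$ together with the triangle inequality.
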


Before proving the theorem, we record the auxiliary lemmas used in the net-extension argument.

\begin{lemma}
  \label{lem:seminorm-domination}
  For every $\mathbf{g}\in\mathbb{X}$, we have that 
  $
    \left\|\mathbf{A}_\Omega\mathbf{g}\right\|_2
    \leq
    \snorm{\mathbf{g}}{\mu_{c_s}}.
  $
\end{lemma}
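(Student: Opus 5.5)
The plan is to expand $\|\mathbf A_\Omega \mathbf g\|_2^2$ coordinate by coordinate and bound each weighted summand by the maximum that defines the sampling seminorm. Since the claim is nonnegative on both sides, it suffices to prove the squared inequality $\|\mathbf A_\Omega\mathbf g\|_2^2 \le \snorm{\mathbf g}{\mu_{c_s}}^2$ and then take square roots.

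By the definition of $\mathbf A_\Omega = \tfrac{1}{\sqrt m}\mathbf W_\Omega^{1/2}\mathbf P_\Omega\mathbf F$, we already have the identity recorded in Appendix~\ref{app:cs4ml}:
\[
\left\|\mathbf A_\Omega \mathbf g\right\|_2^2 = \frac{1}{m}\sum_{j=1}^m \frac{\left|\mathbf P_{I_j}\mathbf F\mathbf g\right|^2}{\mu_{c_s}(I_j)}.
\]
Assumption~\ref{assump: full support} guarantees $\mu_{c_s}(I_j)>0$, so every term is well defined. Because each $I_j\in D$, each summand is at most
\[
\max_{i\in D}\frac{|\mathbf P_i\mathbf F\mathbf g|^2}{\mu_{c_s}(i)} = \snorm{\mathbf g}{\mu_{c_s}}^2 .
\]

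Averaging $m$ terms, each bounded by this same quantity, gives the squared inequality, and taking square roots finishes the proof. The bound is deterministic, so it holds for every realization of $\Omega$ and does not depend on whether indices repeat.

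There is no real obstacle. The only point needing care is confirming that the $1/m$ normalization and the weights $1/\mu_{c_s}(I_j)$ in $\mathbf A_\Omega$ produce exactly this average form, with no extra constant. They do, since $\mathbf W_\Omega^{1/2}$ scales the $j$th row by $\mu_{c_s}(I_j)^{-1/2}$.
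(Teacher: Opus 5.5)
Your proposal is correct and follows the paper's proof exactly: write $\|\mathbf A_\Omega\mathbf g\|_2^2$ as the average $\frac{1}{m}\sum_{j=1}^m |\mathbf P_{I_j}\mathbf F\mathbf g|^2/\mu_{c_s}(I_j)$, bound each summand by $\snorm{\mathbf g}{\mu_{c_s}}^2$, and take square roots. Your remarks on full support and on the bound holding deterministically for every realization of $\Omega$ are accurate.
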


\begin{proof}
  For each sampled index $I_j$,
  \[
    \frac{\left|\mathbf{P}_{I_j}\mathbf{F}\mathbf{g}\right|^2}{\mu_{c_s}\left(I_j\right)}
    \leq 
    \snorm{\mathbf g}{\mu_{c_s}}^2.
  \]
  Averaging over $j=1,\dots,m$ gives
  \[
    \left\|\mathbf{A}_\Omega\mathbf{g}\right\|_2^2
    =
    \frac{1}{m}\sum_{j=1}^m
    \frac{\left|\mathbf{P}_{I_j}\mathbf{F}\mathbf{g}\right|^2}{\mu_{c_s}\left(I_j\right)}
    \leq
    \snorm{\mathbf g}{\mu_{c_s}}^2.
  \]
  Taking square roots yields the claim.
\end{proof}

Lemma~\ref{lem:seminorm-domination} is the basic comparison that lets us pass from the sampling seminorm to the actual measurement operator. The following lemma also shows how the sampling seminorm is controlled by the Euclidean norm.

\begin{lemma}
    \label{lem:active-support-seminorm-comparison}
  For every $\mathbf g\in \mathbb X$, we have that
  \[
    \snorm{\mathbf g}{\mu_{c_s}}
    \leq
    \underline{\mu}^{-1/2}
    \left\|\mathbf g\right\|_2.
  \]
\end{lemma}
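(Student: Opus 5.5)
The plan is to bound each ratio in the definition of the seminorm using Parseval together with the minimum of the sampling law. Fix $\mathbf g\in\mathbb X$ and an arbitrary index $i\in D$. The numerator is a single Fourier coefficient, so it is dominated by the full Fourier energy:
\[
  \left|\mathbf P_i\mathbf F\mathbf g\right|^2
  \le
  \sum_{j=1}^n\left|\mathbf P_j\mathbf F\mathbf g\right|^2
  =
  \left\|\mathbf F\mathbf g\right\|_2^2
  =
  \left\|\mathbf g\right\|_2^2 .
\]
The equality is the nondegeneracy identity with $\alpha=\beta=1$, which holds by unitarity of $\mathbf F$.

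For the denominator, Assumption~\ref{assump: full support} gives $\mu_{c_s}(i)\ge\underline{\mu}>0$, since $D$ is finite and every entry is positive. Combining the two bounds,
\[
  \frac{\left|\mathbf P_i\mathbf F\mathbf g\right|^2}{\mu_{c_s}(i)}
  \le
  \frac{\left\|\mathbf g\right\|_2^2}{\underline{\mu}} .
\]
This bound does not depend on $i$, so it survives taking the maximum over $i\in D$. That maximum is exactly $\snorm{\mathbf g}{\mu_{c_s}}^2$. Taking square roots of both nonnegative sides then yields the claim.

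There is no real obstacle in this argument. The one point that needs care is that $\underline{\mu}>0$, so that the right-hand side is finite and the division is legitimate. This is exactly where Assumption~\ref{assump: full support}, together with the finiteness of $D$, is used.
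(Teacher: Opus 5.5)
Your proof is correct and follows essentially the same route as the paper. Both bound the denominator below by $\underline{\mu}$, dominate a single Fourier coefficient by the total Fourier energy, and apply Parseval. The only difference is the order: you bound each ratio pointwise before taking the maximum, while the paper first pulls out $\underline{\mu}^{-1}$ and then uses $\max_i a_i^2 \le \sum_i a_i^2$.
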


\begin{proof}
  Let $\mathbf g \in \mathbb X$. Then
  \[
    \snorm{\mathbf g}{\mu_{c_s}}^2
    =
    \max_{i \in D}
    \frac{\left|\mathbf P_i\mathbf F\mathbf g\right|^2}{\mu_{c_s}\left(i\right)}
    \leq
    \underline{\mu}^{-1}
    \max_{i \in D}
    \left|\mathbf P_i\mathbf F\mathbf g\right|^2.
  \]
  Using $\max_i a_i^2 \leq \sum_i a_i^2$ and that $\mathbf{F}$ is the discrete unitary Fourier transform, Parseval's theorem gives us
  \[
    \snorm{\mathbf g}{\mu_{c_s}}^2
    \leq
    \underline{\mu}^{-1}
    \sum_{i \in D}
    \left|\mathbf P_i\mathbf F\mathbf g\right|^2
    =
    \underline{\mu}^{-1}
    \left\|\mathbf g\right\|_2^2.
  \]
  Taking square roots gives the claim.
\end{proof}

We now record a fixed-direction concentration estimate for the weighted subsampled Fourier operator, which will later be combined with a union bound over the covering net.

\begin{lemma}
  \label{lem:fixed-direction}
  Let $\mathbf{h}\in \left(\mathbb{F}_{c_r}-\mathbb{F}_{c_r}\right)\setminus\left\{\mathbf{0}\right\}$, and let $0<\varepsilon<1$. Then
  \[
    \mathbb{P}\left(
      \left|
      \left\|\mathbf{A}_\Omega\mathbf{h}\right\|_2^2
      -
      \left\|\mathbf{h}\right\|_2^2
      \right|
      >
      \varepsilon\left\|\mathbf{h}\right\|_2^2
    \right)
    \leq
    2\exp\left(-a\,\frac{m\varepsilon^2}{\Lambda\left(c_r,c_r,c_s\right)}\right).
  \]
  for some absolute constant $a > 0$.
\end{lemma}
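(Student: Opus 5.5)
We write $\|\mathbf A_\Omega\mathbf h\|_2^2$ as an average of i.i.d.\ bounded random variables and apply Bernstein's inequality. By homogeneity we may assume $\|\mathbf h\|_2=1$. For $j=1,\dots,m$ set
\[
  X_j=\frac{|\mathbf P_{I_j}\mathbf F\mathbf h|^2}{\mu_{c_s}(I_j)},
  \qquad
  \|\mathbf A_\Omega\mathbf h\|_2^2=\frac1m\sum_{j=1}^m X_j .
\]
The $X_j$ are independent because the $I_j$ are drawn i.i.d.\ from $\mu_{c_s}$.

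\emph{Mean.} Assumption~\ref{assump: full support} makes every weight finite, so
\[
  \mathbb E X_j=\sum_{i\in D}\mu_{c_s}(i)\,\frac{|\mathbf P_i\mathbf F\mathbf h|^2}{\mu_{c_s}(i)}=\|\mathbf F\mathbf h\|_2^2=1
\]
by Parseval. This is the nondegeneracy identity with $\alpha=\beta=1$.

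\emph{Uniform bound.} Since $\mathbf h$ is a nonzero secant in $\mathbb F_{c_r}-\mathbb F_{c_r}$, the definition of the Christoffel function gives $|\mathbf P_i\mathbf F\mathbf h|^2\le K(\mathbb F_{c_r}-\mathbb F_{c_r})(i)$ for every $i\in D$, because $\|\mathbf h\|_2=1$. Dividing by $\mu_{c_s}(i)$ and taking the maximum over $i$ yields
\[
  0\le X_j\le \Lambda(c_r,c_r,c_s)=:\Lambda
\]
almost surely. In particular $\Lambda\ge \mathbb E X_j=1$.

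\emph{Variance.} $\mathbb E X_j^2\le \Lambda\,\mathbb E X_j=\Lambda$. The centered variables $Y_j=X_j-1$ then satisfy $|Y_j|\le\max(\Lambda,1)=\Lambda$ and $\mathbb E Y_j^2\le\Lambda$.

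Bernstein's inequality for the sum of the $Y_j$ at deviation $m\varepsilon$ gives
\[
  \mathbb P\Big(\Big|\tfrac1m\textstyle\sum_j Y_j\Big|>\varepsilon\Big)
  \le 2\exp\Big(-\frac{m^2\varepsilon^2/2}{m\Lambda+m\Lambda\varepsilon/3}\Big)
  \le 2\exp\Big(-\frac{3}{8}\frac{m\varepsilon^2}{\Lambda}\Big),
\]
where the last step uses $\varepsilon<1$. So the claim holds with $a=3/8$.

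The computation is routine. The step that needs care is the uniform bound $X_j\le\Lambda$. It relies on $\mathbf h$ lying in the same secant class $\mathbb F_{c_r}-\mathbb F_{c_r}$ over which $K$ is taken, and on full support of $\mu_{c_s}$ so that the ratios are finite. This bound is exactly where the mismatch factor $\Lambda(c_r,c_r,c_s)$ enters, in place of the usual $\kappa$.
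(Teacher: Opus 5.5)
Your proposal is correct and follows the paper's proof step for step: the same i.i.d.\ variables $X_j$, the mean computed via full support and Parseval, the uniform bound $X_j\le\Lambda(c_r,c_r,c_s)\|\mathbf h\|_2^2$ from the definition of $K$, the variance estimate $\mathbb E X_j^2\le M\,\mathbb E X_j$, and Bernstein's inequality, with the explicit constant $a=3/8$ as a bonus over the paper's unspecified $a$. One point of phrasing: after normalizing, $\mathbf h$ need not lie in $\mathbb F_{c_r}-\mathbb F_{c_r}$, since that set need not be a cone, but the argument still works because both the ratio $|\mathbf P_i\mathbf F\mathbf h|^2/\|\mathbf h\|_2^2$ and the event in question are scale-invariant.
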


\begin{proof}
  Set
  \[
    X_j
    =
    \frac{\left|\mathbf{P}_{I_j}\mathbf{F}\mathbf{h}\right|^2}{\mu_{c_s}\left(I_j\right)},
    \qquad
    j=1,\dots,m.
  \]
  Since the indices $I_j$ are i.i.d. draws from $\mu_{c_s}$, the random variables $X_1,\dots,X_m$ are i.i.d., and
  \[
    \left\|\mathbf{A}_\Omega\mathbf{h}\right\|_2^2
    =
    \frac{1}{m}\sum_{j=1}^m X_j.
  \]
Since $\mu_{c_s}$ has full support,
\[
  \mathbb E[X_j]
  =
  \sum_{i\in D}
  \mu_{c_s}(i)
  \frac{|\mathbf P_i\mathbf F\mathbf h|^2}{\mu_{c_s}(i)}
  =
  \sum_{i\in D}
  |\mathbf P_i\mathbf F\mathbf h|^2
  =
  \|\mathbf h\|_2^2.
\]

  Now since $\mathbf{h}\in\mathbb{F}_{c_r}-\mathbb{F}_{c_r}$, the definition of $K(\mathbb{F}_{c_r}-\mathbb{F}_{c_r})(i)$ gives
  \[
    \frac{\left|\mathbf{P}_i\mathbf{F}\mathbf{h}\right|^2}{\left\|\mathbf{h}\right\|_2^2}
    \leq
    K(\mathbb{F}_{c_r}-\mathbb{F}_{c_r})(i),
    \qquad
    i\in D.
  \]
  Dividing by $\mu_{c_s}\left(i\right)$ yields
  \[
    \frac{\left|\mathbf{P}_i\mathbf{F}\mathbf{h}\right|^2}{\mu_{c_s}\left(i\right)}
    \leq
    \frac{K(\mathbb{F}_{c_r}-\mathbb{F}_{c_r})(i)}{\mu_{c_s}\left(i\right)}
    \left\|\mathbf{h}\right\|_2^2
    \leq
    \Lambda\left(c_r,c_r,c_s\right)\left\|\mathbf{h}\right\|_2^2.
  \]
  Hence
  \[
    0\leq X_j\leq \Lambda\left(c_r,c_r,c_s\right)\left\|\mathbf{h}\right\|_2^2.
  \]
  Let
  \[
    M=\Lambda\left(c_r,c_r,c_s\right)\left\|\mathbf{h}\right\|_2^2.
  \]
  Since $0\leq X_j\leq M$, one has $X_j^2\leq M X_j$, and therefore
  \[
    \mathbb{E}\left[X_j^2\right]
    \leq
    M\,\mathbb{E}\left[X_j\right]
    =
    \Lambda\left(c_r,c_r,c_s\right)\left\|\mathbf{h}\right\|_2^4.
  \]
  Thus
  \[
    \operatorname{Var}\left(X_j\right)
    \leq
    \Lambda\left(c_r,c_r,c_s\right)\left\|\mathbf{h}\right\|_2^4.
  \]

  Now we apply Bernstein's inequality to
  \[
    Y_j=X_j-\mathbb{E}\left[X_j\right].
  \]
  These random variables are i.i.d., mean zero, satisfy $\left|Y_j\right|\leq M$, and have variance bounded as above. Therefore
  \[
    \mathbb{P}\left(
      \left|
      \frac{1}{m}\sum_{j=1}^m Y_j
      \right|
      >
      \varepsilon\left\|\mathbf{h}\right\|_2^2
    \right)
    \leq
    2\exp\left(-a\,\frac{m\varepsilon^2}{\Lambda\left(c_r,c_r,c_s\right)}\right),
  \]
  for some absolute constant $a > 0$, which is exactly the claim.
\end{proof}

Finally, we show that control on a finite net extends to the entire normalized secant set, provided the net is taken in the sampling seminorm. That is to say, once the sampling operator $\mathbf A_\Omega$ is controlled on a net, this lemma extends the same bounds to the full class by using closeness in the sampling seminorm.

\begin{lemma}
  \label{lem:net-extension}
  Let $\mathbb{B}\subseteq \left\{\mathbf{h}\in\mathbb{X}:\left\|\mathbf{h}\right\|_2=1\right\}$ and let $\mathcal N\subseteq \mathbb{B}$ be an $\eta$-net for $\mathbb{B}$ in the seminorm $\snorm{\cdot}{\mu_{c_s}}$. Suppose that for $0<\varepsilon<1$,
  \[
    \sqrt{1-\varepsilon}
    \leq
    \left\|\mathbf{A}_\Omega\mathbf{u}\right\|_2
    \leq
    \sqrt{1+\varepsilon},
    \qquad
    \mathbf{u}\in\mathcal N.
  \]
  Then, for every $\mathbf{h}\in\mathbb{B}$,
  \[
    \left(\sqrt{1-\varepsilon}-\eta\right)
    \leq
    \left\|\mathbf{A}_\Omega\mathbf{h}\right\|_2
    \leq
    \left(\sqrt{1+\varepsilon}+\eta\right).
  \]
\end{lemma}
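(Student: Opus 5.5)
The plan is a standard net-approximation argument: approximate an arbitrary $\mathbf h\in\mathbb B$ by a net point, apply the triangle inequality for $\mathbf A_\Omega$, and absorb the approximation error with Lemma~\ref{lem:seminorm-domination}. Fix $\mathbf h\in\mathbb B$. Since $\mathcal N$ is an $\eta$-net for $\mathbb B$ in the seminorm $\snorm{\cdot}{\mu_{c_s}}$, there is $\mathbf u\in\mathcal N$ with $\snorm{\mathbf h-\mathbf u}{\mu_{c_s}}\le\eta$.

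Because $\mathbf A_\Omega$ is linear and $\|\cdot\|_2$ is a norm, the triangle inequality gives
\[
\left\|\mathbf A_\Omega\mathbf u\right\|_2-\left\|\mathbf A_\Omega(\mathbf h-\mathbf u)\right\|_2
\le
\left\|\mathbf A_\Omega\mathbf h\right\|_2
\le
\left\|\mathbf A_\Omega\mathbf u\right\|_2+\left\|\mathbf A_\Omega(\mathbf h-\mathbf u)\right\|_2 .
\]
Next, apply Lemma~\ref{lem:seminorm-domination} with $\mathbf g=\mathbf h-\mathbf u$. This yields
\[
\left\|\mathbf A_\Omega(\mathbf h-\mathbf u)\right\|_2\le\snorm{\mathbf h-\mathbf u}{\mu_{c_s}}\le\eta .
\]
The assumed two-sided bounds on the net point give $\sqrt{1-\varepsilon}\le\|\mathbf A_\Omega\mathbf u\|_2\le\sqrt{1+\varepsilon}$. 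Substituting both estimates into the triangle-inequality display gives the claimed bounds $\sqrt{1-\varepsilon}-\eta\le\|\mathbf A_\Omega\mathbf h\|_2\le\sqrt{1+\varepsilon}+\eta$.

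I do not expect a genuine obstacle. The one point to check is that the approximation error is measured in the sampling seminorm rather than the Euclidean norm, which is exactly why Lemma~\ref{lem:seminorm-domination} (rather than an operator-norm bound on $\mathbf A_\Omega$, which could be as large as $\underline{\mu}^{-1/2}$) is the right tool. The unit-norm condition on $\mathbb B$ is not even needed for the argument. It matters only when the statement is later rescaled to general secants $\mathbf h$ by homogeneity in Theorem~\ref{thm:secant-stability}.
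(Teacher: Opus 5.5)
Your proposal is correct and follows essentially the same argument as the paper: pick a net point $\mathbf u$ within $\eta$ in the sampling seminorm, use Lemma~\ref{lem:seminorm-domination} to get $\left\|\mathbf A_\Omega(\mathbf h-\mathbf u)\right\|_2\le\eta$, and conclude with the triangle and reverse-triangle inequalities. Your side remark that the unit-norm condition on $\mathbb B$ is not used here is also accurate.
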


\begin{proof}
  Fix $\mathbf{h}\in\mathbb{B}$. Since $\mathcal N$ is an $\eta$-net for $\mathbb{B}$ in the sampling seminorm, there exists some $\mathbf{u}\in\mathcal N$ such that
  \[
    \snorm{\mathbf h - \mathbf u}{\mu_{c_s}}\leq \eta.
  \]
  By Lemma~\ref{lem:seminorm-domination},
  \[
    \left\|\mathbf{A}_\Omega\left(\mathbf{h}-\mathbf{u}\right)\right\|_2\leq \eta.
  \]
  Since $\mathbf{u}\in\mathcal N$,
  \[
    \sqrt{1-\varepsilon}
    \leq
    \left\|\mathbf{A}_\Omega\mathbf{u}\right\|_2
    \leq
    \sqrt{1+\varepsilon}.
  \]
  The triangle and reverse-triangle inequalities now yield
  \[
    \left\|\mathbf{A}_\Omega\mathbf{h}\right\|_2
    \leq
    \sqrt{1+\varepsilon}+\eta
    \qquad\text{and}\qquad
    \left\|\mathbf{A}_\Omega\mathbf{h}\right\|_2
    \geq
    \sqrt{1-\varepsilon}-\eta,
  \]
  which gives us the desired bounds.
\end{proof}

We are now equipped to prove Theorem~\ref{thm:secant-stability}, which follows by combining fixed-direction concentration on the net with the extension step above.

\begin{proof}[Proof of Theorem~\ref{thm:secant-stability}]
  For each point $\mathbf u\in\mathcal N$, define the event
  \[
    E_{\mathbf u}
    =
    \left\{
      \left|
      \left\|\mathbf A_\Omega\mathbf u\right\|_2^2
      -
      \left\|\mathbf u\right\|_2^2
      \right|
      \leq
      \varepsilon\left\|\mathbf u\right\|_2^2
    \right\}.
  \]
  Since $\mathcal N\subseteq \mathbb B_{c_r}$, for each
  $\mathbf u\in\mathcal N$ there exists
  \[
    \mathbf s_{\mathbf u}
    \in
    \left(\mathbb F_{c_r}-\mathbb F_{c_r}\right)
    \setminus\{\mathbf 0\}
  \]
  such that
  \[
    \mathbf u
    =
    \frac{\mathbf s_{\mathbf u}}
    {\left\|\mathbf s_{\mathbf u}\right\|_2}.
  \]
  Applying Lemma~\ref{lem:fixed-direction} to $\mathbf s_{\mathbf u}$ gives
      \begin{equation}\label{eq:fixed-dir-event}
      \mathbb P\left(
        \left|
        \left\|\mathbf A_\Omega \mathbf s_{\mathbf u}\right\|_2^2
        -
        \left\|\mathbf s_{\mathbf u}\right\|_2^2
        \right|
        >
        \varepsilon\left\|\mathbf s_{\mathbf u}\right\|_2^2
      \right)
      \leq
      2\exp\left(
        -a\,\frac{m\varepsilon^2}{\Lambda(c_r,c_r,c_s)}
      \right)
    \end{equation}
  for some absolute constant $a>0$. Since
  \[
    \left\|\mathbf A_\Omega \mathbf u\right\|_2^2
    =
    \frac{\left\|\mathbf A_\Omega \mathbf s_{\mathbf u}\right\|_2^2}
    {\left\|\mathbf s_{\mathbf u}\right\|_2^2},
    \qquad
    \left\|\mathbf u\right\|_2=1,
  \]
  the event in Equation~\eqref{eq:fixed-dir-event} is exactly $E_{\mathbf u}^c$. Therefore
  \[
    \mathbb P\left(E_{\mathbf u}^c\right)
    \leq
    2\exp\left(
      -a\,\frac{m\varepsilon^2}{\Lambda(c_r,c_r,c_s)}
    \right).
  \]

  Taking a union bound over $\mathcal N$ and using the lower bound on $m$,
  we find that with probability at least $1-\delta$,
  \[
    \sqrt{1-\varepsilon}
    \leq
    \left\|\mathbf A_\Omega\mathbf u\right\|_2
    \leq
    \sqrt{1+\varepsilon},
    \qquad
    \mathbf u\in\mathcal N.
  \]
  Applying Lemma~\ref{lem:net-extension} with
  $\mathbb B=\mathbb B_{c_r}$ gives
  \begin{equation}\label{eq: Aw bounded}
    \sqrt{1-\varepsilon}-\eta
    \leq
    \left\|\mathbf A_\Omega\mathbf w\right\|_2
    \leq
    \sqrt{1+\varepsilon}+\eta,
    \qquad
    \mathbf w\in\mathbb B_{c_r}.
  \end{equation}

  Now let $\mathbf h \in (\mathbb F_{c_r}-\mathbb F_{c_r}) \setminus \{ \mathbf 0 \}$ and set
  $
    \mathbf w
    =
    \mathbf h / \left\|\mathbf h\right\|_2.
  $
  Then $\mathbf w\in\mathbb B_{c_r}$, Equation~\eqref{eq: Aw bounded} gives
  \[
    \left(\sqrt{1-\varepsilon}-\eta\right)\left\|\mathbf h\right\|_2
    \leq
    \left\|\mathbf A_\Omega \mathbf h\right\|_2
    \leq
    \left(\sqrt{1+\varepsilon}+\eta\right)\left\|\mathbf h\right\|_2.
  \]
  This proves the theorem.
\end{proof}

Finally, we can prove the general Christoffel sample complexity statement in Theorem~\ref{thm:christoffel-srec}.

\begin{proof}[Proof of Theorem~\ref{thm:christoffel-srec}]
  Apply Theorem~\ref{thm:secant-stability} with $\varepsilon=\tau/2$. Then, with probability at least $1-\delta$,
  \[
    \left(\sqrt{1-\tau/2}-\eta\right)\left\|\mathbf{h}\right\|_2
    \leq
    \left\|\mathbf{A}_\Omega \mathbf{h}\right\|_2
    \leq
    \left(\sqrt{1+\tau/2}+\eta\right)\left\|\mathbf{h}\right\|_2,
    \qquad
    \mathbf{h}\in \mathbb{F}_{c_r}-\mathbb{F}_{c_r}.
  \]

  For the lower bound,
  \[
    \left\|\mathbf{A}_\Omega \mathbf{h}\right\|_2^2
    \geq
    \left(\sqrt{1-\tau/2}-\eta\right)^2\left\|\mathbf{h}\right\|_2^2
    \geq
    \left(1-\tau/2-2\eta\right)\left\|\mathbf{h}\right\|_2^2
    \geq
    \left(1-\tau\right)\left\|\mathbf{h}\right\|_2^2,
  \]
  since $\eta\leq \tau/8$.

  For the upper bound,
  \[
    \left\|\mathbf{A}_\Omega \mathbf{h}\right\|_2^2
    \leq
    \left(\sqrt{1+\tau/2}+\eta\right)^2\left\|\mathbf{h}\right\|_2^2
    =
    \left(
      1+\tau/2+2\eta\sqrt{1+\tau/2}+\eta^2
    \right)\left\|\mathbf{h}\right\|_2^2.
  \]
  Since $0<\tau<1$, we have $\sqrt{1+\tau/2}\leq \sqrt{3/2}<5/4$, and therefore
  \[
    2\eta\sqrt{1+\tau/2}+\eta^2
    \leq
    \frac{5}{2}\eta+\eta^2
    \leq
    \frac{5\tau}{16}+\frac{\tau^2}{64}
    <
    \frac{\tau}{2}.
  \]
  Hence
  \[
    \left\|\mathbf{A}_\Omega \mathbf{h}\right\|_2^2
    \leq
    \left(1+\tau\right)\left\|\mathbf{h}\right\|_2^2.
  \]
  Thus $\mathbf A_\Omega$ is empirically nondegenerate on $\mathbb F_{c_r}-\mathbb F_{c_r}$ with distortion $\tau$. Applying the lower half of this estimate to $\mathbf h=\mathbf f_1-\mathbf f_2$ for arbitrary $\mathbf f_1,\mathbf f_2\in\mathbb F_{c_r}$ yields that $\mathbf A_\Omega$ satisfies
  $
  \mathrm{S\text{-}REC}\left(\mathbb{F}_{c_r},\sqrt{1-\tau},0\right).
  $
\end{proof}

\subsection{A ReLU Specialization}\label{app:relu-specialization}

We now formalize the piecewise-linear structural assumption used in the main text and prove the corresponding sample complexity theorem. The result follows under the notion that the activation patterns of a bias-free ReLU network partition the latent space into finitely many polyhedral cones, and on each such cone the network is
linear~\cite{bora2017compressed, berk2023modeladapted, berk2022fourier, plan2025denoising, naderi2021beyond}. Motivated by this structure, we define the following piecewise-linear hypothesis:

\begin{definition}[$(N,k)$-Piecewise Linear]
  \label{def:generator-structure}
  Fix a recovery prompt $c_r$.
  We say that $G\left(\cdot,c_r\right)$ is \emph{$(N,k)$-piecewise linear} on $B_2^k(R)$ if there exist polyhedral cones $Q_1,\dots,Q_{N}\subseteq \mathbb{R}^k$ and linear maps $A_1,\dots,A_{N}:\mathbb{R}^k\to\mathbb{X}$ such that
  \[
    B_2^k(R)
    \subseteq
    \bigcup_{j=1}^{N} Q_j,
    \qquad
    G\left(\mathbf z,c_r\right)=A_j\mathbf z,
    \quad
    \mathbf z\in B_2^k(R)\cap Q_j.
  \]
\end{definition}

This structural assumption is useful because it reduces the geometry of the generator range to a finite union of low-dimensional linear subspaces. Consequently, the normalized secant set can be covered by combining Euclidean nets on at most $2k$-dimensional subspaces, leading to the following S-REC guarantee:

\begin{theorem}[Sample Complexity for Piecewise Linear Networks]
  \label{thm:piecewise-linear-srec}
  Assume that $G\left(\cdot,c_r\right)$ is $(N,k)$-piecewise linear on $B_2^k(R)$. Fix $0<\tau,\delta<1$. If
  \[
    m
    \gtrsim
    \tau^{-2}
    \Lambda\left(c_r,c_r,c_s\right)
    \left[
      \log\left(N\right)
      +
      k\log\left(
        1+
        \frac{1}
        {\tau\sqrt{\underline{\mu}}}
      \right)
      +
      \log\left(\frac{1}{\delta}\right)
    \right],
  \]
  then with probability at least $1-\delta$ over the draw of $\Omega$, $\mathbf A_\Omega$ satisfies
  $
  \mathrm{S\text{-}REC}
  \left(
    \mathbb{F}_{c_r},
    \sqrt{1-\tau},
    0
  \right).
  $
\end{theorem}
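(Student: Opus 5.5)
The plan is to invoke Theorem~\ref{thm:christoffel-srec} with a suitably constructed $\eta$-net $\mathcal N\subseteq\mathbb B_{c_r}$ in the sampling seminorm $\snorm{\cdot}{\mu_{c_s}}$, taking $\eta=\tau/8$. It then suffices to show
\[
\log\left(\frac{2|\mathcal N|}{\delta}\right)\lesssim \log N + k\log\left(1+\frac{1}{\tau\sqrt{\underline\mu}}\right)+\log\frac1\delta .
\]

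\textbf{Step 1: reduce secants to a union of subspaces.} For $\mathbf z_1\in Q_j\cap B_2^k(R)$ and $\mathbf z_2\in Q_l\cap B_2^k(R)$, the secant is
\[
G(\mathbf z_1,c_r)-G(\mathbf z_2,c_r)=A_j\mathbf z_1-A_l\mathbf z_2=[A_j\;\,-A_l]\begin{bmatrix}\mathbf z_1\\ \mathbf z_2\end{bmatrix}.
\]
This lies in the subspace $V_{jl}=\operatorname{range}[A_j\;\,-A_l]$, which has dimension at most $2k$. Hence $\mathbb F_{c_r}-\mathbb F_{c_r}\subseteq\bigcup_{j,l}V_{jl}$, so $\mathbb B_{c_r}\subseteq\bigcup_{j,l}(V_{jl}\cap S^{n-1})$. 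There are at most $N^2$ such pairs.

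\textbf{Step 2: build Euclidean nets and transfer them to the seminorm.} By Lemma~\ref{lem:active-support-seminorm-comparison}, a Euclidean $\eta\sqrt{\underline\mu}$-net is an $\eta$-net in $\snorm{\cdot}{\mu_{c_s}}$. For each pair, a volumetric argument on the unit sphere of the $\le 2k$-dimensional $V_{jl}$ gives a Euclidean $(\eta\sqrt{\underline\mu}/2)$-net of $V_{jl}\cap S^{n-1}$ of size at most $(1+4/(\eta\sqrt{\underline\mu}))^{2k}$.

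The main obstacle is that $\mathcal N$ must lie inside $\mathbb B_{c_r}$, not merely inside $V_{jl}\cap S^{n-1}$. In fact $\mathbb B_{c_r}\cap V_{jl}$ is only a cone-restricted piece of that sphere. I will handle this by the standard interior-net fix. From the net of the sphere, keep only the points whose $\eta\sqrt{\underline\mu}/2$-ball meets $\mathbb B_{c_r}\cap V_{jl}$, and replace each kept point by one point of $\mathbb B_{c_r}\cap V_{jl}$ lying in that ball. By the triangle inequality this produces a Euclidean $\eta\sqrt{\underline\mu}$-net of $\mathbb B_{c_r}\cap V_{jl}$ contained in $\mathbb B_{c_r}$, with no increase in cardinality. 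The union over pairs is then an $\eta$-net of $\mathbb B_{c_r}$ in the seminorm.

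\textbf{Step 3: count and conclude.} The resulting net satisfies
\[
|\mathcal N|\le N^2\left(1+\frac{32}{\tau\sqrt{\underline\mu}}\right)^{2k},
\]
so
\[
\log(2|\mathcal N|/\delta)\lesssim \log N + k\log\left(1+\frac{1}{\tau\sqrt{\underline\mu}}\right)+\log(1/\delta),
\]
after absorbing the constants $2$ and $32$. Substituting this into Theorem~\ref{thm:christoffel-srec} gives empirical nondegeneracy with distortion $\tau$. Proposition~\ref{prop: end implies srec}, equivalently the conclusion of Theorem~\ref{thm:christoffel-srec}, then yields $\mathrm{S\text{-}REC}(\mathbb F_{c_r},\sqrt{1-\tau},0)$.

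Note that no $R$-dependence appears. This is because we normalize secants, and the cone structure makes $\mathbb B_{c_r}$ scale-invariant, so $R$ does not enter the net.
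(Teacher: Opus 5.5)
Your proposal is correct and follows essentially the paper's route. The paper likewise covers the normalized secants by at most $N^2$ subspaces of dimension at most $2k$, builds Euclidean $\eta\sqrt{\underline\mu}$-nets lying inside $\mathbb B_{c_r}$, and transfers them to the sampling seminorm via Lemma~\ref{lem:active-support-seminorm-comparison}; it then feeds the count into Theorem~\ref{thm:christoffel-srec} with $\eta=\tau/8$. The only cosmetic difference is how the net points are kept inside $\mathbb B_{c_r}$: the paper takes a maximal $\eta\sqrt{\underline\mu}$-separated subset of $\mathbb B_{c_r}\cap(C_a-C_b)$ directly (giving constant $16$ rather than your $32$), whereas your net-the-sphere-then-replace step is equally valid.
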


The key geometric input for Theorem~\ref{thm:piecewise-linear-srec} is the following covering estimate for the normalized secant set induced by a piecewise-linear generator.

\begin{proposition}
  \label{prop:piecewise-linear-cover-transfer}
  Assume that $G(\cdot,c_r)$ is $(N,k)$-piecewise linear on
  $B_2^k(R)$ in the sense of Definition~\ref{def:generator-structure}.. Then for every $\eta>0$, there
  exists a set $\mathcal N\subseteq\mathbb B_{c_r}$ such that $\mathcal N$ is
  an $\eta$-net for $\mathbb B_{c_r}$ in the sampling seminorm
  $\snorm{\cdot}{\mu_{c_s}}$ and
  \[
    |\mathcal N|
    \leq
    N^2
    \left(
      1+
      \frac{2}
      {\eta\sqrt{\underline{\mu}}}
    \right)^{2k}.
  \]
\end{proposition}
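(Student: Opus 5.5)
The plan is to reduce the normalized secant set to a finite union of unit spheres of low-dimensional subspaces, cover each sphere with a Euclidean net, and then convert Euclidean closeness into closeness in the sampling seminorm using Lemma~\ref{lem:active-support-seminorm-comparison}.

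\emph{Step 1: decompose the secant set.} By Definition~\ref{def:generator-structure}, every $\mathbf f\in\mathbb F_{c_r}$ has the form $A_j\mathbf z$ for some $j\in\{1,\dots,N\}$ and $\mathbf z\in B_2^k(R)\cap Q_j$. Hence every secant satisfies
\[
\mathbf h=A_j\mathbf z_1-A_l\mathbf z_2\in V_{jl}:=\operatorname{range}\left([A_j,\,-A_l]\right),
\]
and $V_{jl}$ is a linear subspace of $\mathbb X$ of dimension at most $2k$. Consequently
\[
\mathbb B_{c_r}\subseteq\bigcup_{j,l=1}^N S_{jl},\qquad S_{jl}=\{\mathbf v\in V_{jl}:\|\mathbf v\|_2=1\}.
\]
This union has at most $N^2$ pieces.

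\emph{Step 2: cover the pieces.} For a Euclidean radius $\eta'>0$, the standard volumetric bound gives each $S_{jl}$ an $\eta'$-net of cardinality at most $(1+2/\eta')^{2k}$. A net for $\mathbb B_{c_r}$ must, however, lie inside $\mathbb B_{c_r}$ itself, not merely in $S_{jl}$.

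To arrange this, for each $(j,l)$ we cover $T_{jl}=\mathbb B_{c_r}\cap S_{jl}$ directly. Take a maximal $\eta'$-separated subset of $T_{jl}$ (or take an $\eta'/2$-net of $S_{jl}$ and replace each point whose $\eta'/2$-ball meets $T_{jl}$ by a point of $T_{jl}$ in that ball). Either way we get an $\eta'$-net of $T_{jl}$ contained in $T_{jl}$. The volumetric argument applies to separated subsets of a subset of a $2k$-dimensional sphere, so it gives the cardinality bound $(1+2/\eta')^{2k}$ without loss of constant.

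The union of these nets, taken over all $(j,l)$, is a subset $\mathcal N\subseteq\mathbb B_{c_r}$ of size at most $N^2(1+2/\eta')^{2k}$. It is a Euclidean $\eta'$-net of $\mathbb B_{c_r}$.

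\emph{Step 3: transfer to the seminorm.} For any $\mathbf h\in\mathbb B_{c_r}$, pick $\mathbf u\in\mathcal N$ with $\|\mathbf h-\mathbf u\|_2\le\eta'$. Lemma~\ref{lem:active-support-seminorm-comparison} gives
\[
\snorm{\mathbf h-\mathbf u}{\mu_{c_s}}\le\underline{\mu}^{-1/2}\eta'.
\]
Choosing $\eta'=\eta\sqrt{\underline{\mu}}$ makes $\mathcal N$ an $\eta$-net in $\snorm{\cdot}{\mu_{c_s}}$, with
\[
|\mathcal N|\le N^2\left(1+\frac{2}{\eta\sqrt{\underline{\mu}}}\right)^{2k},
\]
which is the claimed bound.

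\emph{Main obstacle.} The only delicate point is in Step 2: making the net lie inside $\mathbb B_{c_r}$ while keeping the constant $2$ in the bound. I would handle it with the maximal separated set construction. Packing disjoint balls of radius $\eta'/2$ centred on the sphere inside the ball of radius $1+\eta'/2$ in $V_{jl}$ gives exactly $(1+2/\eta')^{\dim V_{jl}}\le(1+2/\eta')^{2k}$.
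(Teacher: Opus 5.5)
Your proof is correct and follows the paper's argument step for step. Both split the normalized secants into at most $N^2$ pieces that lie in subspaces of dimension at most $2k$, take a maximal $\eta\sqrt{\underline{\mu}}$-separated subset of each piece so the net stays inside $\mathbb B_{c_r}$ with the bound $(1+2/(\eta\sqrt{\underline{\mu}}))^{2k}$, and transfer to the sampling seminorm via Lemma~\ref{lem:active-support-seminorm-comparison}; the one cosmetic difference is that you intersect with the unit sphere of $\operatorname{range}([A_j,-A_l])$ where the paper intersects with the cone $A_a(Q_a)-A_b(Q_b)$, which changes nothing because only the ambient dimension enters the count.
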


\begin{proof}
  Let $Q_1,\dots,Q_{N}$ and
  $A_1,\dots,A_{N}$ be as in
  Definition~\ref{def:generator-structure}. Define
  \[
    C_j=A_j(Q_j),
    \qquad
    j=1,\dots,N.
  \]
  Since each $Q_j$ is a cone and each $A_j$ is linear, each $C_j$ is a cone
  contained in a subspace of dimension at most $k$. Moreover,
  \[
    \mathbb F_{c_r}
    \subseteq
    \bigcup_{j=1}^{N} C_j.
  \]

  For each pair $(a,b)$, define
  \[
    D_{a,b}=C_a-C_b,
    \qquad
    L_{a,b}=\operatorname{span}(D_{a,b}).
  \]
  Then
  \[
    \mathbb F_{c_r}-\mathbb F_{c_r}
    \subseteq
    \bigcup_{a,b=1}^{N}D_{a,b},
  \]
  and
  \[
    \dim L_{a,b}\leq 2k.
  \]

  Now define
  \[
    E_{a,b}
    =
    \mathbb B_{c_r}\cap D_{a,b}.
  \]
  Then
  \[
    \mathbb B_{c_r}
    \subseteq
    \bigcup_{a,b=1}^{N} E_{a,b}.
  \]
  Indeed, if $\mathbf u\in\mathbb B_{c_r}$, then
  $\mathbf u=\mathbf h/\|\mathbf h\|_2$ for some nonzero
  $\mathbf h\in\mathbb F_{c_r}-\mathbb F_{c_r}$. Since
  $\mathbf h\in D_{a,b}$ for some pair $(a,b)$ and $D_{a,b}$ is a cone, also
  $\mathbf u\in D_{a,b}$. Thus $\mathbf u\in E_{a,b}$.

  For each pair $(a,b)$ with $E_{a,b}\neq\emptyset$, let
  $\mathcal N_{a,b}\subseteq E_{a,b}$ be a maximal
  $\eta\sqrt{\underline\mu}$-separated subset of $E_{a,b}$ in
  Euclidean norm. Then $\mathcal N_{a,b}$ is an
  $\eta\sqrt{\underline\mu}$-net for $E_{a,b}$ in Euclidean norm.

  Since $E_{a,b}$ is contained in the unit sphere of the subspace $L_{a,b}$
  and $\dim L_{a,b}\leq 2k$, the standard volumetric packing bound gives
  \[
    |\mathcal N_{a,b}|
    \leq
    \left(
      1+
      \frac{2}
      {\eta\sqrt{\underline\mu}}
    \right)^{2k}.
  \]

  Define
  \[
    \mathcal N
    =
    \bigcup_{a,b=1}^{N}\mathcal N_{a,b},
  \]
  omitting empty sets. Since each $\mathcal N_{a,b}\subseteq E_{a,b}$ and
  each $E_{a,b}\subseteq\mathbb B_{c_r}$, we have
  $
    \mathcal N\subseteq\mathbb B_{c_r}.
  $
  Also,
  \[
    |\mathcal N|
    \leq
    N^2
    \left(
      1+
      \frac{2}
      {\eta\sqrt{\underline\mu}}
    \right)^{2k}.
  \]

  It remains to show that $\mathcal N$ is an $\eta$-net for
  $\mathbb B_{c_r}$ in $\snorm{\cdot}{\mu_{c_s}}$. Fix
  $\mathbf u\in\mathbb B_{c_r}$. Then $\mathbf u\in E_{a,b}$ for some
  $(a,b)$, so there exists $\mathbf v\in\mathcal N_{a,b}$ such that
  \[
    \|\mathbf u-\mathbf v\|_2
    \leq
    \eta\sqrt{\underline\mu}.
  \]
  By Lemma~\ref{lem:active-support-seminorm-comparison},
  \[
    \snorm{\mathbf u-\mathbf v}{\mu_{c_s}}
    \leq
    \underline\mu^{-1/2}
    \|\mathbf u-\mathbf v\|_2
    \leq
    \eta.
  \]
  Hence $\mathcal N$ is an $\eta$-net for $\mathbb B_{c_r}$ in
  $\snorm{\cdot}{\mu_{c_s}}$.
\end{proof}

With the covering estimate in hand, the piecewise-linear theorem is obtained by feeding that cover size into the general Christoffel sampling S-REC theorem from Appendix~\ref{app:cs4ml}.

\begin{proof}[Proof of Theorem~\ref{thm:piecewise-linear-srec}]
  Set
  $
    \eta=\tau/8.
  $
  By Proposition~\ref{prop:piecewise-linear-cover-transfer}, there exists an
  $\eta$-net $\mathcal N\subseteq\mathbb B_{c_r}$ for $\mathbb B_{c_r}$ in
  the sampling seminorm such that
  \[
    |\mathcal N|
    \leq
    N^2
    \left(
      1+
      \frac{16}
      {\tau\sqrt{\underline{\mu}}}
    \right)^{2k}.
  \]
  Therefore
  \[
    \log|\mathcal N|
    \leq
    2\log N
    +
    2k
    \log\left(
      1+
      \frac{16}
      {\tau\sqrt{\underline{\mu}}}
    \right).
  \]
  Hence the stated lower bound on $m$ implies, after adjusting the absolute
  constant hidden in $\gtrsim$, that
  \[
    m
    \gtrsim
    \tau^{-2}
    \Lambda(c_r,c_r,c_s)
    \log\left(\frac{2|\mathcal N|}{\delta}\right).
  \]
  Applying Theorem~\ref{thm:christoffel-srec} gives that, with probability
  at least $1-\delta$, $\mathbf A_\Omega$ satisfies
  $
    \mathrm{S\text{-}REC}
    \left(
      \mathbb F_{c_r},
      \sqrt{1-\tau},
      0
    \right).
  $
\end{proof}

We now verify the ReLU sample complexity theorem from the main text by applying Theorem~\ref{thm:piecewise-linear-srec}, but first introduce one technical lemma from~\cite[Lemma S2.2]{berk2022fourier} (see also \cite{naderi2021beyond}, to which we refer the reader for the proof). 

\begin{lemma}[Finite Cone Decomposition for ReLU Generators]
  \label{lem:relu-cone-decomposition}
  Assume that, for the fixed prompt $c_r$, the map $G\left(\cdot,c_r\right)$ is a bias-free depth-$d$ feedforward ReLU network in the sense of Definition~\ref{def:bias-free-relu} with layer widths
  $
    k = k_0 \leq k_1,\dots,k_{d-1},
    k_d = n
  $,
  with $d \geq 2$, and define
  \[
    \overline{k}_{c_r}
    =
    \left(
      \prod_{\ell=1}^{d-1} k_\ell
    \right)^{1/(d-1)}.
  \]
  Then $G\left(\cdot,c_r\right)$ is $(N,k)$-piecewise linear on $B_2^k(R)$ in the sense of Definition~\ref{def:generator-structure}, with
  \[
      \log N
      \leq
      k(d-1)\log\left(
        \frac{2e\,\overline{k}_{c_r}}{k}
      \right).
    \]
\end{lemma}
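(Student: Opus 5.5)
This lemma is quoted from \cite[Lemma S2.2]{berk2022fourier}. A self-contained proof would count activation patterns and use the positive homogeneity of bias-free ReLU networks.

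\emph{Cones and linearity.} Fix $c_r$ and write $W_\ell=W_\ell(c_r)$. For $\mathbf z\in\mathbb R^k$, let $\mathbf s_\ell(\mathbf z)\in\{0,1\}^{k_\ell}$ be the sign pattern of the pre-activation at layer $\ell=1,\dots,d-1$, and let $S_\ell=\operatorname{diag}(\mathbf s_\ell)$. On the set where the full pattern $(\mathbf s_1,\dots,\mathbf s_{d-1})$ is fixed,
\[
G(\mathbf z,c_r)=W_dS_{d-1}W_{d-1}\cdots S_1W_1\mathbf z.
\]
This is a linear map $A_j$. The set itself is cut out by homogeneous linear inequalities of the form $\pm(W_\ell S_{\ell-1}\cdots S_1W_1\mathbf z)_i\ge 0$, using closed versions with ties assigned arbitrarily. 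Hence it is a polyhedral cone $Q_j$, because there are no biases. The cones cover $\mathbb R^k\supseteq B_2^k(R)$, so $G(\cdot,c_r)$ is $(N,k)$-piecewise linear with $N$ equal to the number of realized activation patterns.

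\emph{Counting patterns.} We bound $N$ layer by layer. Suppose the patterns of layers $1,\dots,\ell-1$ are fixed. Then layer $\ell$'s pre-activations are $k_\ell$ linear functionals on the current cone, a subset of $\mathbb R^k$. By the classical hyperplane-arrangement bound (Cover/Zaslavsky), $k_\ell$ hyperplanes through the origin in $\mathbb R^k$ realize at most
\[
\sum_{i=0}^{k}\binom{k_\ell}{i}\le\Big(\frac{e k_\ell}{k}\Big)^k
\]
sign patterns. This uses $k\le k_\ell$. The slack factor $2$ in the target bound absorbs the tie and closure conventions as well as the difference between sign patterns and arrangement regions.

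\emph{Conclusion.} Multiplying the per-layer counts gives
\[
N\le\prod_{\ell=1}^{d-1}\Big(\frac{2e k_\ell}{k}\Big)^k .
\]
Taking logarithms,
\[
\log N\le k\sum_{\ell=1}^{d-1}\log\frac{2ek_\ell}{k}=k(d-1)\log\frac{2e\overline{k}_{c_r}}{k},
\]
by the definition of $\overline{k}_{c_r}$ as a geometric mean. The main obstacle is the recursive counting: the refinement at each layer must be bounded uniformly inside each existing cone, and degenerate ties must be handled so the constants stay within $2e$. Everything else is bookkeeping.
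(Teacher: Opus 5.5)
Your argument is correct and is essentially the standard one behind \cite[Lemma S2.2]{berk2022fourier} and \cite{naderi2021beyond}; the paper gives no proof of its own and simply defers to those references. The argument runs as follows: activation patterns cut $\mathbb R^k$ into closed polyhedral cones on which $G(\cdot,c_r)$ is linear (ties are harmless since $\sigma(0)=0$), and a per-layer hyperplane-arrangement count is multiplied over $\ell=1,\dots,d-1$. One small correction to your justification: the bound $\sum_{i=0}^{k}\binom{k_\ell}{i}$ on realized $\{>0,\le 0\}$ patterns comes from Sauer--Shelah for homogeneous halfspaces in $\mathbb R^k$ (VC dimension $k$), not from Cover/Zaslavsky, which counts only full-dimensional regions. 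That bound already gives $\log N\le k(d-1)\log(e\,\overline{k}_{c_r}/k)$, so the factor $2$ is pure slack and is not needed to absorb ties.
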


Once the ReLU network has been rewritten as a piecewise-linear model with an explicit region count we are ready to prove Theorem~\ref{thm:relu-srec}.

\begin{proof}[Proof of Theorem~\ref{thm:relu-srec}]
  Lemma~\ref{lem:relu-cone-decomposition} verifies the hypotheses of Theorem~\ref{thm:piecewise-linear-srec} and supplies the stated bound on $\log N$. Applying Theorem~\ref{thm:piecewise-linear-srec} gives the claim.
\end{proof}

\subsection{A Lipschitz specialization}\label{app:lipschitz-specialization}

We now prove the Lipschitz specialization stated in the main text. Throughout this subsection, assume that $G\left(\cdot,c_r\right)$ is $L$-Lipschitz on $B_2^k\left(R\right)$. That is to say 
  \[
    \left\|
    G\left(\mathbf z_1,c_r\right)
    -
    G\left(\mathbf z_2,c_r\right)
    \right\|_2
    \leq
    L\left\|\mathbf z_1-\mathbf z_2\right\|_2,
    \qquad
    \mathbf z_1,\mathbf z_2\in B_2^k(R).
  \] 
The Euclidean covering argument below is the standard image-of-a-net construction with truncation on $B_2^k\left(R\right)$, similar to that in \cite[Lemma 4.1 and its proof]{bora2017compressed}. The only additional step is to convert the Euclidean cover into one measured in the sampling seminorm. That is to say we first build a Euclidean cover in latent space, and then transfer that cover to the sampling seminorm.

\begin{proposition}
  \label{prop:lipschitz-cover-transfer}
  Assume that $G(\cdot,c_r)$ is $L$-Lipschitz on $B_2^k(R)$. Let $0<\eta\leq 1$ and $\xi>0$. Then
  there exists a set $\mathcal N\subseteq\mathbb B_{c_r}$ such that
  $\mathcal N$ is an $\eta$-net for $\mathbb B_{c_r,\xi}$ in the sampling
  seminorm $\snorm{\cdot}{\mu_{c_s}}$ and
  \[
    |\mathcal N|
    \leq
    \left(
      1+
      \frac{8LR}
      {\xi\eta\sqrt{\underline{\mu}}}
    \right)^{2k}.
  \]
\end{proposition}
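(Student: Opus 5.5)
The plan is the standard image-of-a-net construction: build a Euclidean cover in latent space, push it forward through $G(\cdot,c_r)$ to obtain a Euclidean cover of the truncated normalized secants, and convert it to the sampling seminorm using Lemma~\ref{lem:active-support-seminorm-comparison}.

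\emph{Step 1 (latent net).} Fix $\varepsilon>0$, to be chosen later. Let $\mathcal M\subseteq B_2^k(R)$ be a maximal $\varepsilon$-separated subset of $B_2^k(R)$. It is an $\varepsilon$-net for $B_2^k(R)$, its points lie inside the ball, and the volumetric bound gives $|\mathcal M|\le(1+2R/\varepsilon)^k$. Taking pairs gives a set $\mathcal M\times\mathcal M$ of size at most $(1+2R/\varepsilon)^{2k}$. Define
\[
\mathcal N=\left\{\frac{G(\mathbf z_1',c_r)-G(\mathbf z_2',c_r)}{\|G(\mathbf z_1',c_r)-G(\mathbf z_2',c_r)\|_2}:(\mathbf z_1',\mathbf z_2')\in\mathcal M\times\mathcal M,\ G(\mathbf z_1',c_r)\neq G(\mathbf z_2',c_r)\right\}.
\]
Since $\mathbf z_1',\mathbf z_2'\in B_2^k(R)$, every element of $\mathcal N$ is a normalized nonzero secant of $\mathbb F_{c_r}$. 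Hence $\mathcal N\subseteq\mathbb B_{c_r}$ and $|\mathcal N|\le(1+2R/\varepsilon)^{2k}$.

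\emph{Step 2 (Euclidean approximation of truncated secants).} Take $\mathbf u=\mathbf h/\|\mathbf h\|_2\in\mathbb B_{c_r,\xi}$, where $\mathbf h=G(\mathbf z_1,c_r)-G(\mathbf z_2,c_r)$ and $\|\mathbf h\|_2\ge\xi$.
\begin{itemize}
\item Choose $\mathbf z_i'\in\mathcal M$ with $\|\mathbf z_i-\mathbf z_i'\|_2\le\varepsilon$, and set $\mathbf h'=G(\mathbf z_1',c_r)-G(\mathbf z_2',c_r)$. The Lipschitz property gives $\|\mathbf h-\mathbf h'\|_2\le2L\varepsilon$.
\item If $2L\varepsilon<\xi$, then $\|\mathbf h'\|_2\ge\xi-2L\varepsilon>0$, so $\mathbf h'/\|\mathbf h'\|_2\in\mathcal N$.
\item The elementary inequality $\bigl\|\mathbf a/\|\mathbf a\|_2-\mathbf b/\|\mathbf b\|_2\bigr\|_2\le2\|\mathbf a-\mathbf b\|_2/\|\mathbf a\|_2$ follows from the triangle inequality and $\bigl|\|\mathbf a\|_2-\|\mathbf b\|_2\bigr|\le\|\mathbf a-\mathbf b\|_2$.
\end{itemize}
Combining these, $\|\mathbf u-\mathbf h'/\|\mathbf h'\|_2\|_2\le4L\varepsilon/\xi$. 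This normalization step, where the truncation $\|\mathbf h\|_2\ge\xi$ is essential, is the main point of care; it is the reason the net only covers $\mathbb B_{c_r,\xi}$ rather than all of $\mathbb B_{c_r}$.

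\emph{Step 3 (transfer and parameter choice).} By Lemma~\ref{lem:active-support-seminorm-comparison}, the seminorm distance from $\mathbf u$ to its approximant is at most $\underline\mu^{-1/2}\cdot4L\varepsilon/\xi$. Choose $\varepsilon=\xi\eta\sqrt{\underline\mu}/(4L)$, which makes this bound equal to $\eta$, so $\mathcal N$ is an $\eta$-net for $\mathbb B_{c_r,\xi}$ in $\snorm{\cdot}{\mu_{c_s}}$.

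This choice also satisfies the nondegeneracy requirement of Step 2. Indeed $\eta\le1$ and $\underline\mu\le1$, since $\mu_{c_s}$ is a probability vector, so $2L\varepsilon=\xi\eta\sqrt{\underline\mu}/2<\xi$.

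Finally, with this $\varepsilon$ we have $2R/\varepsilon=8LR/(\xi\eta\sqrt{\underline\mu})$, which gives the claimed cardinality bound
\[
|\mathcal N|\le\left(1+\frac{8LR}{\xi\eta\sqrt{\underline\mu}}\right)^{2k}.
\]
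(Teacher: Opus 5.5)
Your proposal is correct and follows essentially the same route as the paper: a Euclidean latent net of radius $\xi\eta\sqrt{\underline{\mu}}/(4L)$ (the paper's $s/(2L)$ with $s=\xi\eta\sqrt{\underline{\mu}}/2$) is pushed forward to pairwise secants, normalized via the same $2\|\mathbf a-\mathbf b\|_2/\|\mathbf a\|_2$ estimate, and transferred to $\snorm{\cdot}{\mu_{c_s}}$ by Lemma~\ref{lem:active-support-seminorm-comparison}, which yields the identical cardinality bound. Your explicit use of a maximal separated subset of $B_2^k(R)$ makes precise a point the paper's proof uses only implicitly: the net points lie in the ball, where $G(\cdot,c_r)$ is Lipschitz and takes values in $\mathbb F_{c_r}$.
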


\begin{proof}
  Set
  \[
    s=\frac{\xi\eta\sqrt{\underline{\mu}}}{2}.
  \]
  Since $0<\eta\leq 1$ and $\underline{\mu}\leq 1$, we have
  $s\leq \xi/2$.

  We first build an explicit Euclidean $s$-net for
  $\mathbb F_{c_r}-\mathbb F_{c_r}$ with centers inside
  $\mathbb F_{c_r}-\mathbb F_{c_r}$. Let $\mathcal Z$ be an
  $s/(2L)$-net for $B_2^k(R)$ in Euclidean norm. By the standard
  volumetric bound,
  \[
    |\mathcal Z|
    \leq
    \left(
      1+
      \frac{4LR}{s}
    \right)^k.
  \]
  Define
  \[
    \mathcal H
    =
    \left\{
      G(\mathbf z_1,c_r)-G(\mathbf z_2,c_r)
      :
      \mathbf z_1,\mathbf z_2\in\mathcal Z
    \right\}.
  \]
  Then
  \[
    |\mathcal H|
    \leq
    |\mathcal Z|^2
    \leq
    \left(
      1+
      \frac{4LR}{s}
    \right)^{2k}.
  \]
  Moreover, $\mathcal H$ is an $s$-net for
  $\mathbb F_{c_r}-\mathbb F_{c_r}$ in Euclidean norm. Indeed, for any
  \[
    \mathbf h
    =
    G(\mathbf z_1,c_r)-G(\mathbf z_2,c_r)
    \in
    \mathbb F_{c_r}-\mathbb F_{c_r},
  \]
  choose $\tilde{\mathbf z}_1,\tilde{\mathbf z}_2\in\mathcal Z$ such that
  \[
    \|\mathbf z_j-\tilde{\mathbf z}_j\|_2
    \leq
    \frac{s}{2L},
    \qquad
    j=1,2.
  \]
  Then, by the Lipschitz property,
  \[
    \left\|
      \mathbf h
      -
      \left(
        G(\tilde{\mathbf z}_1,c_r)
        -
        G(\tilde{\mathbf z}_2,c_r)
      \right)
    \right\|_2
    \leq
    L\|\mathbf z_1-\tilde{\mathbf z}_1\|_2
    +
    L\|\mathbf z_2-\tilde{\mathbf z}_2\|_2
    \leq
    s.
  \]

  Now define
  \[
    \mathcal N
    =
    \left\{
      \frac{\mathbf g}{\left\|\mathbf g\right\|_2}
      :
      \mathbf g\in\mathcal H,\ \mathbf g\neq \mathbf 0
    \right\}.
  \]
  Since every nonzero $\mathbf g\in\mathcal H$ lies in
  $\mathbb F_{c_r}-\mathbb F_{c_r}$, we have
  \[
    \mathcal N\subseteq\mathbb B_{c_r}.
  \]

  We claim that $\mathcal N$ covers $\mathbb B_{c_r,\xi}$ in
  $\snorm{\cdot}{\mu_{c_s}}$ with radius $\eta$. Fix
  $\mathbf u\in\mathbb B_{c_r,\xi}$. Then
  \[
    \mathbf u=\frac{\mathbf h}{\|\mathbf h\|_2}
  \]
  for some $\mathbf h\in\mathbb F_{c_r}-\mathbb F_{c_r}$ with
  $\left\|\mathbf h \right\|_2\geq \xi$. Since $\mathcal H$ is an $s$-net for
  $\mathbb F_{c_r}-\mathbb F_{c_r}$, choose $\mathbf g\in\mathcal H$ such
  that
  \[
    \left\|\mathbf h-\mathbf g\right\|_2\leq s.
  \]
  Since $s\leq \xi/2$ and $\left\|\mathbf h\right\|_2\geq\xi$, we have
  \[
    \left\|\mathbf g\right\|_2
    \geq
    \left\|\mathbf h\right\|_2-\left\|\mathbf h-\mathbf g\right\|_2
    \geq
    \xi-s
    \geq
    \xi/2
    >
    0.
  \]
  Hence $\mathbf v=\mathbf g/\left\|\mathbf g\right\|_2$ belongs to $\mathcal N$.
  
We next estimate the Euclidean distance between the normalized vectors
\[
 \left\|\mathbf u-\mathbf v\right\|_2 = 
  \left\|
    \frac{\mathbf h}{\left\|\mathbf h\right\|_2}
    -
    \frac{\mathbf g}{\left\|\mathbf g\right\|_2}
  \right\|_2
  \leq
  \left\|
    \frac{\mathbf h-\mathbf g}{\left\|\mathbf h\right\|_2}
  \right\|_2
  +
  \left\|
    \frac{\mathbf g}{\left\|\mathbf h\right\|_2}
    -
    \frac{\mathbf g}{\left\|\mathbf g\right\|_2}
  \right\|_2.
\]
The first term is $\left\|\mathbf h-\mathbf g\right\|_2/\left\|\mathbf h\right\|_2$.
For the second term,
\[
  \left\|
    \frac{\mathbf g}{\left\|\mathbf h\right\|_2}
    -
    \frac{\mathbf g}{\left\|\mathbf g\right\|_2}
  \right\|_2
  =
  \left\|\mathbf g\right\|_2
  \left|
    \frac{1}{\left\|\mathbf h\right\|_2}
    -
    \frac{1}{\left\|\mathbf g\right\|_2}
  \right|
  =
  \frac{\left|\left\|\mathbf g\right\|_2-\left\|\mathbf h\right\|_2\right|}
       {\left\|\mathbf h\right\|_2}
  \leq
  \frac{\left\|\mathbf h-\mathbf g\right\|_2}{\left\|\mathbf h\right\|_2}.
\]
Therefore
\[
  \left\|\mathbf u-\mathbf v\right\|_2
  \leq
  \frac{2\left\|\mathbf h-\mathbf g\right\|_2}{\left\|\mathbf h\right\|_2}
  \leq
  \frac{2s}{\xi}
  =
  \eta\sqrt{\underline{\mu}}.
\]

By Lemma~\ref{lem:active-support-seminorm-comparison},
\[
  \snorm{\mathbf u-\mathbf v}{\mu_{c_s}}
  \leq
  \underline{\mu}^{-1/2}
  \left\|\mathbf u-\mathbf v\right\|_2
  \leq
  \eta.
\]
Thus $\mathcal N$ is an $\eta$-net for $\mathbb B_{c_r,\xi}$ in
$\snorm{\cdot}{\mu_{c_s}}$. Finally, we estimate
\[
  \left|\mathcal N\right|
  \leq
  \left|\mathcal H\right|
  \leq
  \left(
    1+
    \frac{4LR}{s}
  \right)^{2k}
  =
  \left(
    1+
    \frac{8LR}
         {\xi\eta\sqrt{\underline{\mu}}}
  \right)^{2k},
\]
as desired.
\end{proof}

 With this argument, we are now ready to prove Theorem~\ref{thm:lipschitz-srec}.

\begin{proof}[Proof of Theorem~\ref{thm:lipschitz-srec}]
  Set
  $
    \eta=\tau/8.
  $
  By Proposition~\ref{prop:lipschitz-cover-transfer}, there exists a set
  $\mathcal N\subseteq\mathbb B_{c_r}$ that is an $\eta$-net for
  $\mathbb B_{c_r,\xi}$ in the sampling seminorm and satisfies
  \[
    \left|\mathcal N\right|
    \leq
    \left(
      1+
      \frac{64LR}
           {\xi\tau\sqrt{\underline{\mu}}}
    \right)^{2k}.
  \]
  Therefore the stated lower bound on $m$ implies, after adjusting the
  absolute constant hidden in $\gtrsim$, that
  \[
    m
    \gtrsim
    \left(\frac{\tau}{2}\right)^{-2}
    \Lambda\left(c_r,c_r,c_s\right)
    \log\left(
      \frac{2\left|\mathcal N\right|}{\delta}
    \right).
  \]

  Since $\mathcal N\subseteq\mathbb B_{c_r}$, every
  $\mathbf u\in\mathcal N$ is a normalized nonzero secant. Applying
  Lemma~\ref{lem:fixed-direction} with $\varepsilon=\tau/2$ and taking a
  union bound over $\mathcal N$, we obtain that, with probability at least
  $1-\delta$,
  \[
    \sqrt{1-\tau/2}
    \leq
    \left\|\mathbf A_\Omega\mathbf u\right\|_2
    \leq
    \sqrt{1+\tau/2},
    \qquad
    \mathbf u\in\mathcal N.
  \]

  We now extend the lower bound from $\mathcal N$ to
  $\mathbb B_{c_r,\xi}$. Fix $\mathbf v\in\mathbb B_{c_r,\xi}$. Since
  $\mathcal N$ is an $\eta$-net for $\mathbb B_{c_r,\xi}$ in
  $\snorm{\cdot}{\mu_{c_s}}$, there exists $\mathbf u\in\mathcal N$ such that
  \[
    \snorm{\mathbf v-\mathbf u}{\mu_{c_s}}\leq \eta.
  \]
  By Lemma~\ref{lem:seminorm-domination},
  \[
    \left\|\mathbf A_\Omega\left(\mathbf v-\mathbf u\right)\right\|_2
    \leq
    \eta.
  \]
  Therefore,
  \[
    \left\|\mathbf A_\Omega\mathbf v\right\|_2
    \geq
    \left\|\mathbf A_\Omega\mathbf u\right\|_2
    -
    \left\|\mathbf A_\Omega\left(\mathbf v-\mathbf u\right)\right\|_2
    \geq
    \sqrt{1-\tau/2}-\eta.
  \]
  Since $\eta=\tau/8$ and
  \[
    \sqrt{1-\tau/2}-\sqrt{1-\tau}
    =
    \frac{\tau/2}
         {\sqrt{1-\tau/2}+\sqrt{1-\tau}}
    \geq
    \frac{\tau}{4},
  \]
  we have
  \[
    \sqrt{1-\tau/2}-\eta
    \geq
    \sqrt{1-\tau}
    +
    \frac{\tau}{4}
    -
    \frac{\tau}{8}
    \geq
    \sqrt{1-\tau}.
  \]
  Hence
  \begin{equation}
  \left\|\mathbf A_\Omega\mathbf v\right\|_2
  \geq
  \sqrt{1-\tau},
  \qquad
  \mathbf v\in\mathbb B_{c_r,\xi}.
  \label{eq:srec-lower-bound-on-normalized-secants}
\end{equation}

  Now let
  \[
    \mathbf h\in\mathbb F_{c_r}-\mathbb F_{c_r}.
  \]
  If $\left\|\mathbf h\right\|_2\geq\xi$, then
  $\mathbf v=\mathbf h/\left\|\mathbf h\right\|_2$ belongs to
  $\mathbb B_{c_r,\xi}$, and and Equation~\eqref{eq:srec-lower-bound-on-normalized-secants} gives
  \[
    \left\|\mathbf A_\Omega\mathbf h\right\|_2
    \geq
    \sqrt{1-\tau}\left\|\mathbf h\right\|_2.
  \]
  If instead $\left\|\mathbf h\right\|_2<\xi$, then
  \[
    \sqrt{1-\tau}\left\|\mathbf h\right\|_2
    -
    \sqrt{1-\tau}\,\xi
    \leq
    0,
  \]
  and therefore
  \[
    \left\|\mathbf A_\Omega\mathbf h\right\|_2
    \geq
    0
    \geq
    \sqrt{1-\tau}\left\|\mathbf h\right\|_2
    -
    \sqrt{1-\tau}\,\xi.
  \]
  Thus in either case,
  \[
    \left\|\mathbf A_\Omega\mathbf h\right\|_2
    \geq
    \sqrt{1-\tau}\left\|\mathbf h\right\|_2
    -
    \sqrt{1-\tau}\,\xi,
    \qquad
    \mathbf h\in\mathbb F_{c_r}-\mathbb F_{c_r}.
  \]
  Hence $\mathbf A_\Omega$ satisfies
  $
    \mathrm{S\text{-}REC}
    \left(
      \mathbb F_{c_r},
      \sqrt{1-\tau},
      \sqrt{1-\tau}\,\xi
    \right).
  $
\end{proof}

\subsection{Recovery Guarantees}\label{app:agnostic-recovery}
We now prove the deterministic recovery guarantees. Once the S-REC holds, reconstruction error is controlled by the best model approximation, the measurement residual of that approximation, the noise, and the optimization error. 

\begin{theorem}[Agnostic Recovery Under S-REC]
  \label{thm:agnostic-recovery-srec}
  Suppose that $\mathbf y=\mathbf A_\Omega\mathbf f^*+\mathbf e$. Let $c_r\in\mathcal C$ be the recovery prompt and suppose that
  $\widehat{\mathbf z}$ is an  $\omega$-minimizer in the sense of Equation~\eqref{eq: omega minimizer} with estimator $\widehat{\mathbf f}=G(\widehat{\mathbf z},c_r)$.
  If $\mathbf A_\Omega$ satisfies
  $\mathrm{S\text{-}REC}(\mathbb F_{c_r},\gamma,q)$
  for some $\gamma>0$ and $q\geq 0$, then
  \[
    \left\|\widehat{\mathbf f}-\mathbf f^*\right\|_2
    \leq
    \inf_{\mathbf f\in\mathbb F_{c_r}}
    \left[
      \left\|\mathbf f^*-\mathbf f\right\|_2
      +
      \frac{2}{\gamma}
      \left\|\mathbf A_\Omega\left(\mathbf f^*-\mathbf f\right)\right\|_2
    \right]
    +
    \frac{2\left\|\mathbf e\right\|_2+\omega+q}{\gamma}.
  \]
\end{theorem}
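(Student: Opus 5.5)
The plan is the standard Bora-style agnostic argument. We fix an arbitrary comparison point $\mathbf f\in\mathbb F_{c_r}$, bound $\|\widehat{\mathbf f}-\mathbf f^*\|_2$ in terms of $\mathbf f$, and then take the infimum over $\mathbf f$ at the very end. The inequality is entirely deterministic: no probability enters, only the S-REC hypothesis, the $\omega$-minimizer property, and the triangle inequality.

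First, by the triangle inequality,
\[
\|\widehat{\mathbf f}-\mathbf f^*\|_2\le \|\mathbf f^*-\mathbf f\|_2+\|\widehat{\mathbf f}-\mathbf f\|_2 .
\]
Since both $\widehat{\mathbf f}$ and $\mathbf f$ lie in $\mathbb F_{c_r}$, the hypothesis $\mathrm{S\text{-}REC}(\mathbb F_{c_r},\gamma,q)$ gives
\[
\gamma\|\widehat{\mathbf f}-\mathbf f\|_2\le \|\mathbf A_\Omega(\widehat{\mathbf f}-\mathbf f)\|_2+q .
\]

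Next we bound the measured difference by inserting $\mathbf y$:
\[
\|\mathbf A_\Omega(\widehat{\mathbf f}-\mathbf f)\|_2\le \|\mathbf A_\Omega\widehat{\mathbf f}-\mathbf y\|_2+\|\mathbf y-\mathbf A_\Omega\mathbf f\|_2 .
\]
To control the first term we need a feasible latent point for the $\omega$-minimizer comparison. Since $\mathbf f\in\mathbb F_{c_r}$, we can write $\mathbf f=G(\mathbf z,c_r)$ with $\mathbf z\in B_2^k(R)$, and Equation~\eqref{eq: omega minimizer} then yields $\|\mathbf y-\mathbf A_\Omega\widehat{\mathbf f}\|_2\le \|\mathbf y-\mathbf A_\Omega\mathbf f\|_2+\omega$. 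Combining this with the previous display gives
\[
\|\mathbf A_\Omega(\widehat{\mathbf f}-\mathbf f)\|_2\le 2\|\mathbf y-\mathbf A_\Omega\mathbf f\|_2+\omega .
\]
Substituting $\mathbf y=\mathbf A_\Omega\mathbf f^*+\mathbf e$, we get $\|\mathbf y-\mathbf A_\Omega\mathbf f\|_2\le\|\mathbf A_\Omega(\mathbf f^*-\mathbf f)\|_2+\|\mathbf e\|_2$.

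Chaining these bounds gives
\[
\|\widehat{\mathbf f}-\mathbf f^*\|_2\le \|\mathbf f^*-\mathbf f\|_2+\tfrac{2}{\gamma}\|\mathbf A_\Omega(\mathbf f^*-\mathbf f)\|_2+\tfrac{2\|\mathbf e\|_2+\omega+q}{\gamma}.
\]
The last term does not depend on $\mathbf f$, so taking the infimum over $\mathbf f\in\mathbb F_{c_r}$ gives exactly the claimed bound.

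There is no real obstacle here; the only point requiring care is that the $\omega$-minimizer comparison needs the comparison point to come from a latent vector in $B_2^k(R)$. This holds by the definition of $\mathbb F_{c_r}$ as the image of $B_2^k(R)$. (Later, for Theorem~\ref{thm:prompt-mismatched-recovery}, one would bound $\|\mathbf A_\Omega(\mathbf f^*-\mathbf f)\|_2$ via Lemma~\ref{lem:seminorm-domination} and $K(\mathbb F_{c_*}-\mathbb F_{c_r})$, which yields the factor $\sqrt{\Lambda(c_*,c_r,c_s)}$, but that step is not needed here.)
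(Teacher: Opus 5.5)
Your proposal is correct and follows essentially the same route as the paper: fix a comparison point in $\mathbb F_{c_r}$, use the $\omega$-minimizer property and the triangle inequality to get $\|\mathbf A_\Omega(\widehat{\mathbf f}-\mathbf f)\|_2\le 2\|\mathbf A_\Omega(\mathbf f^*-\mathbf f)\|_2+2\|\mathbf e\|_2+\omega$, apply S-REC, and take the infimum. You also note explicitly that the comparison point must come from a latent vector in $B_2^k(R)$, a detail the paper leaves implicit.
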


\begin{proof}[Proof]
  Let $\bar{\mathbf f}\in\mathbb{F}_{c_r}$ be arbitrary. Since $\widehat{\mathbf f}$ is an $\omega$-approximate weighted residual minimizer,
  \[
    \left\|
    \mathbf y
    -
    \mathbf A_\Omega\widehat{\mathbf f}
    \right\|_2
    \le
    \left\|
    \mathbf y
    -
    \mathbf A_\Omega\bar{\mathbf f}
    \right\|_2
    +
    \omega.
  \]
  Using
  \[
    \mathbf y
    =
    \mathbf A_\Omega\mathbf f^*+\mathbf e,
  \]
  this becomes
  \[
    \left\|
    \mathbf A_\Omega\left(\widehat{\mathbf f}-\mathbf f^*\right)
    -
    \mathbf e
    \right\|_2
    \le
    \left\|
    \mathbf A_\Omega\left(\bar{\mathbf f}-\mathbf f^*\right)
    -
    \mathbf e
    \right\|_2
    +
    \omega.
  \]
  Hence, by the triangle inequality,
  \[
    \left\|
    \mathbf A_\Omega\left(\widehat{\mathbf f}-\bar{\mathbf f}\right)
    \right\|_2
    \le
    2
    \left\|
    \mathbf A_\Omega\left(\bar{\mathbf f}-\mathbf f^*\right)
    \right\|_2
    +
    2\left\|\mathbf e\right\|_2
    +
    \omega.
  \]
  Since $\widehat{\mathbf f},\bar{\mathbf f}\in\mathbb{F}_{c_r}$, the S-REC assumption gives
  \[
    \gamma\left\|\widehat{\mathbf f}-\bar{\mathbf f}\right\|_2
    \le
    \left\|
    \mathbf A_\Omega\left(\widehat{\mathbf f}-\bar{\mathbf f}\right)
    \right\|_2
    +
    q.
  \]
  Therefore
  \[
    \left\|\widehat{\mathbf f}-\bar{\mathbf f}\right\|_2
    \le
    \frac{
      2
      \left\|
      \mathbf A_\Omega\left(\bar{\mathbf f}-\mathbf f^*\right)
      \right\|_2
      +
      2\left\|\mathbf e\right\|_2
      +
      \omega
      +
      q
    }{\gamma}.
  \]
  By the triangle inequality again,
  \[
    \left\|\widehat{\mathbf f}-\mathbf f^*\right\|_2
    \le
    \left\|\widehat{\mathbf f}-\bar{\mathbf f}\right\|_2
    +
    \left\|\bar{\mathbf f}-\mathbf f^*\right\|_2,
  \]
  so
  \[
    \left\|\widehat{\mathbf f}-\mathbf f^*\right\|_2
    \le
    \left\|\bar{\mathbf f}-\mathbf f^*\right\|_2
    +
    \frac{
      2
      \left\|
      \mathbf A_\Omega\left(\bar{\mathbf f}-\mathbf f^*\right)
      \right\|_2
      +
      2\left\|\mathbf e\right\|_2
      +
      \omega
      +
      q
    }{\gamma}.
  \]
  Taking the infimum over $\bar{\mathbf f}\in\mathbb{F}_{c_r}$ yields the claim.
\end{proof}

We now relate this result to the best achievable measurement-domain mismatch over $\mathbb{F}_{c_r}$ when the signal we recover is in the range of the generator.

\begin{proposition}
  \label{prop:cross-class-meas-sig}
  Let $\mathbf f^*\in \mathbb F_{c_*}$. Then, for every
  $\mathbf f\in\mathbb F_{c_r}$,
  \[
    \left\|
    \mathbf A_\Omega\left(\mathbf f^*-\mathbf f\right)
    \right\|_2
    \leq
    \sqrt{\Lambda(c_*,c_r,c_s)}
    \left\|
    \mathbf f^*-\mathbf f
    \right\|_2.
  \]
  Consequently,
  \[
    \inf_{\mathbf f\in\mathbb F_{c_r}}
    \left\|
    \mathbf A_\Omega\left(\mathbf f^*-\mathbf f\right)
    \right\|_2
    \leq
    \sqrt{\Lambda(c_*,c_r,c_s)}
    \inf_{\mathbf f\in\mathbb F_{c_r}}
    \left\|
    \mathbf f^*-\mathbf f
    \right\|_2.
  \]
\end{proposition}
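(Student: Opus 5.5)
The plan is to bound $\|\mathbf A_\Omega \mathbf g\|_2$ deterministically, for every draw of $\Omega$, whenever $\mathbf g=\mathbf f^*-\mathbf f$ is a cross-class secant. Fix $\mathbf f\in\mathbb F_{c_r}$ and set $\mathbf g=\mathbf f^*-\mathbf f\in\mathbb F_{c_*}-\mathbb F_{c_r}$. If $\mathbf g=\mathbf 0$ both sides vanish, so assume $\mathbf g\neq\mathbf 0$. By the definition of the Christoffel function applied to the set $\mathbb F_{c_*}-\mathbb F_{c_r}$, for every $i\in D$ we have
\[
  |\mathbf P_i\mathbf F\mathbf g|^2\leq K(\mathbb F_{c_*}-\mathbb F_{c_r})(i)\,\|\mathbf g\|_2^2 .
\]
Dividing by $\mu_{c_s}(i)>0$, which is allowed by Assumption~\ref{assump: full support}, and taking the maximum over $i$ gives
\[
  \snorm{\mathbf g}{\mu_{c_s}}^2\leq \Lambda(c_*,c_r,c_s)\|\mathbf g\|_2^2 .
\]

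Next I would invoke Lemma~\ref{lem:seminorm-domination}, which gives $\|\mathbf A_\Omega\mathbf g\|_2\leq\snorm{\mathbf g}{\mu_{c_s}}$. Combining this with the bound above and taking square roots yields the first claim. This is the same argument as the pointwise bound $X_j\leq M$ in Lemma~\ref{lem:fixed-direction}, applied to the cross-difference class instead of the self-difference class. Since every row of $\mathbf A_\Omega$ is weighted by $1/\mu_{c_s}(I_j)$, the average over the $m$ samples is at most the maximum, so no probabilistic argument is needed.

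For the consequence, take the infimum over $\mathbf f\in\mathbb F_{c_r}$ on both sides of the pointwise inequality. Because it holds for every $\mathbf f$ with the same constant $\sqrt{\Lambda(c_*,c_r,c_s)}$, we get
\[
  \inf_{\mathbf f}\|\mathbf A_\Omega(\mathbf f^*-\mathbf f)\|_2\leq\sqrt{\Lambda(c_*,c_r,c_s)}\,\|\mathbf f^*-\mathbf f'\|_2
\]
for every $\mathbf f'\in\mathbb F_{c_r}$, and the claim follows by taking the infimum over $\mathbf f'$.

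I do not expect a real obstacle. The only points to check are that $\mathbf g$ actually lies in $\mathbb F_{c_*}-\mathbb F_{c_r}$, which holds since $\mathbf f^*\in\mathbb F_{c_*}$, and that the quantity $\Lambda(c_*,c_r,c_s)$ is finite. Finiteness holds because $D$ is finite, $\underline\mu>0$, and $K\leq1$ by Parseval.
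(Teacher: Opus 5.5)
Your proposal is correct and follows essentially the same argument as the paper. Both proofs bound $|\mathbf P_i\mathbf F\mathbf g|^2/\mu_{c_s}(i)$ by $\Lambda(c_*,c_r,c_s)\|\mathbf g\|_2^2$ using the cross-class Christoffel function, average over the sampled indices, and then take infima; the only difference is that you package the averaging step through Lemma~\ref{lem:seminorm-domination}, whereas the paper writes it out directly.
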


\begin{proof}
  Fix $\mathbf f\in\mathbb F_{c_r}$ and set
  $
    \mathbf h=\mathbf f^*-\mathbf f.
  $
  If $\mathbf h=\mathbf 0$, the claim is immediate. Otherwise,
  since $\mathbf f^*\in\mathbb F_{c_*}$ and $\mathbf f\in\mathbb F_{c_r}$,
  we have
  $
    \mathbf h\in \mathbb F_{c_*}-\mathbb F_{c_r}.
  $
  By the definition of the cross-class Christoffel function,
  \[
    \frac{\left|\mathbf P_i\mathbf F\mathbf h\right|^2}
    {\left\|\mathbf h\right\|_2^2}
    \leq
    K\left(\mathbb F_{c_*}-\mathbb F_{c_r}\right)(i),
    \qquad i\in D.
  \]
  Therefore,
  \[
    \frac{\left|\mathbf P_i\mathbf F\mathbf h\right|^2}
    {\mu_{c_s}(i)}
    \leq
    \Lambda(c_*,c_r,c_s)\left\|\mathbf h\right\|_2^2,
    \qquad i\in D.
  \]
  Hence, for each sampled index $I_j$,
  \[
    \frac{\left|\mathbf P_{I_j}\mathbf F\mathbf h\right|^2}
    {\mu_{c_s}(I_j)}
    \leq
    \Lambda(c_*,c_r,c_s)\left\|\mathbf h\right\|_2^2.
  \]
  Averaging gives
  \[
    \left\|\mathbf A_\Omega\mathbf h\right\|_2^2
    =
    \frac{1}{m}
    \sum_{j=1}^m
    \frac{\left|\mathbf P_{I_j}\mathbf F\mathbf h\right|^2}
    {\mu_{c_s}(I_j)}
    \leq
    \Lambda(c_*,c_r,c_s)\left\|\mathbf h\right\|_2^2.
  \]
  Taking square roots proves the pointwise claim. Taking the infimum over
  $\mathbf f\in\mathbb F_{c_r}$ gives the final inequality.
\end{proof}

With this, we can now prove the recovery guarantee in Theorem~\ref{thm:prompt-mismatched-recovery}.
\begin{proof}[Proof of Theorem~\ref{thm:prompt-mismatched-recovery}]
  Combining Proposition~\ref{prop:cross-class-meas-sig} with Theorem~\ref{thm:agnostic-recovery-srec} gives us the desired result.
\end{proof}

\newpage\section{Recovered Image Comparisons}\label{app: recovered img comp}
Here we display panels of recovered images in each signal recovery task: in-range prompt-mismatched image recovery with  $c_* = \texttt{sunset over a sandy coast}$ (Section~\ref{subsec: in-range diff prompt}), out-of-range image recovery (Section~\ref{subsec: out of range}), and in-range prompt-matched image recovery with  $c_* = \texttt{"sunset beach"}$ (Appendix~\ref{app: in range prompt match exp}). All displayed images represent the best reconstruction at a sampling ratio of $0.00125$: for each fixed sampling distribution, displayed sampling ratio, and reconstruction prompt, we rank all available repeats by PSNR, breaking ties by SSIM, and display the highest-ranked reconstruction. Here, PPMAE denotes the per-pixel mean absolute error.

\begin{figure}[htp]
    \centering
    \includegraphics[width=1.0\linewidth]{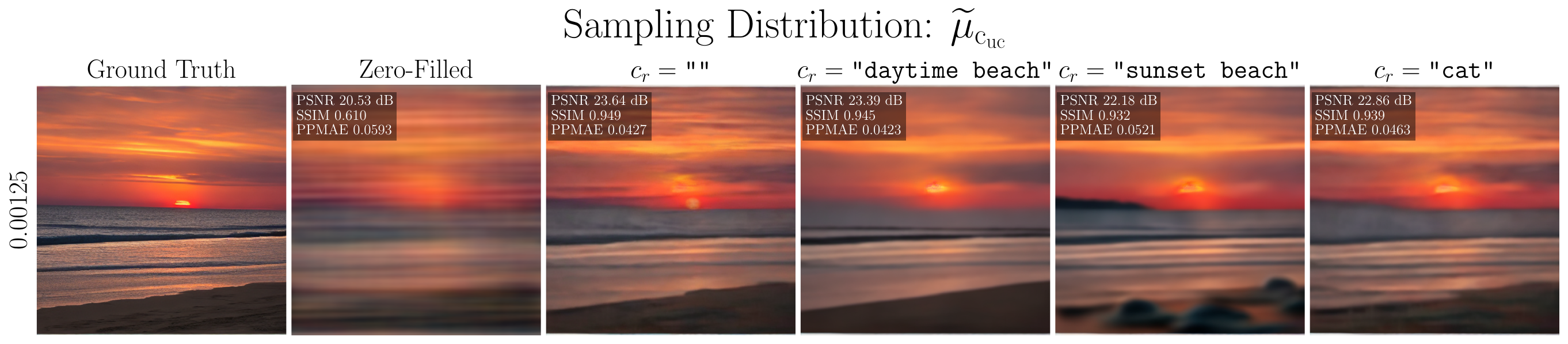}
    \includegraphics[width=1.0\linewidth]{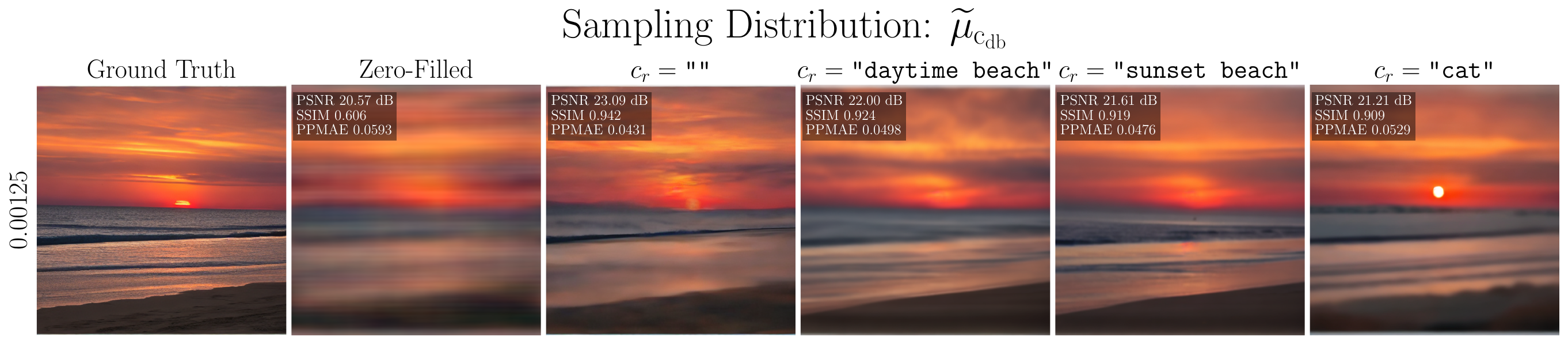}
    \includegraphics[width=1.0\linewidth]{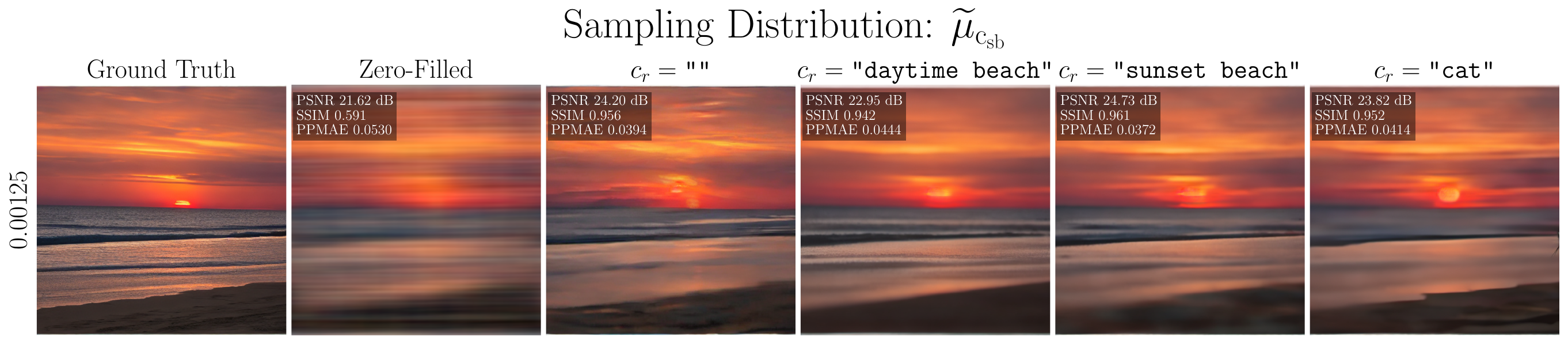}
    \includegraphics[width=1.0\linewidth]{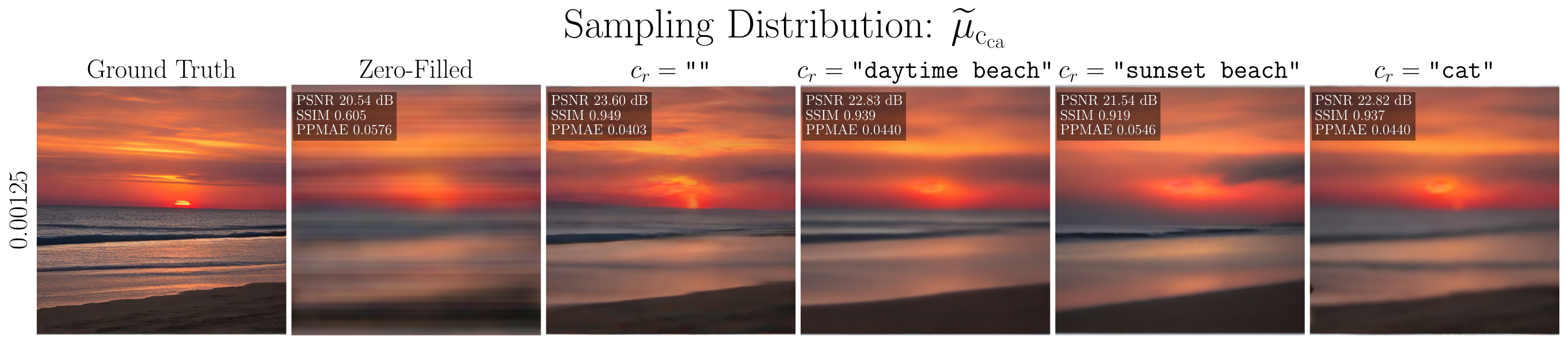}
    \caption{Best recovered images in the in-range prompt-mismatched experiment (Section~\ref{subsec: in-range diff prompt}). Ground truth image generated with prompt $c_* = \texttt{"sunset over a sandy coast"}$.}
    \label{fig: in-range prompt-mismatch panel}
\end{figure}

\begin{figure}[htp]
    \centering
    \includegraphics[width=1.0\linewidth]{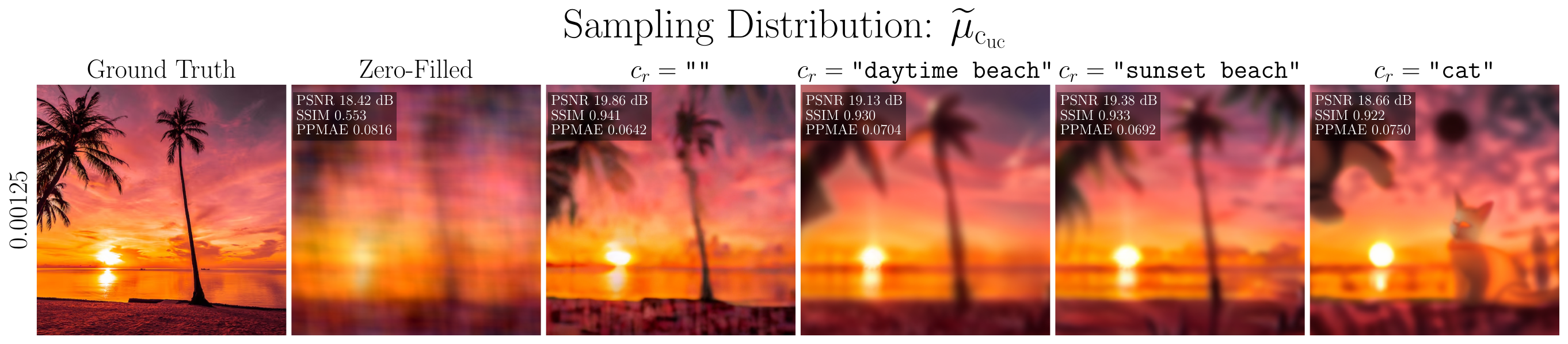}
    \includegraphics[width=1.0\linewidth]{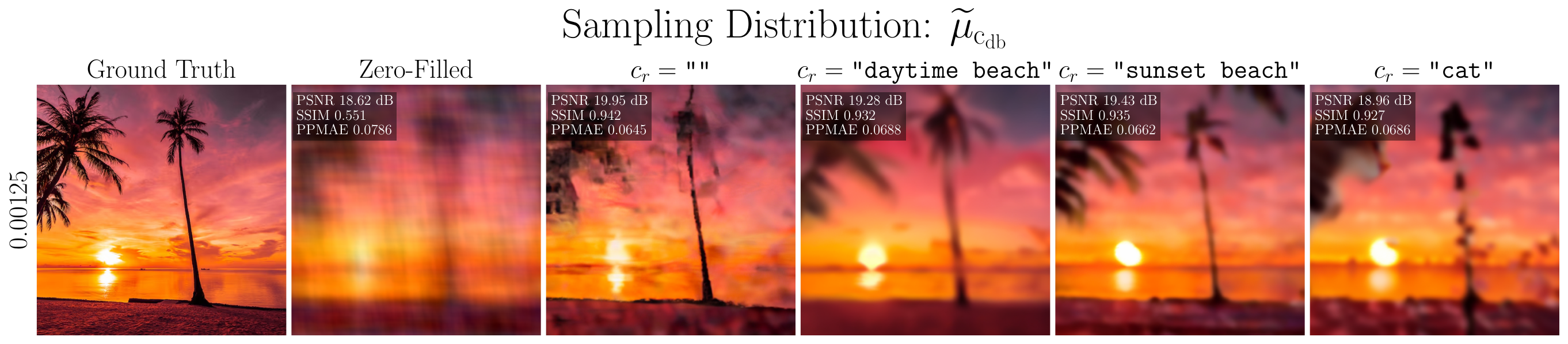}
    \includegraphics[width=1.0\linewidth]{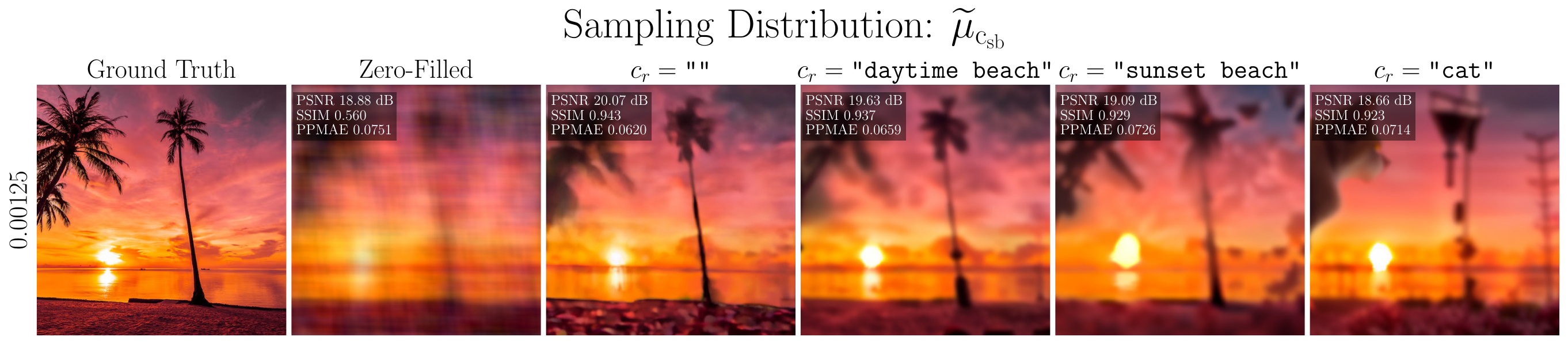}
    \includegraphics[width=1.0\linewidth]{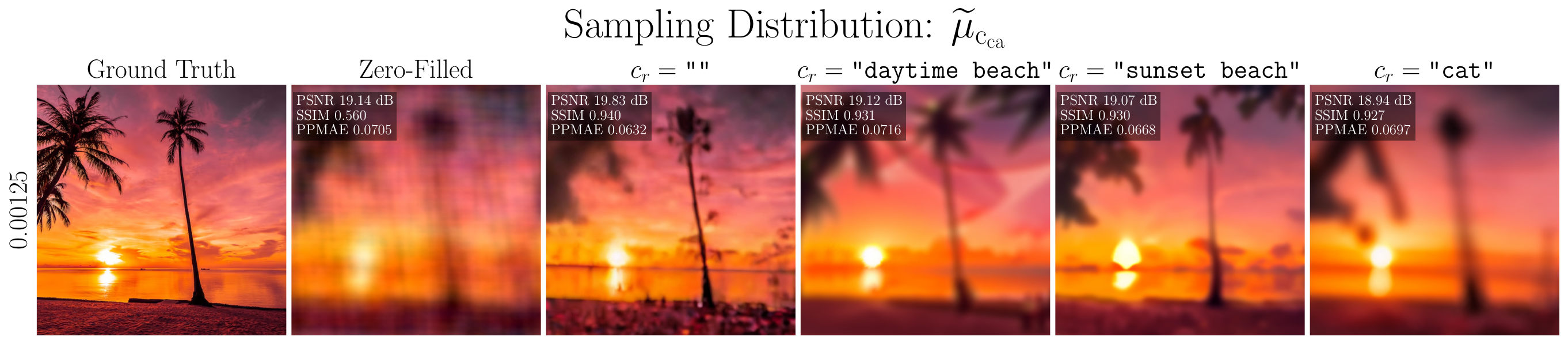}
    \caption{Best recovered images in the out-of-range experiment (Section~\ref{subsec: out of range}).}
    \label{fig: out of range panel}
\end{figure}

\begin{figure}
    \centering
    \includegraphics[width=0.95\linewidth]{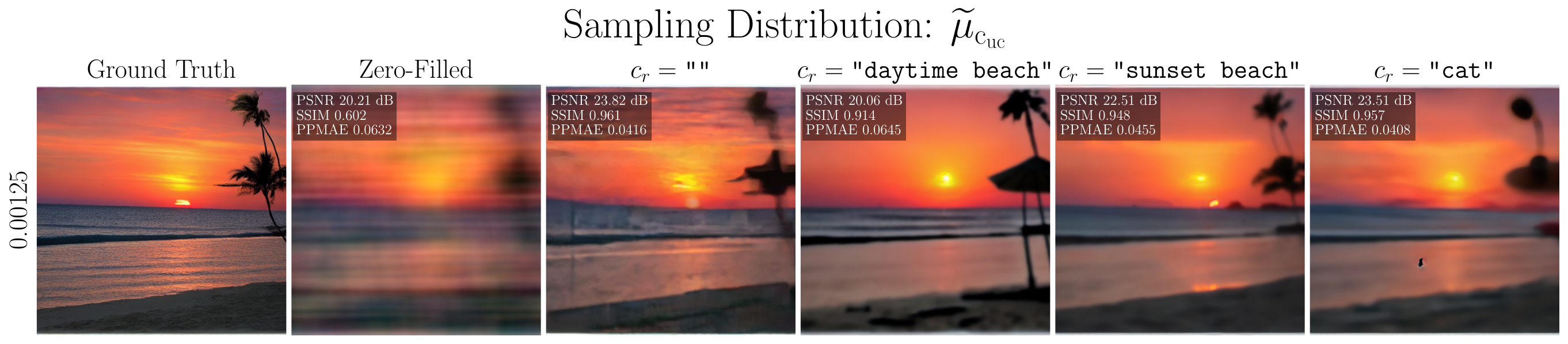}
    \includegraphics[width=0.95\linewidth]{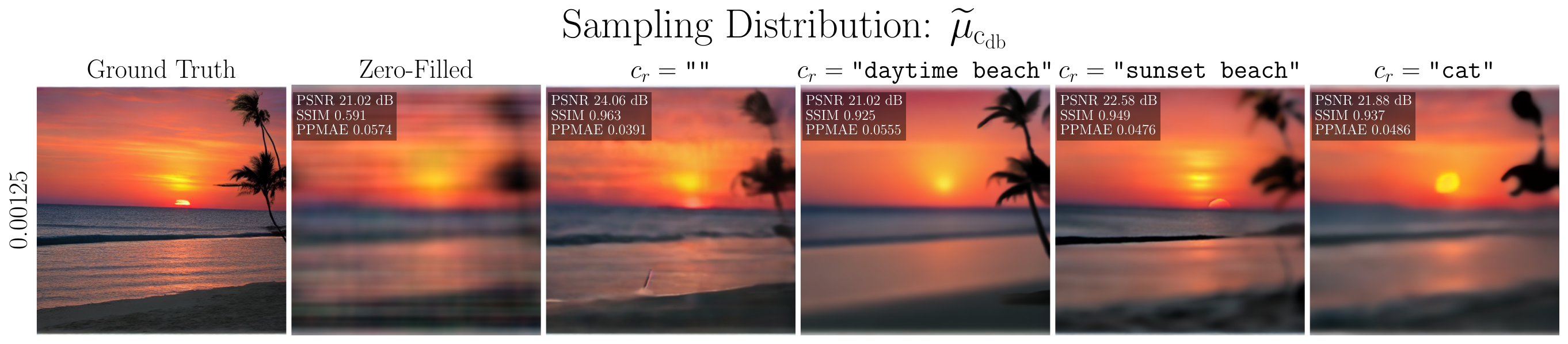}
    \includegraphics[width=0.93\linewidth]{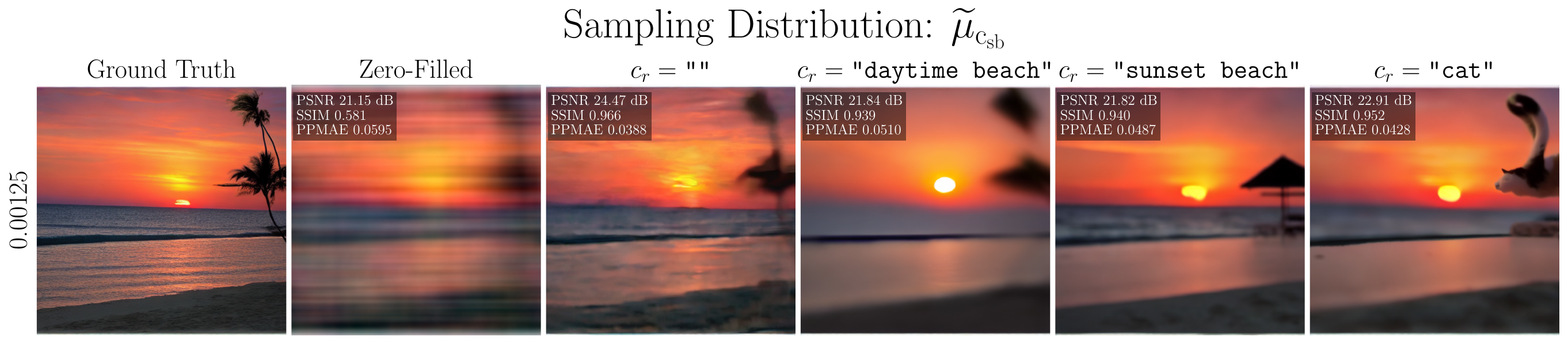}
    \includegraphics[width=0.93\linewidth]{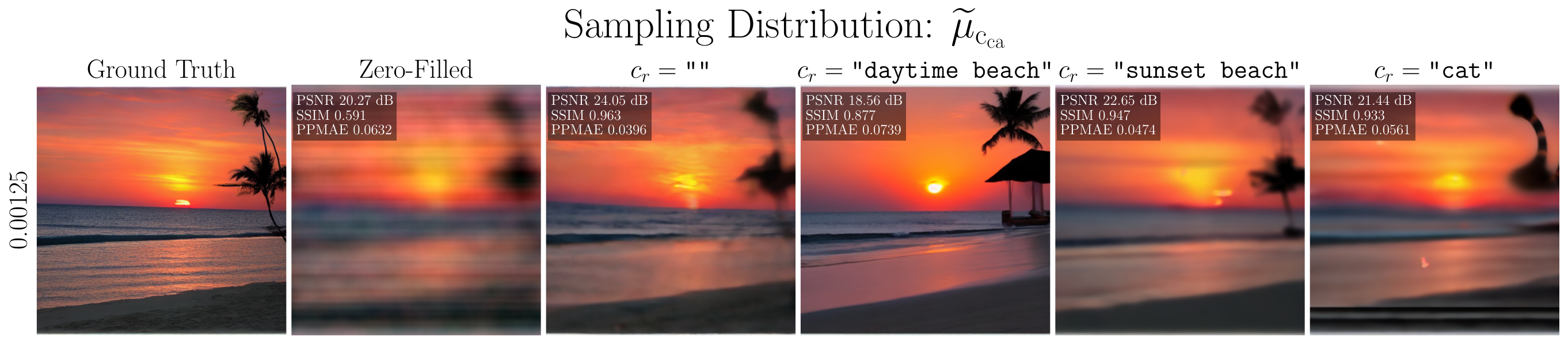}
    \caption{Best recovered images in the in-range prompt-matched experiment (Appendix~\ref{app: in range prompt match exp}). Ground truth image generated with prompt $c_* = \texttt{"sunset beach"}$.}
    \label{fig: in-range prompt-match panel}
\end{figure}


\end{document}